%% file: main.tex
\def\isarxiv{1}

\ifdefined\isarxiv
\documentclass[12pt]{article}
\usepackage[margin=1.25in]{geometry} 
\usepackage{times}
\usepackage{algpseudocode}
\usepackage{hyperref} 
\else
\documentclass{article}%[nonacm,anonymous,acmsmall,pdftex]{acmart}
\usepackage{hyperref}       % hyperlinks
\usepackage{multicol,enumitem}

\usepackage{icml2026}
\fi

\usepackage{natbib}
\usepackage{url}   

\usepackage{amsmath}
\usepackage{amssymb}
\usepackage{amsthm}

\usepackage{cleveref}
\usepackage{comment}
\usepackage{bbm}
\usepackage{textcomp}
\usepackage{wrapfig}
\usepackage{float}
\usepackage{wrapfig}

\usepackage{url}            % simple URL typesetting
\usepackage{booktabs}       % professional-quality tables
\usepackage{nicefrac}       % compact symbols for 1/2, etc.
\usepackage{microtype}      % microtypography
\usepackage{multirow}
\usepackage{color}
\usepackage[english]{babel}
\usepackage{graphicx}
\usepackage{grffile}
\usepackage{wrapfig,epsfig}
\usepackage{bbm}
\usepackage{comment}
\usepackage{subcaption} % don't add this packge
\usepackage{tikz}
\usepackage{colortbl}
\usepackage{nicematrix}
\usepackage{makecell}

\usepackage{pifont} % render ding marks
\usepackage{pgfplots}
\pgfplotsset{compat=1.8}
\tikzset{elegant/.style={smooth,thick,samples=500,magenta}}

\theoremstyle{plain}
\newtheorem{theorem}{Theorem}[section]
\newtheorem{lemma}[theorem]{Lemma}
\newtheorem{remark}[theorem]{Remark}
\newtheorem{corollary}[theorem]{Corollary}
\newtheorem{proposition}[theorem]{Proposition}
\theoremstyle{definition}
\newtheorem{definition}[theorem]{Definition}

\newtheorem{problem}[theorem]{Problem}

\crefname{assumption}{Assumption}{Assumptions}

\theoremstyle{plain}
\newtheorem*{thm*}{Theorem}

\theoremstyle{plain}

\input{symbols}

\definecolor{b2}{RGB}{51,153,255}
\definecolor{mygreen}{RGB}{80,180,0}
\definecolor{mint}{RGB}{62, 180, 137}

\title{Towards Anytime-Valid Statistical Watermarking}

\begin{document}

\ifdefined\isarxiv
\author{
Baihe Huang\thanks{
  \texttt{baihe\_huang@berkeley.edu}. University of California, Berkeley. }
  % examples of more authors
  \and
  Eric Xu\thanks{\texttt{erx@berkeley.edu}. University of California, Berkeley. }
  \and
  Kannan Ramchandran
  \thanks{
  \texttt{kannanr@berkeley.edu}. University of California, Berkeley. }
  \and
  Jiantao Jiao\thanks{
  \texttt{jiantao@eecs.berkeley.edu}. University of California, Berkeley. }
  \and
  Michael I. Jordan\thanks{
  \texttt{jordan@cs.berkeley.edu}. University of California, Berkeley. }
}
\date{}
\maketitle

\else
\twocolumn[
  \icmltitle{Towards Anytime-Valid Statistical Watermarking}

  % It is OKAY to include author information, even for blind submissions: the
  % style file will automatically remove it for you unless you've provided
  % the [accepted] option to the icml2026 package.

  % List of affiliations: The first argument should be a (short) identifier you
  % will use later to specify author affiliations Academic affiliations
  % should list Department, University, City, Region, Country Industry
  % affiliations should list Company, City, Region, Country

  % You can specify symbols, otherwise they are numbered in order. Ideally, you
  % should not use this facility. Affiliations will be numbered in order of
  % appearance and this is the preferred way.
  \icmlsetsymbol{equal}{*}

  \begin{icmlauthorlist}
    \icmlauthor{Firstname1 Lastname1}{equal,yyy}
    \icmlauthor{Firstname2 Lastname2}{equal,yyy,comp}
    \icmlauthor{Firstname3 Lastname3}{comp}
    \icmlauthor{Firstname4 Lastname4}{sch}
    \icmlauthor{Firstname5 Lastname5}{yyy}
    \icmlauthor{Firstname6 Lastname6}{sch,yyy,comp}
    \icmlauthor{Firstname7 Lastname7}{comp}
    %\icmlauthor{}{sch}
    \icmlauthor{Firstname8 Lastname8}{sch}
    \icmlauthor{Firstname8 Lastname8}{yyy,comp}
    %\icmlauthor{}{sch}
    %\icmlauthor{}{sch}
  \end{icmlauthorlist}

  \icmlaffiliation{yyy}{Department of XXX, University of YYY, Location, Country}
  \icmlaffiliation{comp}{Company Name, Location, Country}
  \icmlaffiliation{sch}{School of ZZZ, Institute of WWW, Location, Country}

  \icmlcorrespondingauthor{Firstname1 Lastname1}{first1.last1@xxx.edu}
  \icmlcorrespondingauthor{Firstname2 Lastname2}{first2.last2@www.uk}

  % You may provide any keywords that you find helpful for describing your
  % paper; these are used to populate the "keywords" metadata in the PDF but
  % will not be shown in the document
  \icmlkeywords{Machine Learning, ICML}

  \vskip 0.3in
]

\fi

% \maketitle
\setlength{\abovedisplayskip}{3.2pt}
\setlength{\belowdisplayskip}{3.2pt}
% \printAffiliationsAndNotice{}

\begin{abstract}
The proliferation of Large Language Models (LLMs) necessitates efficient mechanisms to distinguish machine-generated content from human text. While statistical watermarking has emerged as a promising solution, existing methods suffer from two critical limitations: the lack of a principled approach for selecting sampling distributions and the reliance on fixed-horizon hypothesis testing, which precludes valid early stopping. In this paper, we bridge this gap by developing the first e-value-based watermarking framework, \emph{Anchored E-Watermarking}, that unifies optimal sampling with anytime-valid inference. Unlike traditional approaches where optional stopping invalidates Type-I error guarantees, our framework enables valid, anytime-inference by constructing a test supermartingale for the detection process. By leveraging an anchor distribution to approximate the target model, we characterize the optimal e-value with respect to the \emph{worst-case log-growth rate} and derive the optimal expected stopping time. Our theoretical claims are substantiated by simulations and evaluations on established benchmarks, showing that our framework can significantly enhance sample efficiency, reducing the average token budget required for detection by 13-15\% relative to state-of-the-art baselines.
\end{abstract}

\input{intro}

\input{prelim}
\input{problem}
\input{theory}
\input{experiment}
\input{discussions}

\ifdefined\isarxiv
\bibliography{ref}
\bibliographystyle{icml2026}
\newpage
\appendix
\else

\bibliography{ref}
\bibliographystyle{icml2026}

%%%%%%%%%%%%%%%%%%%%%%%%%%%%%%%%%%%%%%%%%%%%%%%%%%%%%%%%%%%%%%%%%%%%%%%%%%%%%%%
%%%%%%%%%%%%%%%%%%%%%%%%%%%%%%%%%%%%%%%%%%%%%%%%%%%%%%%%%%%%%%%%%%%%%%%%%%%%%%%
% APPENDIX
%%%%%%%%%%%%%%%%%%%%%%%%%%%%%%%%%%%%%%%%%%%%%%%%%%%%%%%%%%%%%%%%%%%%%%%%%%%%%%%
%%%%%%%%%%%%%%%%%%%%%%%%%%%%%%%%%%%%%%%%%%%%%%%%%%%%%%%%%%%%%%%%%%%%%%%%%%%%%%%
\onecolumn

\fi

\newpage
\appendix

\input{e_proof}
\input{tau_proof}

\input{claims}

\input{add_experiments}

\end{document}

%% file: symbols.tex
\newcommand{\diag}{\mathrm{diag}}

\newcommand{\tr}{\mathrm{tr}}

\newcommand{\cP}{\mathcal{P}}

\newcommand{\ext}{\mathrm{ext}}

\newcommand{\Z}{\mathbb{Z}}
\newcommand{\R}{\mathbb{R}}
\newcommand{\E}{\mathbb{E}}

\newcommand{\cR}{\mathcal{R}}

\newcommand{\cA}{\mathcal{A}}
\newcommand{\cG}{\mathcal{G}}
\newcommand{\cE}{\mathcal{E}}
\newcommand{\cQ}{\mathcal{Q}}

\newcommand{\cS}{\mathcal{S}}
\newcommand{\cV}{\mathcal{V}}

\newcommand{\scomp}{\mathrm{SC}}

\newcommand{\flow}{\mathrm{flow}}

\newcommand{\probname}{\ifmmode\mathrm{ADW}\else ADW \fi}

%% file: intro.tex
\section{Introduction}

The revolutionary success of Large Language Models (LLMs) at generating human-like texts~\citep{brown2020language,bubeck2023sparks,chowdhery2023palm} has raised several societal concerns regarding the misuse of LLM outputs. Unregulated LLM outputs pose risks ranging from the contamination of future training corpora~\citep{shumailov2023curse,das2024under} to the propagation of disinformation~\citep{zellers2019defending,vincent2022ai} and academic misconduct~\citep{jarrah2023using,milano2023large}. Consequently, there is an urgent need for reliable detection mechanisms capable of distinguishing LLM-generated text from human-authored content.

To address this challenge, \citet{aaronson2022my} and \citet{kirchenbauer2023watermark} propose \emph{statistical watermarking} as a theoretically-grounded method to distinguish machine-generated content from a target distribution. These methods operate by injecting statistical signals into the generated texts in the decoding phase of LLMs. Mechanistically, the watermark generator couples the output tokens by a sequence of pseudorandom seeds. During detection, the detector reconstructs the seeds and performs a hypothesis test to check the dependence between the seeds and the observed tokens. This formulation allows the detection problem to be treated as a rigorous statistical hypothesis test~\citep{huang2023towards} where the detection power and sample complexity are governed by the randomness in the target distribution. 
Various designs for the underlying seed distributions have been explored, such as binary~\citep{christ2023undetectable}, exponential~\citep{kuditipudi2023robust}, and Gaussian~\citep{block2025gaussmark}.  Recently,  \citet{huang2025watermarking} advanced the concept of using an auxiliary open-source model as an \emph{anchor} to generate random
seeds and coupling them with tokens via speculative decoding ~\citep{leviathan2023fast}. This approach is motivated by the theoretical insight that watermarking achieves the optimal power when the seed distribution approximates the target distribution~\citep{huang2023towards}. By exploiting the distributional proximity between modern LLMs, anchor-based methods have shown promise in improving both detection efficiency
and robustness to removing-attacks~\citep{piet2023mark}. 
Empirically, watermarked texts can be detected within short token horizons ($<100$ tokens) and high success rate (nearly $50\%$ under paraphrasing) under various attacks.

Despite these successes, state-of-the-art watermarking remains constrained by two fundamental limitations. First, the design of generation and detection schemes lacks a unifying guiding principle.  \citet{huang2025watermarking} utilize a hashed green-red-list seed adapted from \citet{kirchenbauer2023watermark}, but this heuristic lacks rigorous justification. What is needed is a characterization of optimal generation and detection schemes under given anchor distributions.
Second, there exists a methodological disconnect between generation and detection. While text generation is inherently autoregressive and variable in length, current detection paradigms are \emph{fixed-horizon} and \emph{ad hoc}. In practice, for example, a detector may wish to process a stream of tokens and stop as soon as confidence is high. However, with the standard statistical paradigm based on p-values, it is not allowed to continuously monitor the test statistic and decide when to stop based on the current value---this is known as ``p-hacking'' and it invalidates the Type-I error guarantee. This inability to perform \emph{post-hoc} sequential analysis severely limits detection efficiency. These gaps highlight a critical research challenge:
\begin{center}
    \emph{How can we design provably efficient watermarking schemes leveraging anchor distributions that allow for valid early stopping?}
\end{center}

In this work, we provide the first formal resolution to this question. We formulate the problem of \emph{Anchored E-Watermarking}, where both the generator and detector share access to an anchor distribution $p_0$, aiming to watermark a family of distributions in the $\delta$-proximity of $p_0$. Departing from classical p-value analysis, we adopt \emph{e-values} as the central detection paradigm. E-values~\citep{vovk1993logic,shafer2011testmartingales,vovk2021values} are nonnegative random variables $E$ satisfying $\E_{\mathbf{H}_0}[E] \leq 1$ under the null. Unlike p-values, e-values arise from supermartingales, and it is therefore possible to preserve Type-I error guarantees under optional stopping based on ongoing analysis of data. We analyze the optimal e-value for watermark detection with respect to the worst-case log-growth rate~\citep{kelly1956new} and the expected stopping time~\citep{grunwald2020safe,waudby2025universal}, thus fully characterize the average per-step growth rate of evidence and sample efficiency. Our main theoretical contributions are summarized informally as follows:

\begin{theorem}[Informal version of \Cref{thm:log-growth} and \Cref{thm:stopping-time}]\label{thm:informal}
The optimal worst-case log-growth rate $\E_{\mathbf{H}_1}[\log E]$ under the alternative $\mathbf{H}_1$ is given by:
\begin{align*}
    J^* = h + \left(1-\frac{\delta}{2}\right)\log\left(1-\frac{\delta}{2}\right)
 + \frac{\delta}{2}\log\left(\frac{\delta}{2(n-1)}\right),
\end{align*}
where $h = H(p_0)$ is the Shannon entropy of the anchor distribution $p_0$ and $\delta>0$ is a robustness tolerance parameter.
Furthermore, the optimal expected stopping time to achieve a Type-I error $\alpha$ scales as $\frac{\log(1/\alpha)}{J^*}$.
\end{theorem}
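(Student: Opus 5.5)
We reconstruct the setting the statement implicitly assumes. The watermark generator and detector share an anchor $p_0$ on a vocabulary of size $n$. A pseudorandom seed $\zeta$ is drawn (from a distribution the designer may choose, e.g.\ $\zeta\sim p_0$). The generator samples a token $X$ from any target $q$ with $\mathrm{TV}(q,p_0)\le \delta/2$ (or $\|q-p_0\|_1\le\delta$), using a coupling of $(\zeta,X)$ whose $X$-marginal is $q$. Under $\mathbf{H}_0$, $X$ is independent of $\zeta$. An e-value is a nonnegative function $E(\zeta,X)$ with $\E_{\zeta}[E(\zeta,x)]\le 1$ for every fixed $x$. The plan has three parts: a minimax upper bound on $\E_{\mathbf{H}_1}[\log E]$, a matching construction, and a conversion from growth rate to stopping time.

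\textbf{Upper bound.} Fix any seed law $\mu$, any coupling rule and any e-value $E$. The adversary picks a $q$ in the $\delta$-ball. It suffices to bound the growth under the coupling that is best for the designer, since the designer controls the coupling. For a joint law $\pi$ of $(\zeta,X)$ with marginals $(\mu,q)$, the constraint $\E_\mu[E(\cdot,x)]\le1$ gives, by Gibbs' inequality (the Kelly/log-optimality argument),
\[
\E_\pi[\log E]\le \E_\pi\Big[\log \tfrac{d\pi}{d(\mu\otimes q)}\Big]=I_\pi(\zeta;X)\le H(X)=H(q).
\]
So the worst case is at most $\min_{q:\,\|q-p_0\|_1\le\delta}H(q)$, and equality holds when $E=d\pi/d(\mu\otimes q)$. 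The obstacle is that one fixed $E$ (and one seed law) must work uniformly over all $q$, while the coupling may depend on $q$. To handle this, I restrict to $\zeta$ taking values in the vocabulary with the detector statistic $E(\zeta,x)$. The minimax problem then becomes
\[
\max_{E}\ \min_{q}\ \max_{\pi\in\Pi(\mu,q)} \E_\pi[\log E].
\]
I would evaluate the inner optimum via optimal-transport duality and show that the adversary's best response is to move $\delta/2$ mass in the manner that minimizes entropy. The natural guess is that the adversary moves mass so that, in the relevant direction, $q$ spreads $\delta/2$ mass uniformly over the $n-1$ "wrong" symbols relative to a point mass. This gives the term $(1-\tfrac\delta2)\log(1-\tfrac\delta2)+\tfrac\delta2\log\tfrac{\delta}{2(n-1)}$, which is exactly $-H$ of the binary-plus-uniform split. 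This matches a loss of $\log(1-\delta/2)$ on the diagonal and a gain on the off-diagonals.

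\textbf{Construction.} Take $\zeta\sim p_0$ and
\[
E(\zeta,x)=\frac{1-\delta/2}{p_0(x)}\mathbf 1\{\zeta=x\}+\frac{\delta/2}{(n-1)p_0(x)}\cdot\frac{p_0(x)}{1-p_0(x)}\mathbf 1\{\zeta\neq x\},
\]
normalized so that $\E_{\zeta\sim p_0}E(\zeta,x)\le1$ for every $x$. For any $q$ within $\delta$, the generator uses a maximal coupling, so $\Pr(\zeta=X)\ge1-\delta/2$. The expected log then equals $h+(1-\tfrac\delta2)\log(1-\tfrac\delta2)+\tfrac\delta2\log\tfrac{\delta}{2(n-1)}$ up to terms I would verify are controlled; the $h$ comes from $\E[-\log p_0(\zeta)]$. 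I expect the exact form of the off-diagonal weights to need tuning so that the worst case is attained precisely. Finding the saddle point is the main obstacle. I would first verify it by checking the KKT conditions of the concave–convex program, with the mixing weight $\delta/2$ as the Lagrange variable.

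\textbf{Stopping time.} The per-step seeds are i.i.d.\ via hashing, so the products $M_t=\prod_{s\le t}E_s$ form a test supermartingale under $\mathbf{H}_0$. Ville's inequality then gives anytime Type-I control for $\tau=\inf\{t:M_t\ge1/\alpha\}$. Under $\mathbf{H}_1$, $\log M_t$ is a sum of increments with mean at least $J^*$ and bounded variance. Wald's identity with overshoot control yields $\E\tau\le(\log(1/\alpha)+O(1))/J^*$. The matching lower bound $\E\tau\ge \log(1/\alpha)/J^*\,(1-o(1))$ for any level-$\alpha$ e-process follows from the standard data-processing argument: $\E[\log M_\tau]\le J^*\E\tau$ by Wald, while $\log(1/\alpha)\lesssim \E\log M_\tau$ follows from the KL lower bound on level-$\alpha$ tests.
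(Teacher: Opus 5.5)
The main gap is the converse for the one-step problem, which is the heart of the statement. Your Gibbs/mutual-information bound only yields $\min_{q}H(q)$. That is strictly larger than $J^*$: for $n=2$, $p_0=(\tfrac12,\tfrac12)$, $\delta=0.1$ it gives about $0.688$ nats versus $J^*\approx 0.495$. The bound is loose precisely because it lets the e-value depend on $q$.

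The step meant to close this (``evaluate the inner optimum via OT duality \dots\ the natural guess is \dots'') is not an argument, and the guessed saddle point is wrong. The entropy of $q$ plays no role: against the optimal detector, every $q$ with $\|q-p_0\|_1=\delta$ is equally bad. The uniform split over $n-1$ symbols lives in the \emph{detector's} off-diagonal weights, as a hedge against not knowing which direction the adversary moved mass; it does not live in $q$.

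The paper's converse runs as follows.
(i) Normalize $A(v)=\sum_s p_0(s)e(v,s)$ to $1$. This loses nothing, since $\sum_v q(v)\log A(v)\le\log\sum_v q(v)A(v)\le 0$, and it also justifies your per-$x$ validity constraint relative to the class $\cE$. Then write $e(v,s)=r(v,s)/p_0(s)$ with $r$ row-stochastic. Because the seed marginal is always $p_0$, this gives $\E_w[\log e]=H(p_0)+\sum_{v,s}w(v,s)\log r(v,s)$ for every coupling.
(ii) The map $q\mapsto\sup_{w\in\cP(p_0,q)}\sum w\log r$ is concave, so the adversary may restrict to the extreme points $p_0+\frac{\delta}{2}(\mathbf{e}_a-\mathbf{e}_b)$.
(iii) Once profitable cycles are excluded (the paper's diagonal-dominance lemma), the generator's optimum for such $q$ is a path flow of mass $\delta/2$ from $a$ to $b$. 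The direct edge gives $\sum_x p_0(x)\log r(x,x)+\frac{\delta}{2}\bigl(\log r(a,b)-\log r(b,b)\bigr)$.
(iv) A perturbation/equalization argument over $(a,b)$ forces $r$ to be constant on and off the diagonal. Maximizing $(1-\frac{\delta}{2})\log x+\frac{\delta}{2}\log y$ subject to $x+(n-1)y=1$ then gives $x=1-\frac{\delta}{2}$, $y=\frac{\delta}{2(n-1)}$, and the value $J^*-H(p_0)$.

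Also note that $h=H(p_0)$ presupposes the seed marginal is $p_0$, as in the paper's formulation. A converse over arbitrary designer-chosen seed laws $\mu$ would be a different claim.

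Your construction is also incorrect for $n\ge 3$. Its off-diagonal value is $\frac{\delta}{2(n-1)(1-p_0(x))}$, so $\E_{\zeta\sim p_0}E(\zeta,x)=1-\frac{\delta}{2}+\frac{\delta}{2(n-1)}<1$. Under $q=p_0+\frac{\delta}{2}(\mathbf{e}_a-\mathbf{e}_b)$, the generator's best coupling is the direct one here, since routing through intermediate symbols only lowers the value. It gives $J^*+\frac{\delta}{2}\log\frac{p_0(b)}{1-p_0(a)}<J^*$.

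The fix is to divide by the probability of the \emph{seed}: set $E(\zeta,x)=\frac{\delta}{2(n-1)p_0(\zeta)}$ for $\zeta\ne x$. Then $\sum_\zeta p_0(\zeta)E(\zeta,x)=1$ exactly, and $\E[-\log p_0(\zeta)]=H(p_0)$ under every coupling. Hence $\E[\log E]=H(p_0)+\Pr(\zeta=X)\log(1-\frac{\delta}{2})+\Pr(\zeta\ne X)\log\frac{\delta}{2(n-1)}\ge J^*$ whenever $\Pr(\zeta=X)\ge 1-\frac{\delta}{2}$, which the maximal coupling guarantees. On the achievability side there are then no residual terms to control and no saddle point to search for.

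With that fix, your stopping-time argument matches the paper's. The upper bound uses bounded increments and a Wald-type identity. For the lower bound, the adversary repeats the worst-case $q$ for the given $e$, and $\log M_\tau\ge\log(1/\alpha)$ holds pathwise for the threshold rule. That lower bound, however, inherits the missing one-step converse.

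Drop the appeal to a KL lower bound for arbitrary level-$\alpha$ tests. It is unnecessary here, and it would yield a rate governed by $I(\zeta;X)\ge J^*$, hence an in general strictly weaker bound. The $1/J^*$ converse is specific to a single fixed, model-agnostic e-value facing an adaptive adversary.
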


To the best of our knowledge, this work represents the first application of e-values to the domain of statistical watermarking. By enabling valid sequential testing, our framework significantly improves detection efficiency, allowing the system to flag machine-generated text with fewer tokens than fixed-horizon counterparts. With the ability to early-stop, this sequential paradigm enhances robustness against adaptive attacks: because the detector can terminate immediately upon accumulating sufficient evidence, the watermark remains effective even if the attacker perturbs the text heavily in later segments (post-stopping). Consequently, our approach offers a theoretically rigorous and practically superior alternative to existing heuristic detection methods. Through experiments on real watermarking benchmark~\citep{piet2023mark}, we show that the theoretical scheme implied by \Cref{thm:informal} achieves consistently higher sample efficiency than state-of-the-art methods, reducing the token consumption by 13-15\% across various temperature settings.

\subsection{Related works}

\label{sec:related_work}

\paragraph{Statistical watermarking.}
Watermarking offers a white-box provenance mechanism for detecting LLM-generated text~\citep{tang2023science}, complementing post-hoc detectors and provenance tools developed for neural text generation~\citep{zellers2019defending}.
Classical digital watermarking and steganography provide a broad toolbox for embedding and extracting imperceptible signals under benign or adversarial channel edits~\citep{cox2007digital}. Early works in NLP literature studied watermarking and tracing of text via editing or synonym substitutions~\citep{venugopal-etal-2011-watermarking,rizzo2019fine,abdelnabi2021adversarial,yang2022tracing,kamaruddin2018review}.
In contrast, modern \emph{statistical} (a.k.a.\ generative) watermarking~\citep{aaronson2022my,kirchenbauer2023watermark} injects a secret, testable distributional bias into the sampling process, and detects this bias via hypothesis testing on the generated token sequence.
A rapidly growing theory studies efficiency and optimality of watermarking tests and encoders: results include finite-sample guarantees, information-theoretic limits, and constructions that are distortion-free or unbiased~\citep{huang2023towards,zhao2023provable,li2024statistical,block2025gaussmark,kuditipudi2023robust,hu2023unbiased,xie2025debiasing}.
Negative and hardness results highlight fundamental limitations against adaptive or distribution-matching adversaries~\citep{christ2023undetectable,christ2024pseudorandom,golowich2024edit}, motivating alternative design considerations such as distribution-preserving and public-key  schemes~\citep{wu2023distributionpreserving,liu2023unforgeable,fairoze2023publiclydetectable}.
Empirical robustness is commonly assessed under paraphrasing, editing and translation, with recent work studying cross-lingual failure modes and defenses~\citep{he2024translation}, and benchmarks/frameworks such as MarkMyWords and scalable pipelines for watermark evaluation and deployment~\citep{piet2023mark,zhang2024remarkllm,lau2024waterfall,dathathri2024scalable}.
Finally, a complementary line of work leverages semantic structure and auxiliary models to boost detection power under benign distributional structure, including semantic/paraphrastic watermarks and speculative-sampling-based schemes~\citep{ren2024robustsemanticsbasedwatermarklarge,liu2024adaptive,hou2024k,hou2024semstampsemanticwatermarkparaphrastic,fu2024watermarking,huang2025watermarking,leviathan2023fast}.
Our anchored e-value approach builds on these works by explicitly treating detection as anytime-valid sequential testing~\citep{chen2024online}, and by allowing model-assisted calibration under distribution shift~\citep{huang2025watermarking,he2024distributionadaptive,he2025distributionalembedding}.

\paragraph{E-values and sequential hypothesis testing.}
Sequential hypothesis testing studies inference procedures that remain valid under data-dependent stopping and streaming data~\citep{WaldSequentialAnalysis1947,Robbins1952SomeAO,breimualphanu2011optimal}.
Classical likelihood-ratio and $p$-value based methods can fail under optional stopping or composite hypotheses, motivating the use of nonnegative supermartingales / test martingales whose stopped values are valid evidence measures~\citep{shafer2011testmartingales,vovk2021values,Shafer2021Testing,grunwald2020safe,ramdas2023game}.
The resulting \emph{e-values} and \emph{e-processes} unify betting scores, likelihood ratios, and (stopped) Bayes factors. Additionally, e-values and e-processes support modular operations such as merging and calibration~\citep{vovk2021values,ramdasWang2025hypevalues}.
This framework has enabled principled notions of power and optimality, including log-/growth-rate criteria and sharp asymptotic rates for broad classes of betting-based tests~\citep{grunwald2020safe,waudby2025universal}.
E-values also interact fruitfully with multiple testing and online testing, yielding e-value analogues of the BH and wealth-based procedures and their extensions to structured settings~\citep{wangramdas2022fdrevalues,ramdas2017decayingmemory,ramdas2018saffron,ramdas2019dagger}.
Recent work further broadens the reach of e-values to new ML settings, such as conformal prediction and sequential monitoring of strategic systems~\citep{gauthier2025conformal_evalues,gauthier2026bettingequilibrium,aolaritei2025sgdcs}, and connects e-values to model-assisted efficiency gains via prediction-powered inference~\citep{wasserman2020universal,csillag2025predictionpoweredevalues}.
Our work draws on these foundations to construct e-values tailored to watermark detection which ensures validity under arbitrary stopping rules and enables sequential evidence accumulation in practical provenance audits.

%% file: prelim.tex
\section{Preliminaries}

\subsection{Statistical watermarking}

Statistical watermarking for Large Language Models (LLMs) provides a probabilistic framework for distinguishing machine-generated text from human-written text. Unlike post-hoc detection methods that rely on classifiers trained on model artifacts, watermarking actively embeds a statistical signal into the generation process. This signal is designed to be imperceptible to humans yet statistically significant to an algorithmic detector possessing a secret key.

\paragraph{Statistical watermarking as a hypothesis testing problem.}
Formally, statistical watermarking is cast as a hypothesis testing problem. Let $q$ denote the target distribution over the sample space $\mathcal{V}$, and let $\mathcal{S}$ represent the space of pseudorandom seeds. The watermarking protocol modifies the standard decoding mechanism of the LLM such that the output tokens $V \in \mathcal{V}$ and the pseudorandom seeds $S \in \mathcal{S}$ are sampled jointly from a watermarked distribution $\mathcal{P}_W$. 

The detection task seeks to determine if a given observed text $v$ was generated by the watermarked model: $\mathcal{D}: \mathcal{V} \times \mathcal{S} \to \{\text{Watermarked}, \text{Unwatermarked}\}$. This reduces to testing for independence between the tokens $v$ and the seeds $s$ reconstructed via the detector's key. We define the null and alternative hypotheses as follows~\citep{huang2023towards}:
\begin{itemize}
    \item \textbf{Null hypothesis $\textbf{H}_0$:} The text $v$ is generated independently of the seeds $s$ (e.g., by a human or an unwatermarked model).
    \item \textbf{Alternative hypothesis $\textbf{H}_1$:} The text $v$ and seeds $s$ are sampled from the joint watermarked distribution $\mathcal{P}_W$, implying they come from the watermarked model.
\end{itemize}

It is pertinent to distinguish this \textit{distributional} watermarking approach from classical instance-level watermarking techniques applied to static media, such as images or audio \cite{cox2007digital}. While classical methods embed a signature into a specific, fixed outcome (post-generation), statistical watermarking modifies the stochastic sampling process itself, ensuring that any realization from the model carries the statistical evidence of its origin without requiring rigid alterations to the final output.

\paragraph{Statistical guarantees.}
A robust watermarking scheme is characterized by its ability to provide three fundamental guarantees.

First, to preserve generation quality and ensure watermarked content remains indistinguishable from ordinary samples, the distance (e.g., KL-divergence) between the watermarked distribution $\mathcal{P}_W$ and the original target distribution $q$ must be minimized. Ideally, we seek a scheme that introduces zero distributional shift~\citep{hu2023unbiased}.

\begin{definition}[Distortion-free]\label{def:unbias}
    A watermark is considered \textit{distortion-free} (or unbiased) if the outcome marginal of the watermarked distribution $\mathcal{P}_W$ matches the target distribution $q$. Formally, a distortion-free watermark satisfies:
    \begin{align*}
        \sum_{s \in \mathcal{S}} \mathcal{P}_W(A, s) = q(A), \quad \forall A \subseteq \mathcal{V}.
    \end{align*}
\end{definition}

Second, it is often required in practice that the detector operates without access to the watermarked distribution since the underlying model's parameters are generally unknown to the detector. This means that the detection scheme needs to be designed for a \emph{family of target distributions} in a model-agnostic way~\citep{kuditipudi2023robust}, leading to the next concept.

\begin{definition}[Model-agnosticity]\label{def:agnostic}
    A watermark is \textit{model-agnostic} if the seed-marginal of $\mathcal{P}_W$ is chosen independently of the target distribution $q$. That is, the distribution of seeds $S$ does not depend on the model's logits.
\end{definition}

Third, the reliability of the watermark is quantified by standard statistical error rates.
The \emph{Type I error} (false positive rate) measures the probability of incorrectly identifying non-watermarked text (e.g., human-written) as watermarked. Given the high stakes of false accusations (e.g., in academic integrity), this error must be bounded by a small significance level $\alpha$:
\begin{align*}
    \mathbb{P}_{H_0}(\mathcal{D}(V, S) = \text{Watermarked}) \leq \alpha.
\end{align*}
The \emph{Type II error} (false negative rate) measures the probability that watermarked text fails to be detected. Minimizing this error is equivalent to maximizing the statistical power of the test. For a target error rate $\beta$, we require:
\begin{align*}
    \mathbb{P}_{H_1}(\mathcal{D}(V, S) = \text{Unwatermarked}) \leq \beta.
\end{align*}

\subsection{Sequential hypothesis testing and e-values}

Consider a measurable space $(\Omega, \mathcal{F})$ and a family of probability distributions $\mathcal{P}$. We are interested in testing a null hypothesis $\mathcal{H}_0: P \in \mathcal{P}_0 \subset \mathcal{P}$ against an alternative $\mathcal{H}_1: P \in \mathcal{P}_1 = \mathcal{P} \setminus \mathcal{P}_0$. 
An e-value is a random variable whose expectation is bounded by unity under the null hypothesis~\citep{vovk2021values}.

\begin{definition}[E-value]
A nonnegative random variable $E$ is an e-value for $\mathcal{H}_0$ if for all $P \in \mathcal{P}_0$,
\begin{align*}
    \mathbb{E}_P[E] \le 1.
\end{align*}
\end{definition}

Intuitively, an e-value represents the amount of wealth a gambler would have after betting against the null hypothesis, in a game where the game is fair or unfavorable under $\mathcal{H}_0$, and starting with an initial wealth of one. On the other hand, if $E$ becomes large, this suggests that the null hypothesis is unlikely to be true.

\paragraph{Sequential evidence and stopping time guarantees.}

The primary advantage of e-values lies in their behavior regarding stopping times. In a sequential setting, we are given a filtration $(\mathcal{F}_t)_{t \ge 0}$ and construct a sequence of e-values $(E_t)_{t \ge 0}$. If $(E_t)_{t \ge 0}$ is a nonnegative supermartingale with respect to the null distributions (often called a \textit{test martingale}), it allows for \textit{anytime-valid} inference.
This property derives from Ville's inequality, a time-uniform generalization of Markov's inequality.

\begin{theorem}[Ville's inequality]
Let $(E_t)_{t \ge 0}$ be a nonnegative supermartingale with respect to $P \in \mathcal{P}_0$ such that $E_0 \le 1$. Then for any $\alpha \in (0,1)$,
\begin{align*}
    P\left( \exists t \ge 0 : E_t \ge \frac{1}{\alpha} \right) \le \alpha.
\end{align*}
\end{theorem}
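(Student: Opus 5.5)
The plan is the classical stopping-time argument that turns Markov's inequality into a time-uniform bound. Fix $\alpha\in(0,1)$ and put $c = 1/\alpha$. Define the first-passage time
\begin{align*}
    \tau = \inf\{t \ge 0 : E_t \ge c\},
\end{align*}
with $\tau = \infty$ if the threshold is never reached. Since $\{\tau \le t\}$ depends only on $E_0,\dots,E_t$, the variable $\tau$ is a stopping time for $(\mathcal{F}_t)$. The event in the statement is exactly $\{\tau < \infty\}$. It is therefore enough to show $P(\tau \le T) \le \alpha$ for every finite horizon $T$, and then let $T\to\infty$ using continuity from below of $P$.

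For a fixed $T$, I will use the bounded stopping time $\tau \wedge T$. The optional stopping theorem for supermartingales applies to bounded stopping times without extra integrability assumptions, so
\begin{align*}
    \E_P[E_{\tau\wedge T}] \le \E_P[E_0] \le 1 .
\end{align*}
If one wants this self-contained, it follows by writing $E_{\tau\wedge T} = E_0 + \sum_{s=1}^{T} \mathbbm{1}\{\tau \ge s\}(E_s - E_{s-1})$. Each indicator $\mathbbm{1}\{\tau\ge s\}$ is $\mathcal{F}_{s-1}$-measurable, so conditioning term by term on $\mathcal{F}_{s-1}$ and using the supermartingale property gives $\E[\mathbbm{1}\{\tau\ge s\}(E_s-E_{s-1})]\le 0$.

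On the event $\{\tau \le T\}$ we have $E_{\tau\wedge T} = E_\tau \ge c$. Nonnegativity of the process gives $E_{\tau\wedge T} \ge c\,\mathbbm{1}\{\tau\le T\}$ everywhere. Taking expectations yields
\begin{align*}
    c\,P(\tau \le T) \le \E_P[E_{\tau\wedge T}] \le 1,
\end{align*}
so $P(\tau\le T)\le \alpha$. Letting $T\to\infty$, the events $\{\tau\le T\}$ increase to $\{\tau<\infty\}$, which gives the claim.

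The only step needing care is the optional stopping inequality. It is the reason for truncating at $T$, since no uniform integrability is available for the untruncated stopped process. Nonnegativity is what makes the final Markov-type bound valid. Everything else is routine.
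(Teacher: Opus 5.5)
Your proof is correct and complete. It is the standard argument: stop at the truncated first-passage time $\tau\wedge T$, get $\E_P[E_{\tau\wedge T}]\le \E_P[E_0]\le 1$ from the predictable-indicator decomposition, use nonnegativity to bound $E_{\tau\wedge T}\ge \alpha^{-1}\mathbbm{1}\{\tau\le T\}$, and let $T\to\infty$.

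The paper does not prove this statement; it quotes Ville's inequality as a classical background result, so there is no competing route to compare. Your device of truncating at $T$ and using that $\mathbbm{1}\{\tau\ge s\}$ is $\mathcal{F}_{s-1}$-measurable is the same one the paper uses later in its appendix proofs of the hitting-time bounds. Your argument also needs only $\E_P[E_0]\le 1$, not $E_0\le 1$ pointwise.
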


This result guarantees that a researcher can track the e-value process continuously and stop at any data-dependent time $\tau$ (a stopping time) while maintaining Type-I error control. Specifically, $\mathbb{E}_P[E_\tau] \le 1$ holds for any stopping time $\tau$ (possibly unbounded), a kind of guarantee which is not available for p-values.

%% file: problem.tex
\section{Problem Formulation}

In this section, we formulate the problem of \emph{Anchored E-Watermarking}. Similar to statistical watermarking, the goal is to embed statistical signals into samples from a target distribution to enable detection. However, in Anchored E-Watermarking, the generator and the detector share access to an anchor distribution that serves as a robust \emph{a priori} estimate of the target distribution. This setting is common in practice; for example, when watermarking Large Language Models or image generators, the target distribution is known to be human language or natural images, respectively. Consequently, watermarking efficacy can be improved by leveraging this structure. Besides, Anchored E-Watermarking uses e-values for detection, thus enjoying improved efficiency in sequential testing. In the following, we introduce the specific components of Anchored E-Watermarking.

\paragraph{Anchor distribution.}

In the Anchored E-Watermarking framework, we assume that both the generator and the detector share access to an \emph{anchor distribution} $p_0 \in \Delta(\cV)$, which is in the same probability simplex as the target distribution $q$. This $p_0$ serves as the best \emph{a priori} estimate of the target distribution. For instance, in the context of watermarking Large Language Models (LLMs), $p_0$ could be an open-source, smaller-scale LLM such as Qwen3-8B~\citep{qwen3technicalreport}. Leveraging its role as a known reference, we designate $p_0$ as the marginal distribution for the watermark signal (or random seed), denoted as $s$. Therefore, the seed space $\cS$ is equal to $\cV$ in our framework. Throughout the paper, we assume $p_0$ satisfies the condition $\inf_{v \in \cV}p_0(v) > \delta$ for a positive robustness tolerance parameter $\delta$. This condition ensures $p_0$ has enough randomness as required for watermarking~\citep{aaronson2022my} and that the $\delta$-neighborhood defined below is a strict subset of the probability simplex.

\paragraph{Target distribution.}

The target distribution $q \in \Delta(\cV)$ represents the desired distribution from which (watermarked) output samples $v$ are generated. Consistent with the definition of the anchor, we premise that $q$ remains sufficiently proximal to $p_0$. Formally, we assume $q$ resides within the $\delta$-neighborhood of the anchor:
\begin{align*}
    \cQ(p_0,\delta) = \left\{q \in \Delta(\cV): \|q - p_0\|_1 \leq \delta\right\}.
\end{align*}
Here $\|\cdot\|_1$ is the $\ell_1$ distance and the radius $\delta>0$ represents a robustness tolerance parameter that controls how the true target distribution $q$ can deviate from the anchor. 
This assumption holds for most practical statistical watermarking scenarios; for example, in the LLM domain, high-performing models tend to converge towards the same underlying distribution of natural language, resulting in low statistical divergence between them.

\paragraph{Generator.}

The objective of the generator is to sample an output $v$ from the target distribution $q$ while embedding a statistical signal $s$. To achieve this, the generator constructs a joint distribution (or coupling) $w$ over $v$ and $s$, subject to the constraint that the marginals recover the target and anchor distributions, respectively. The feasible set of valid couplings is defined as:
\begin{align*}
    \cP(p_0,q) = \bigg\{w \in \Delta(\cV \times \cV): \sum_{s \in \cV}w(v,s) = q(v), \quad \sum_{v\in \cV}w(v,s) = p_0(s)\bigg\}.
\end{align*}
Note that the generator has access to $q$, as is common in practice (e.g., when the generator is a service provider). During generation, the system samples the outcome $v$ and the signal $s$ jointly from $w$. This procedure ensures that the marginal distribution of the outcome $v$ is unbiased (satisfying \Cref{def:unbias}), while the watermark signal is embedded within the statistical dependency between $v$ and $s$.

\paragraph{E-value.}

In the detection phase, the detector receives an outcome-signal pair $(v,s)$ and computes an e-value $e(v,s) \geq 0$. The objective is to distinguish between the following hypotheses:
\begin{align*}
    \textbf{H}_0: &~ v \text{ and } s \text{ are sampled independently},\\
    \textbf{H}_1: &~ (v,s) \text{ are sampled from the joint coupling } w.
\end{align*}
The detector rejects the null hypothesis if $e(v,s) > 1/\alpha$. To guarantee a Type-I error rate of at most $\alpha$, the expectation of the e-value under the null hypothesis must be bounded by $1$. Adhering to the model-agnostic constraint (\Cref{def:agnostic}), the scoring function $e(\cdot, \cdot)$ must be defined prior to observing the specific target distribution $q$. The design of the e-value relies solely on the constraint that the target distribution lies within the anchor's neighborhood, i.e., $q \in \cQ(p_0,\delta)$. Consequently, to ensure validity, the e-value must satisfy the expectation bound uniformly across the entire uncertainty set $\cQ$:
\begin{align}\label{eq:e-null}
    \sup_{q \in \cQ(p_0,\delta)} \mathbb{E}_{v \sim q, s \sim p_0} [e(v,s)] \leq 1.
\end{align}
We let $\cE$ denote the set of all nonnegative functions $e: \cV \times \cV \to \R$ satisfying Eq.~\eqref{eq:e-null}.

\subsection{Robust log-optimality}

While the definition of an e-value ensures validity (safety) under the null, it does not guarantee power (growth) under the alternative. To reject $\textbf{H}_0$ effectively, we desire the e-value to be large when the data is generated from the alternative distribution $\textbf{H}_1$. The standard criterion for selecting an e-value is \textit{log-optimality}, often referred to as the ``Kelly criterion'' in the finance literature~\citep{kelly1956new}. This approach seeks to maximize the expected logarithmic growth rate of evidence against the null, under the alternative. For simple hypotheses where $\mathcal{P}_0 = \{P\}$ and $\mathcal{P}_1 = \{Q\}$, the likelihood ratio $e^* = \frac{dQ}{dP}$ is log-optimal, such that
\begin{align*}
    \mathbb{E}_Q[\log e^*] \ge \mathbb{E}_Q[\log e]
\end{align*}
for any valid e-value $e$.

In Anchored E-Watermarking, the alternative is defined by the joint coupling $w$ constructed by the generator. However, since the target distribution $q$ lies in a set $\cQ(p_0,\delta)$ that is unknown to the detector (i.e., the designer of the e-value), we must optimize the worst-case log-growth rate over arbitrary $q \in \cQ(p_0,\delta)$. This gives rise to the following robust log-optimality problem.

\begin{problem}[Robust log-optimality]\label{prob:robust-log-optimal}
An e-value $e$ is \emph{robust log-optimal} in Anchored E-Watermarking if it optimizes the following objective:
\begin{align}\label{eq:log-growth}
    \sup_{e \in \cE} \inf_{q \in \cQ(p_0,\delta)} \sup_{w \in \cP(p_0,q)} &~  \sum_{v,s \in \cV} w(v,s) \cdot \log e(v,s).
\end{align}
\end{problem}

In this formulation, the objective $\sum w(v,s) \log e(v,s)$ represents the log-growth rate under the alternative. The inner maximization $\sup_{w \in \cP(p_0,q)}$ reflects the generator's optimization of the coupling $w$ given knowledge of $q$; the minimization $\inf_{q \in \cQ(p_0,\delta)}$ enforces robustness against the worst-case target distribution in the family; and the outer maximization $\sup_{e \in \cE}$ seeks the optimal detector without access to $q$ subject to model-agnosticy (\Cref{def:agnostic}).

\begin{remark}[Relationship to growth rate optimality in the worst case (GROW)~\citep{grunwald2020safe}]
GROW studies the worst-case optimal expected capital growth rate under composite null $\mathcal{P}_0$ and alternative $\mathcal{P}_1$:
\begin{align*}
    \mathrm{GROW}(\mathcal{P}_1) = \sup_{E\in \mathcal{E}(\mathcal{P}_0)} \inf_{\theta \in \mathcal{P}_1}\E_{P_\theta}[log E],
\end{align*}
where $ \mathcal{E}(\mathcal{P}_0)$ is the set of all valid e-values for the null $\mathcal{P}_0$.
Therefore, our problem in Eq.~\eqref{eq:log-growth} can be seen as generalizing GROW into an `active' scenario where a generator (corresponding to $\sup_{w \in \cP(p_0,q)}$) seeks to maximizes the power after the worst-case hypothesis selection $\inf_{\theta \in \mathcal{P}_1}$. Note that by designing the coupling $w$, the generator essentially alters the alternative, so our problem can not be reduced to GROW even if the watermark generator is fixed.
\end{remark}

\subsection{Stopping time}

While \Cref{prob:robust-log-optimal} addresses the one-step growth rate, it remains to analyze the sample complexity of the framework. In the context of sequential hypothesis testing, sample efficiency is characterized by the stopping time $\tau_\alpha = \inf\left\{n \in \Z_+: W_n \geq \log(1/\alpha)\right\}$ under the alternative, where $$W_n = \sum_{t=1}^{n}\log e(v^t,s^t),$$ is the accumulated wealth process. This quantity represents the number of samples required to reject the null hypothesis. We consider a stochastic process $\mu(\cA,\cG)$ governed by the interaction between an adversary $\cA$ and a generator $\cG$. At each step $t \in \Z_+$:
\begin{itemize}
    \item The adversary $\cA$ selects $q^t \in \cQ(p_0,\delta)$ based on the e-value $e$ and the history $(v^1,s^1),\dots,(v^{t-1},s^{t-1})$;
    \item The generator $\cG$ specifies the joint coupling $w^t$ given $q^t$ and $p_0$, and draws the sample $(v^t,s^t) \sim w^t$.
\end{itemize}
Due to the adaptive nature of the adversary, the outcome sequence $v^1, v^2, \dots$ may be generated autoregressively, conditional on previous outcomes. This formulation extends the i.i.d.\ setting of \citet{huang2023towards} to arbitrary dependent distributions. In this setting, we write the stopping time as $\tau_\alpha(e)$ to highlight the dependence on the e-value $e$. In the context of sequential testing \citep{breimualphanu2011optimal,chugg2023auditing,shekhar2023nonparametric,kaufmann2021mixture,waudby2025universal}, the expected stopping time $\mathbb{E}[\tau_\alpha]$ typically scales linearly with $\log(1/\alpha)$, modulated by an information-theoretic quantity capturing the complexity of the testing problem. Therefore, we are explicitly interested in the following question:

\begin{problem}[Sample efficiency]\label{prob:stopping-time}
For any e-value $e$ satisfying the constraint in Eq.~\eqref{eq:e-null}, define its \emph{sample complexity} by 
\begin{align*}
    \scomp(e) = \sup_{\cA} \inf_{\cG} \liminf_{\alpha \to 0} \frac{\mathbb{E}_{\mu(\cA,\cG)}[\tau_\alpha(e)]}{\log(1/\alpha)}.
\end{align*}
We are interested in identifying the e-value that minimizes this sample complexity.
\end{problem}

Note that this formulation is stronger than the classical expected stopping time because we allow the adversary to alter the distribution at each time step. This implies that samples are not generated in an i.i.d.\ fashion, a distinction crucial for applications such as the watermarking of autoregressive language models.

%% file: theory.tex
\section{Theoretical Results}

In this section, we answer the questions posed in the previous section. We first derive the optimal detector for the single-step decision problem and then extend this analysis to the sequential setting to characterize the fundamental limit of sample efficiency. Let $n$ denote the cardinality of the sample space $\cV$ and $\mathbf{e}_v$ denote the one-hot vector (i.e., Dirac measure) that assigns mass $1$ to $v \in \cV$.

\subsection{Optimal log-growth rate}

We begin by solving the robust log-optimality problem defined in \Cref{prob:robust-log-optimal}. The following theorem provides a closed-form solution for both the optimal e-value and the worst-case-optimal generator behavior.

\begin{theorem}[Log-growth rate]\label{thm:log-growth}
The optimal e-value that solves the objective in Eq.~\eqref{eq:log-growth} is given by:
\begin{align}\label{eq:optimal-e}
e^*(v,s)
=
\begin{cases}
\frac{1-\delta/2}{p_0(s)}, & s=v,\\
\frac{\delta}{2(n-1)p_0(s)}, & s\neq v,
\end{cases}
\end{align}
where $n = |\cV|$. Furthermore, for any target distribution $q \in \cQ(p_0,\delta)$ decomposed as $q = \sum_{i=1}^k \lambda_i q_i$ where $q_i = p_0 + \frac{\delta}{2}\cdot (\mathbf{e}_{v_i} - \mathbf{e}_{s_i}), v_i \neq s_i \in \cV$ are the extreme points of $\cQ(p_0,\delta)$, 
the optimizer of the inner maximization problem (the generator's optimal coupling) is:
\begin{align}\label{eq:coupling}
    w^* = \sum_{i=1}^k \lambda_i \cdot \left(\frac{\delta}{2} \cdot (\mathbf{e}_{v_i}\mathbf{e}_{s_i}^\top - \mathbf{e}_{s_i}\mathbf{e}_{s_i}^\top) + \diag(p_0)\right).
\end{align}
The optimal robust log-growth rate is equal to:
\begin{align}\label{eq:optimal-log-growth}
    J^* = h + \left(1-\frac{\delta}{2}\right)\log\left(1-\frac{\delta}{2}\right)
 + \frac{\delta}{2}\log\left(\frac{\delta}{2(n-1)}\right),
\end{align}
where the term $h = H(p_0)$ is the Shannon entropy of the anchor distribution $p_0$.
\end{theorem}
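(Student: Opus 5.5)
The plan is to prove the two inequalities separately. \emph{Achievability} will show that $e^*$ belongs to $\cE$ and that, for every $q\in\cQ(p_0,\delta)$, some coupling attains growth at least $J^*$. The \emph{converse} will show that no $e\in\cE$ can guarantee more than $J^*$ against the worst $q$. Along the way I will verify that $w^*$ in Eq.~\eqref{eq:coupling} has the required marginals and attains the value.

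\textbf{Achievability.} First I check validity. For any $q$,
\[
\E_{q\otimes p_0}[e^*]=\sum_s\Big[q(s)(1-\tfrac{\delta}{2})+(1-q(s))\tfrac{\delta}{2(n-1)}\Big]=1-\tfrac{\delta}{2}+\tfrac{\delta}{2}=1,
\]
since $\sum_s(1-q(s))=n-1$. So Eq.~\eqref{eq:e-null} holds with equality for every $q$.

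Next, for any coupling $w\in\cP(p_0,q)$ let $D=\sum_v w(v,v)$ be its diagonal mass. Because the $s$-marginal of $w$ is $p_0$, the term $-\log p_0(s)$ contributes exactly $h$, and
\[
\sum w\log e^*=h+D\log(1-\tfrac{\delta}{2})+(1-D)\log\tfrac{\delta}{2(n-1)}.
\]
This is increasing in $D$, since $1-\delta/2>\delta/(2(n-1))$. That inequality holds because $\inf_v p_0(v)>\delta$ together with $n\ge 2$ forces $\delta<1/2$.

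The maximal coupling attains $D=\sum_v\min(q(v),p_0(v))=1-\|q-p_0\|_1/2\ge 1-\delta/2$, so $\inf_q\sup_w\ge J^*$.

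For the extreme points $q_i=p_0+\frac{\delta}{2}(\mathbf e_{v_i}-\mathbf e_{s_i})$, the matrix $\frac{\delta}{2}(\mathbf e_{v_i}\mathbf e_{s_i}^\top-\mathbf e_{s_i}\mathbf e_{s_i}^\top)+\diag(p_0)$ moves mass $\delta/2$ in column $s_i$ from row $s_i$ to row $v_i$. This has marginals $(q_i,p_0)$ and $D=1-\delta/2$, and it is nonnegative because $p_0(s_i)>\delta$. By linearity, $w^*$ in Eq.~\eqref{eq:coupling} is a valid coupling of $q=\sum_i\lambda_i q_i$ with $p_0$. Its diagonal mass is also $1-\delta/2$, so it achieves $J^*$; for extreme $q$ this is the maximum.

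\textbf{Converse.} Fix any $e\in\cE$ and write $f=\log e$. Since Eq.~\eqref{eq:e-null} is linear in $q$, validity is equivalent to $\E_{q_{ab}\otimes p_0}[e]\le 1$ for all extreme points $q_{ab}=p_0+\frac{\delta}{2}(\mathbf e_a-\mathbf e_b)$, $a\ne b$. The adversary is restricted to these extreme points, so it suffices to show
\[
\min_{a\ne b}\ \sup_{w\in\cP(p_0,q_{ab})}\sum w f\le J^*.
\]
I bound the minimum by an average $\sum_{a\ne b}\mu_{ab}\sup_w\sum wf$ with weights $\mu_{ab}\ge0$ to be chosen.

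For each pair I bound the inner sup by the Gibbs/Donsker--Varadhan inequality relative to the product reference $q_{ab}\otimes p_0$:
\[
\sum w f\le \mathrm{KL}\big(w\,\|\,q_{ab}\otimes p_0\big)+\log\E_{q_{ab}\otimes p_0}[e]\le I_w(V;S),
\]
where the second step uses validity. The mutual information of a coupling with marginals $(q_{ab},p_0)$ is at most $H(p_0)-H(S\mid V)$. However, a single pair alone only gives $I\le H(q_{ab})$, which is too weak.

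The averaging must therefore use the fact that one $e$ serves all $(a,b)$ at once. Concretely, I would write the transport dual $\sup_w\sum wf=\min_{\varphi(v)+\psi(s)\ge f(v,s)}\big(q_{ab}\cdot\varphi+p_0\cdot\psi\big)$. I would pick $\mu_{ab}\propto p_0(b)/(n-1)$, uniform over $a\ne b$, which symmetrizes the off-diagonal perturbations. Then I would combine the averaged dual bound with the averaged validity constraints using Jensen's inequality on $\log$.

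The target is an inequality of the form
\[
\textstyle\sum_s p_0(s)\big[(1-\tfrac{\delta}{2})f(s,s)+\tfrac{\delta}{2(n-1)}\sum_{v\ne s}f(v,s)\big]\le J^*.
\]
This follows from Gibbs' inequality applied columnwise: the column weights $(1-\delta/2,\ \delta/(2(n-1)),\dots)$ must beat $p_0(s)e(\cdot,s)$ subject to the averaged validity budget.

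\textbf{Main obstacle.} The hard step is the converse: choosing the weights $\mu_{ab}$ and dual certificates so that the averaged transport bound, for an arbitrary $f$, reduces exactly to the columnwise Gibbs inequality above. I expect to obtain them by guessing a saddle point $(e^*,\mu,w^*)$ and verifying the KKT conditions of the concave--convex problem.
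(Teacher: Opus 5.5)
Your achievability half is correct and more direct than anything in the paper. The direct validity computation, the identity $\sum w\log e^*=h+D\log(1-\delta/2)+(1-D)\log\frac{\delta}{2(n-1)}$ in the diagonal mass $D$, and the maximal-coupling value $D=1-\|q-p_0\|_1/2$ together settle the lower bound. You are also right to assert optimality of $w^*$ only at extreme $q$. For $q=p_0=\tfrac12 q_{ab}+\tfrac12 q_{ba}$, the coupling $\diag(p_0)$ has $D=1>1-\delta/2$, so the statement's coupling claim holds only on the boundary $\|q-p_0\|_1=\delta$.

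The genuine gap is the converse, which is the substance of the theorem, and the route you sketch cannot be completed. First, your target inequality is false unless $p_0$ is uniform. Its left side is $\sum_{v,s}\pi(v,s)\log e(v,s)$ with $\pi(v,s)=p_0(s)\nu_s(v)$, where $\nu_s$ puts mass $1-\delta/2$ on $s$. The detector $e=d\pi/d(\pi_V\otimes p_0)$ is valid, since $p_0(s)e(v,s)=\pi(s\mid v)$ has unit row sums and so $\E_{q\otimes p_0}[e]=1$ for all $q$. Yet it attains $I_\pi(V;S)=H(\pi_V)-H(\nu_\delta)$. Here $\pi_V=c\,p_0+(1-c)u$ with $u$ uniform and $c=1-\frac{n\delta}{2(n-1)}\in(0,1)$, so $H(\pi_V)>H(p_0)$. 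For $n=2$, $p_0=(0.2,0.8)$, $\delta=0.1$ this gives $0.341$ against $J^*\approx 0.302$. Columnwise Gibbs fails because validity controls $q$-weighted \emph{row} sums of $p_0(s)e(v,s)$, not $p_0$-weighted column sums. Moreover, with $\mu_{ab}=p_0(b)/(n-1)$ the mixture of single-edge couplings is exactly $\pi$. So your averaged bound is at least $\sum\pi\log e>J^*$ for this $e$, whatever dual certificates you use.

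Second, and more fundamentally, there is no saddle point to guess. For fixed $q$ the generator's value $\sup_w\langle w,\log e\rangle$ is convex in $\log e$, so the game is not concave--convex. No $e$-independent weights on extreme points work, even for $n=2$, $\cV=\{0,1\}$.

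Every such $\mu$ gives $\sum_{ab}\mu_{ab}\sup_w\langle w,\log e^*\rangle=J^*$, so $e^*$ would have to maximize the averaged bound. Near $e^*$ the best responses are the single-edge couplings, so the bound is smooth there. Perturbing the first row of $p_0(s)e(v,s)$ within the simplex keeps $e$ valid, and stationarity then forces $\mu_{ab}=(p_0(a)-\delta/2)/(1-\delta)$.

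Now take $p_0=(0.475,0.525)$, $\delta=0.1$, and the valid detector whose rows $p_0(s)e(v,s)$ are $(\epsilon,1-\epsilon)$ and $(0.9,0.1)$. The generator's best responses push maximal mass off the diagonal, giving $\approx h-0.050$ against $q_{01}$ and $\approx h-0.280$ against $q_{10}$. The worst case is below $J^*=h-0.1985$, but the $\mu$-average is $\approx h-0.172>J^*$.

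So the adversary's choice must depend on $e$, and the generator's freedom to reroute mass (multi-hop paths, circulations) must be controlled directly; your proposal does neither. That is the role of the paper's structural lemmas. After row normalization, a cycle (diagonal-dominance) condition makes the inner optimum a single $a\to b$ path flow, and the adversary reduces to extreme points. A perturbation argument on a trace-maximal optimal $r$ then equalizes the single-edge values across pairs. This forces $r$ to be constant on and off the diagonal, leaving a two-parameter optimization.
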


The result in \Cref{eq:optimal-log-growth} fundamentally express the optimal growth rate of e-values in Anchored E-Watermarking. The first term is the entropy of the anchor distribution, which mathematically formalizes the intuition that watermarking is inherently easier for distributions with high entropy. 
The second term can be written as $-H(\nu_\delta)$, the negative entropy of the categorical distribution $\nu_\delta = (1-\delta/2, \delta/(2(n-1)),\dots,\delta/(2(n-1))) \in \Delta([n])$, which is a decreasing function of $\delta$. This explicitly characterizes the power-robustness tradeoff determined by the tolerance parameter $\delta$. 
Written as $H(p_0) - H(\nu_\delta)$, the optimal log-growth rate is the achieved information rate of an Additive Noise Channel $Y = X \oplus Z$ where $X \sim p_0, Z \sim \nu_\delta$. 

\begin{remark}[Relationship to SEAL~\citep{huang2025watermarking}]
SEAL proposes to use a smaller language model as $p_0$ and speculative decoding as the generator.
Note that the optimizer of the inner problem given in Eq.~\eqref{eq:coupling} is exactly the maximal coupling given by speculative decoding. Therefore, \Cref{thm:log-growth} confirms that the choice of generator in \citet{huang2025watermarking} is optimal. However, the optimal detector given by Eq.~\eqref{eq:optimal-e} is different from the detection rule in SEAL: this gap is corroborated by experiments in Section~\ref{sec:exp-real}.
\end{remark}

\subsection{Sample efficiency}

Having characterized the one-step optimal growth rate, we now analyze the long-term performance of the watermark in a sequential setting. The following theorem connects the log-growth rate $J^*$ to the sample complexity required to reject the null hypothesis against an adaptive adversary.

\begin{theorem}[Expected stopping time]
\label{thm:stopping-time}
Let $\mu(\cA,\cG)$ be the stochastic process defined in \Cref{prob:stopping-time} and $\tau_\alpha(e)$ be the stopping time under e-value $e$. Let $J^*$ be the optimal rate defined in Eq.~\eqref{eq:optimal-log-growth}. We have the following bounds on sample efficiency:
\begin{itemize}
    \item \textbf{Lower bound (converse):} For any valid e-value $e \in \cE$:
\begin{align*}
    \sup_{\cA} \inf_{\cG} \liminf_{\alpha \to 0} \frac{\mathbb{E}_{\mu(\cA,\cG)}[\tau_\alpha(e)]}{\log(1/\alpha)} \geq \frac{1}{J^*}.
\end{align*}
\item \textbf{Upper bound (achievability):} For the optimal e-value $e^*$ defined in \Cref{thm:log-growth}:
\begin{align*}
    \sup_{\cA} \inf_{\cG} \liminf_{\alpha \to 0} \frac{\mathbb{E}_{\mu(\cA,\cG)}[\tau_\alpha(e^*)]}{\log(1/\alpha)} = \frac{1}{J^*}.
\end{align*}
\end{itemize}
\end{theorem}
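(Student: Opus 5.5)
The plan is to reduce both bounds to the one-step analysis of \Cref{thm:log-growth}, using a Wald-type identity for the wealth process $W_n=\sum_{t\le n}\log e(v^t,s^t)$. The key quantity is the conditional increment $\E[\log e(v^t,s^t)\mid\mathcal{F}_{t-1}]=\sum_{v,s} w^t(v,s)\log e(v,s)$. Because $w^t\in\cP(p_0,q^t)$ and $q^t\in\cQ(p_0,\delta)$ are chosen adaptively, this increment is a predictable quantity. For every history, its range is controlled by the min–max value in Eq.~\eqref{eq:log-growth}.

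\textbf{Lower bound.} Fix any $e\in\cE$. By \Cref{thm:log-growth}, $\inf_{q}\sup_{w}\sum w\log e\le J^*$, so there is some $\bar q\in\cQ(p_0,\delta)$ with $\sup_{w\in\cP(p_0,\bar q)}\sum w\log e\le J^*$. If the infimum is not attained, take an $\epsilon$-approximate minimizer and let $\epsilon\to0$ at the end. The adversary plays $q^t\equiv\bar q$, so every generator's increments have conditional mean at most $J^*$.

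Set $M_n=W_n-\sum_{t\le n}\E[\log e\mid\mathcal{F}_{t-1}]$. Optional stopping at $\tau_\alpha\wedge N$ gives $\E[W_{\tau_\alpha\wedge N}]\le J^*\,\E[\tau_\alpha\wedge N]$. Let $N\to\infty$; we may assume $\E\tau_\alpha<\infty$, since otherwise the bound is trivial. Since $W_{\tau_\alpha}\ge\log(1/\alpha)$, this yields $\E[\tau_\alpha]\ge\log(1/\alpha)/J^*$.

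Technical care is needed when $\log e$ is unbounded below or takes the value $-\infty$. If $e(v,s)=0$ on an event of positive $w$-probability, the wealth process is stuck. Handling this requires truncating $\log e$ at $-K$, which only increases the increments and so preserves the inequality. The upper overshoot does not matter for a lower bound.

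\textbf{Upper bound.} For $e^*$, the structure is explicit: $\log e^*(v,s)=-\log p_0(s)+\log(1-\delta/2)$ on the diagonal and $-\log p_0(s)+\log\frac{\delta}{2(n-1)}$ off it. The increments are bounded, because $p_0(s)>\delta$, so $|\log e^*|\le C$ uniformly.

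The key claim is that \emph{every} $q\in\cQ(p_0,\delta)$ admits a coupling with $\sum w\log e^*\ge J^*$. This follows because the coupling $w^*$ of Eq.~\eqref{eq:coupling} has seed-marginal $p_0$. It also has off-diagonal mass at most $\delta/2$, since it places total mass $\ge 1-\delta/2$ on the diagonal. Hence its expected log-growth is $h+\log(1-\delta/2)\,P(v=s)+\log\frac{\delta}{2(n-1)}\,P(v\ne s)\ge J^*$, because the diagonal coefficient exceeds the off-diagonal one.

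The generator plays $w^{t}=w^*(q^t)$ at each step, whatever the adversary does. The increments then have conditional mean $\ge J^*$ and are bounded by $C$. Wald/optional stopping gives $J^*\E[\tau_\alpha]\le\E[W_{\tau_\alpha}]\le\log(1/\alpha)+C$, since the overshoot is at most one bounded increment. Finiteness of $\E\tau_\alpha$ follows from a standard truncation argument or an Azuma bound on $M_n$. Therefore $\limsup_{\alpha\to0}\E[\tau_\alpha]/\log(1/\alpha)\le 1/J^*$, and combined with the converse this gives equality.

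The main obstacle is the converse. We must justify exchanging limits with possibly unbounded-below $\log e$, and handle the case where the inf over $q$ is not attained. We also need $J^*>0$, so that $\tau_\alpha$ is finite with positive drift. In the degenerate case where the drift is nonpositive, $\E\tau=\infty$ and the bound is immediate.
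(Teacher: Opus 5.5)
Your proposal is correct and takes essentially the paper's route: for the converse, a constant worst-case adversary $q^t\equiv\bar q$ plus a Wald-type optional-stopping identity; for achievability, a maximal-coupling generator with per-step drift at least $J^*$ plus the bounded-overshoot truncation argument of \Cref{thm:dynamic-hitting-time-lower-bound-drift}. Your one-sided bound $\E[W_{\tau_\alpha\wedge N}]\le J^*\,\E[\tau_\alpha\wedge N]$ is actually cleaner than the paper's appeal to the exact-drift \Cref{thm:dynamic-hitting-time}, which assumes a deterministic drift and bounded increments that an adaptive generator, or an $e$ with zeros, need not provide; the only thing to add is that after truncating at $-K$ the drift bound becomes $\sup_{w\in\cP(p_0,\bar q)}\sum_{v,s} w(v,s)\max(\log e(v,s),-K)$, which decreases to $\sup_{w}\sum_{v,s} w(v,s)\log e(v,s)\le J^*$ as $K\to\infty$ by compactness of $\cP(p_0,\bar q)$.
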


\Cref{thm:stopping-time} establishes $1/J^*$ as the fundamental information-theoretic limit of sample complexity for Anchored E-Watermarking. It demonstrates that the e-value $e^*$ derived from the one-step greedy optimization is not only locally optimal but also globally optimal for sequential testing. Crucially, this optimality holds even against an adaptive adversary that can vary the target distribution at every step, provided the distribution remains within the $\delta$-neighborhood of the anchor. 

\begin{remark}
[Relationship with the rates in \citet{huang2023towards}]
\citet{huang2023towards} shows that the minimum number of samples required to watermark with Type I error $\alpha$ scales as $\log(1/\alpha)\cdot\frac{\log(1/h)}{h}$, where $h$ is the average entropy per token. While our rate $\frac{\log(1/\alpha)}{h}$ has the same scaling in terms of $\alpha$, its dependence on the entropy $h$ is improved. This is because \citet{huang2023towards} study an asymptotic regime where $h \to 0$, while we fix an anchor distribution with lower bounded entropy arising from the condition $\inf_{v \in \cV}p(v) > \delta$.
\end{remark}

%% file: experiment.tex
\section{Experiments}

\subsection{Synthetic experiments}

\begin{figure*}[t]
\hfill
\begin{subfigure}[b]{.325\linewidth}
\includegraphics[width=\linewidth]{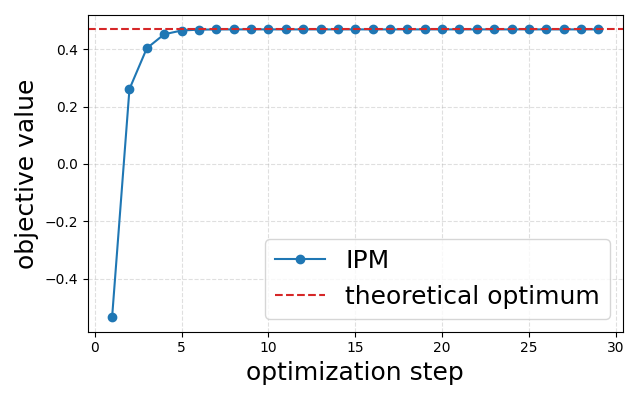}
\subcaption{$p = 0.2$}
\end{subfigure}
\hfill
\begin{subfigure}[b]{.325\linewidth}
\includegraphics[width=\linewidth]{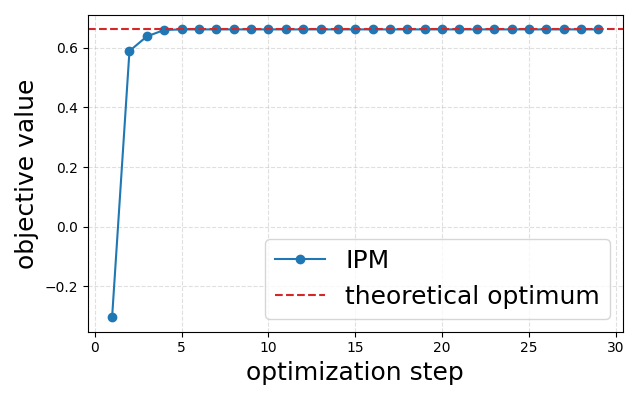}
\subcaption{$p = 0.5$}
\end{subfigure}
\hfill
\begin{subfigure}[b]{.325\linewidth}
\includegraphics[width=\linewidth]{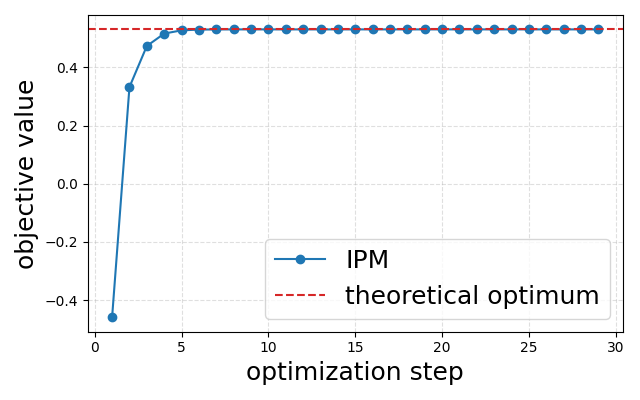}
\subcaption{$p = 0.75$}
\end{subfigure}
\caption{Simulation of the two-token case for the log growth problem in Eq.~\eqref{eq:log-growth}. Three separate anchor distributions are used each with parameter $\delta = 0.01$. We solve the simplified maxmin problem using the CLARABEL interior point method solver which is run for $30$ steps. The theoretical optimum is computed as in Eq.~\eqref{eq:optimal-log-growth}.}
\label{fig:log-growth}
\end{figure*}

In this section, we verify our theoretical results with synthetic experiments on a family of simple target distributions.
Here, we consider the two-token case where $\lvert \cV\rvert = 2$ and for simplicity of notation, we let $\mathcal{V} = \{0,1\}$. Any anchor distribution we consider belongs to a one-parameter Bernoulli family $p_0 = \begin{bmatrix}
    p & 1-p
\end{bmatrix}$ for $p \in (0,1)$.
\paragraph{Log-growth optimality.}
We solve the optimization problem in Eq.~\eqref{eq:log-growth} numerically and compare the objective curve with the theoretical prediction.
With $n=2$, the problem can be simplified to the following maxmin problem:
$$\sup_e\sum_{v} p_0(v)\log e(v,v) + \frac{\delta}{2}\min\left\{\log \frac{e(0,1)}{e(1,1)}, \log \frac{e(1,0)}{e(0,0)}\right\}$$
subject to the constraint in Eq.~\eqref{eq:e-null} with $\cQ(p_0,\delta)$ replaced with the set of vertices $\cQ_{\text{ext}}(p_0,\delta) := \{p_0 \pm \frac{\delta}{2}(\mathbf{e}_0-\mathbf{e}_1)\}$. Using the CLARABEL interior point method solver \cite{Clarabel_2024}, we obtain the results in Fig. \ref{fig:log-growth}. We choose $\delta = 0.01$ and $p = 0.2,0.5,0.75$ corresponding to three separate anchor distributions $p_0$. For each $p_0$, we cold start the IPM solver and run it for $30$ steps with maximum allowed step size $0.99$. We observe that our numerical solution to Eq.~\eqref{eq:log-growth} converges to the proposed theoretical value in Eq.~\eqref{eq:optimal-log-growth}, hence verifying our theoretical results in the two-token case. 

\paragraph{Stopping time.}
In this setting, we simulate sequential testing with the optimal e-value $e^*$, as in Eq.~\eqref{eq:optimal-e}. Following Theorem \ref{thm:log-growth}, we let $\mathcal{A}^*$ be the adversary that chooses $q_t = q^*$ for all $t$ and $\cG^*$ be the generator that selects $w_t = w^*$ in response to $\mathcal{A}^*$ for all $t$. Under this setting, we simulate the stopping time over several $\alpha$ values to obtain the results in Fig. \ref{fig:stopping-time}. Here, we choose the parameters $\delta = 0.1$ and $p= 0.2,0.5,0.75$. For each anchor $p_0$, we compute the corresponding $q^*$, $w^*$, and $e^*$ using the closed-form equations in Theorem \ref{thm:log-growth}. We select $30$ values of $\alpha$ evenly log-spaced from $10^{-2}$ to $10^{-120}$. For each $\alpha$, we compute $10000$ stopping-times by generating sequences $(V^t,S^t)_{t \geq 1} \sim w^*$ and computing the resulting $\log e^*(V^t,S^t)$. We then average the stopping times to form an estimate of $\mathbb{E}_{\mu(\mathcal{A}^*,\cG^*)}[\tau_{\alpha}(e^*)]$. In Fig. \ref{fig:stopping-time}, we plot our estimates of $\mathbb{E}_{\mu(\mathcal{A}^*,\cG^*)[\tau_{\alpha}(e^*)]}/(\log(1/\alpha))$ which show that as $\alpha \downarrow 0$, the estimates converge to the theoretical rate of $1/J^*$, thereby validating the result in Theorem \ref{thm:stopping-time}.

\begin{figure*}[t]
\hfill
\begin{subfigure}[h]{.325\linewidth}
\includegraphics[width=\linewidth]{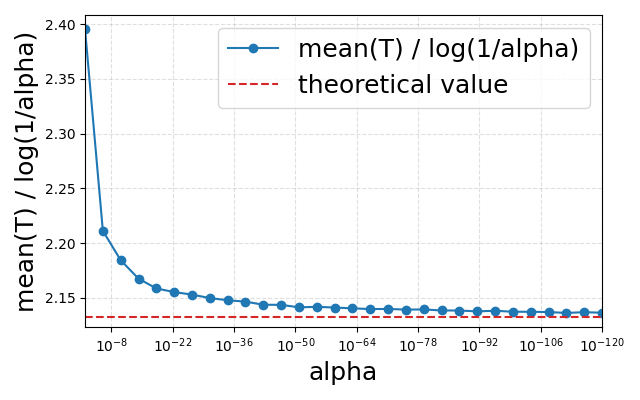}
\subcaption{$p = 0.2$}
\end{subfigure}
\hfill
\begin{subfigure}[h]{.325\linewidth}
\includegraphics[width=\linewidth]{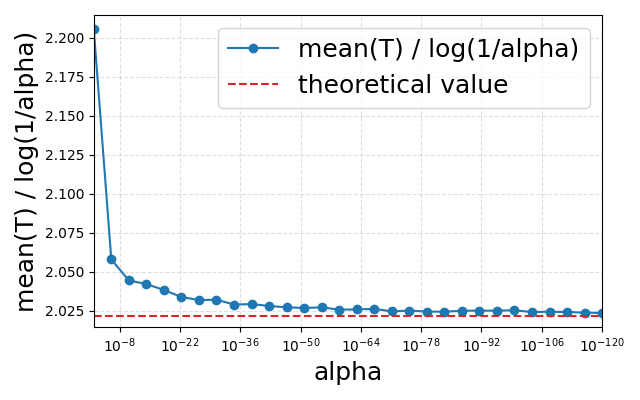}
\subcaption{$p = 0.5$}
\end{subfigure}
\hfill
\begin{subfigure}[h]{.325\linewidth}
\includegraphics[width=\linewidth]{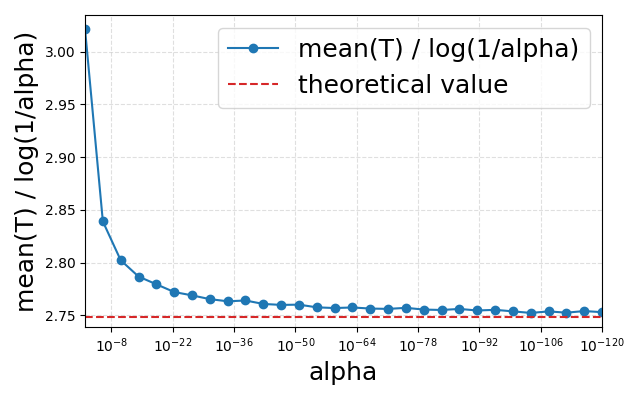}
\subcaption{$p = 0.75$}
\end{subfigure}
\caption{Simulation of the two-token case for the stopping time problem $\text{SC}(e^*)$. We simulate for three different anchor distributions $p_0 = \begin{bmatrix}
    p & 1-p
\end{bmatrix}$ with $\delta = 0.1$ and estimate average stopping times $\mathbb{E}[\tau_{\alpha}]$ for $\alpha$ values ranging from $10^{-2}$ to $10^{-120}$. Each $\mathbb{E}[\tau_{\alpha}]$ is estimated by simulating $10000$ stopping times $\tau_\alpha$. The plots above display graphs of $\frac{\mathbb{E}[\tau_\alpha]}{\log(1/\alpha)}$ with red dashed lines equal to $1/J^*$ where $J^*$ is as in Eq.~\eqref{eq:optimal-log-growth}. We observe that convergence to the theoretical optimum is obtained for sufficiently small $\alpha$.}
\label{fig:stopping-time}
\end{figure*}

\subsection{Experiments on real data}\label{sec:exp-real}

We evaluate the performance of our watermarking scheme in \Cref{thm:log-growth} by comparing it with several baselines on a real watermarking task.

\paragraph{Setup.}
We select Llama2-7B-chat~\citep{touvron2023llama} with temperature $0.7$ as target distribution (i.e., the model to be watermarked) and Phi-3-mini-128k-instruct~\citep{abdin2024phi3technicalreporthighly} as the anchor distribution. We evaluate our watermark using the \textsc{MarkMyWords} benchmark~\citep{piet2023mark}, which is an open-source benchmark designed to evaluate symmetric key watermarking schemes. 

\textsc{MarkMyWords} generates 300 outputs spanning three tasks---book summarization, creative writing, and news article generation---which mimic potential misuse scenarios. 
Then, watermarked outputs undergo a set of transformations mimicking realistic user perturbation or attacks: (1) character-level perturbations (contractions, expansions, misspellings and typos); (2) word-level perturbations (random removal, addition, and swap of words in each sentence, replacing words with synonyms); (3) text-level perturbations (paraphrasing, translating the output to another language and back).
Detection is run in a sequential test setting with early stopping. For baseline methods, we apply Bonferroni corrections to preserve \emph{anytime valid} Type-I error guarantees: let $p_k$ denote the p-value at time step $k \in \Z_+$, reject at the first time ${p_k} < \frac{\alpha}{k(k+1)}$.

\paragraph{Results.}

In this section, we present a comprehensive comparison of our method against these baselines in terms of quality and size. 
Quality measures the utility of the watermarked text. It is computed using Llama-3~\citep{dubey2024llama} with greedy decoding as a judge model, and ranges from zero to one.
{Size} represents the median number of tokens required to detect the watermark at a given p-value, computed over a range of perturbations listed above. All experiments enforce a Type-I error constraint of $\alpha = 0.02$. Lower values indicate higher efficiency.
We compare against five state-of-the-art watermarking schemes: 
``Distribution Shift''~\citep{kirchenbauer2023watermark}, 
``Exponential''~\citep{aaronson2022watermarking}, 
``Binary''~\citep{christ2023undetectable},
``Inverse Transform''~\citep{kuditipudi2023robust}, and ``SEAL''~\citep{huang2025watermarking}.

\begin{table}[h]
    \centering
    \begin{NiceTabular}{l|cc}
        \toprule
        {\bf Scheme
        } & {\bf Quality} ($\uparrow$) & {\bf Size} ($\downarrow$) \\
        \hline
        Exponential & 0.907	& $\infty$ \\
        Inverse Transform & 0.917 &	734.0 \\
        Binary & {\bf 0.919} &	$\infty$ \\
        Distribution Shift   & 0.912 &	145.0 \\
        SEAL   & 0.901 &	84.5 \\
        {\bf \Cref{thm:log-growth}} & {\bf 0.919} & {\bf 72.0} \\ 
        \bottomrule
    \end{NiceTabular}
    \vspace{.5em}
    \caption{{\fontfamily{ptm}\selectfont Comparison of our e-value-based watermarking scheme in \Cref{thm:log-growth} with baselines across quality and size. We report the median over different private keys and perturbation methods. The best result in each category is highlighted in bold. $\infty$ size suggests over half of the watermarked generations fail to be detected after perturbation. E-value-based watermarking demonstrates higher efficiency comparing with all baselines.}}
    \label{tab:comparison}
\end{table}

As shown in Table~\ref{tab:comparison}, our Anchored E-Watermarking framework significantly outperforms baseline methods in terms of detection efficiency while maintaining high generation quality. Specifically, our method improves over SEAL, confirming the superiority of the optimal e-value. Furthermore, we achieves nearly $2 \times$ speed improvement (from $145$ to $72.0$ tokens) compared to the best non-anchored baseline. Crucially, this efficiency gain does not come at the cost of text quality: the quality score of our method remains competitive with the baselines, demonstrating that E-value-based watermarking offers a superior trade-off between detectability and utility.
Due to space constraints, we defer additional experiment results to Appendix~\ref{sec:add_exp}.

%% file: discussions.tex
\section{Discussions}

We have introduced Anchored E-Watermarking, a novel framework that bridges the gap between optimal sampling and anytime-valid inference in statistical watermarking. By shifting the detection paradigm from p-values to e-values, we addressed the critical limitation of fixed-horizon testing, enabling valid optional stopping without compromising Type-I error guarantees.

Moreover, we characterized the optimal e-value with respect to the worst-case log-growth rate and derived the optimal expected stopping time, providing a rigorous foundation for watermarking in the presence of an anchor distribution. Empirically, our results on real-world language models demonstrate that this principled approach translates into substantial gains in efficiency. Our method identifies watermarked content with significantly fewer tokens than state-of-the-art heuristics while preserving generation quality.

As the first application of e-values to statistical watermarking, this framework opens new avenues for efficient detection mechanisms, with future works including extension to more flexible anchor distributions or investigating the game-theoretic implications of e-watermarking against incentivized adversaries.

%% file: e_proof.tex
\paragraph{Notation. } Let $[n]$ be a the set $\{1,\dots,n\}$ and $\Delta([n])$ denote the probability simplex over $[n]$. For the simplicity of notations, we let $\cV = [n]$. Let $S_n$ denote the permutation group of $\{1,\dots,n\}$. For any $\sigma \in S_n$, $\sigma^i$ means applying permutation $\sigma$ $i$-times. For any matrix $M \in \mathbb{R}^{n\times n}$, let $M(x,y), M(x,:), M(:,y)$ denote the entry on $x$-th row and $y$-th column, the $x$-th row, and the $y$-th column respectively. 

\section{Proof of \Cref{thm:log-growth}}
\label{sec:e-proof}
\begin{proof}
Let 
\begin{align*}
    \cR = \left\{r \in \R^{n \times n}: \sum_{s \in \cV} r(v,s) = 1, ~r(v,s) \geq 0,~ \forall v,s \in \cV \right\}.
\end{align*}
By \Cref{lem:normalizationconstant}, the original problem is equivalent to
\begin{align*}
    J': ~\sup_{r \in \cR} \inf_{q \in \cQ(p_0,\delta)} \sup_{w \in \cP(p_0,q)} \sum_{v,s \in \cV} w(v,s) \cdot \left(\log r(v,s) - \log p_0(s)\right),
\end{align*}
with $r(v,s) = {p_0(s)e(v,s)} \in \cR$.
Note that 
\begin{align*}
    \sum_{v,s \in \cV} w(v,s) \cdot (- \log p_0(s)) = \sum_{s \in \cV} p_0(s) \cdot (- \log p_0(s)) = H(p_0)
\end{align*}
where $H(p_0)$ is the entropy of $p_0$. 
By \Cref{prop:reform-problem}, the optimum value of $J'$ is equal to 
\begin{align*}
(1-\tfrac{\delta}{2})\log\left(1-\frac{\delta}{2}\right)
 + \tfrac{\delta}{2}\log\left(\frac{\delta}{2(n-1)}\right),
\end{align*}
achieved at
\begin{align*}
r^*(v,s)
=
\begin{cases}
1-\delta/2, & s=v,\\
\delta/(2n-2), & s\neq v,
\end{cases}
\end{align*}
and for any $q \in \cQ(p_0,\delta)$ written as $q = \sum_{i=1}^k \lambda_i \cdot q_i$ for $q_i = p_0 + \frac{\delta}{2}\cdot (\mathbf{e}_{v_i} - \mathbf{e}_{s_i}) $, the optimizer of the inner problem is given by
\begin{align*}
    \sum_{i=1}^k \lambda_i \cdot \left(\frac{\delta}{2} \cdot (\mathbf{e}_{v_i}\mathbf{e}_{s_i}^\top - \mathbf{e}_{s_i}\mathbf{e}_{s_i}^\top) + \diag(p_0)\right).
\end{align*}
Therefore, the optimum value of the original problem is equal to
\begin{align*}
(1-\tfrac{\delta}{2})\log\left(1-\frac{\delta}{2}\right)
 + \tfrac{\delta}{2}\log\left(\frac{\delta}{2(n-1)}\right) + H(p_0).
\end{align*}
In particular, this is achieved at
\begin{align*}
e^*(v,s)
=
\begin{cases}
\frac{1-\delta/2}{p_0(s)}, & s=v,\\
\frac{\delta}{2(n-1)p_0(s)}, & s\neq v,
\end{cases}
\end{align*}
and the optimizer of the inner problem is given in the same way. 
This completes the proof.
\end{proof}

\subsection{Supporting lemma}

\begin{lemma}[Diagonal Dominance]\label{lem:diagonal-dom}
    Let $\cV$ be a the set $\{1,\dots,n\}$. For any matrix $M_0 \in \mathbb{R}^{n\times n}$, there exists a permutation matrix $P \in \R^{n \times n}$ such that the following condition holds for any $v \in \cV$ and permutation $\sigma \in S_n$:
    \begin{align}\label{eq:diagdom}
        \sum_{i=0}^{\kappa_v-1} (M_0P)(\sigma^i(v),\sigma^i(v)) \geq \sum_{i=0}^{\kappa_v-1} (M_0P)(\sigma^i(v),\sigma^{i+1}(v)).
    \end{align}
    where $\kappa_v$ is the minimum positive integer such that $\sigma^{\kappa_v}(v) = v$. 
\end{lemma}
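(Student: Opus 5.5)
The plan is to take $P$ to be a permutation matrix that maximizes the trace of $M_0P$, and to derive \eqref{eq:diagdom} from the optimality of this choice. The maximizer is a maximum-weight assignment for $M_0$. It exists because there are only finitely many permutation matrices ($n!$ of them). So fix
\begin{align*}
P \in \arg\max_{P'\ \text{permutation matrix}} \tr(M_0P'),
\end{align*}
and write $A = M_0P$. Then $A$ maximizes $\tr(AQ)$ over all permutation matrices $Q$, since $PQ$ ranges over all permutation matrices as $Q$ does.

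Next, fix $v\in\cV$ and $\sigma\in S_n$. Let $C=\{v,\sigma(v),\dots,\sigma^{\kappa_v-1}(v)\}$ be the orbit of $v$; by the minimality of $\kappa_v$, its elements are distinct. Define the permutation $c$ that acts as $\sigma$ on $C$ and as the identity off $C$. Let $Q$ be the permutation matrix with $Q(y,x)=1$ if and only if $y=c(x)$. Then
\begin{align*}
(AQ)(x,x)=\sum_y A(x,y)Q(y,x)=A(x,c(x)),
\end{align*}
and therefore
\begin{align*}
\tr(AQ)=\sum_{x\notin C}A(x,x)+\sum_{i=0}^{\kappa_v-1}A(\sigma^i(v),\sigma^{i+1}(v)).
\end{align*}

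Finally, compare with $\tr(A)=\sum_{x\notin C}A(x,x)+\sum_{i=0}^{\kappa_v-1}A(\sigma^i(v),\sigma^i(v))$. Optimality gives $\tr(A)\ge \tr(AQ)$. The terms with $x\notin C$ cancel, and what remains is exactly \eqref{eq:diagdom}. The same $P$ works simultaneously for every $v$ and every $\sigma$, because $P$ was chosen before either was fixed.

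There is no real obstacle in this argument. The only care needed is the index convention for $Q$, so that multiplying by $Q$ on the right shifts each column index along the cycle in the correct direction. If the convention is reversed, the argument still works with $\sigma^{-1}$ in place of $\sigma$, so this affects bookkeeping only.
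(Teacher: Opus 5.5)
Your proof is correct and follows essentially the same route as the paper. Both choose $P$ to maximize $\tr(M_0P)$ and compare it against $P$ composed with the permutation matrix of the single cycle of $\sigma$ through $v$. The only difference is presentation: you argue directly from $\tr(A)\ge\tr(AQ)$, while the paper phrases the same comparison as a contradiction, encoding the cycle via $\sigma^{-1}$ with a transposed matrix convention, which yields the same diagonal entries $A(x,\sigma(x))$.
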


\begin{proof}
Let $\mathcal{P}$ be the set of all permutation matrices and
\begin{align*}
    P^* = \arg \sup_{P \in \mathcal{P}} \tr[M_0 P].
\end{align*}

Then we claim that $P^*$ is the permutation matrix s.t. $M = M_0P^*$ satisfying Eq.~\eqref{eq:diagdom}. 

Denote 
\begin{align*}
    C = \{v, \sigma(v),\dots,\sigma^{\kappa_v-1}(v)\}.
\end{align*}
Suppose for the sake of contradiction that there exists $\sigma(\cdot)$ and $v$ such that 
\begin{align}\label{eq:cycle_not_satisfied}
    \sum_{i=0}^{\kappa_v-1} M(\sigma^i(v),\sigma^i(v)) < \sum_{i=0}^{\kappa_v-1} M(\sigma^i(v),\sigma^{i+1}(v)).
\end{align}
Define permutation
\begin{align*}
    \bar \sigma(u) = \begin{cases}
        \sigma^{-1}(u),&~ u \in C,\\
        u, &~ u \notin C.
    \end{cases}
\end{align*}
and permutation matrix
\begin{align*}
    P_{\bar \sigma} = \begin{pmatrix}
        \mathbf{e}_{\bar \sigma(1)} & \cdots \mathbf{e}_{\bar \sigma(n)}
    \end{pmatrix}^\top
\end{align*}
Then we have
\begin{align*}
    M P_{\bar \sigma}  = &~ \sum_{i=1}^n M(:,i) \cdot \mathbf{e}_{\bar \sigma(i)}^\top\\
    = &~  \sum_{i \notin C} M(:,i) \cdot \mathbf{e}_{\bar \sigma(i)}^\top + \sum_{i \in C} M(:,i) \cdot \mathbf{e}_{\bar \sigma(i)}^\top\\ 
    = &~ M + \sum_{i \in C} M(:,i) \cdot (\mathbf{e}_{\bar \sigma(i)}^\top - \mathbf{e}_i^\top)\\
    = &~ M + \sum_{j=0}^{\kappa_v-1} M(:,\bar \sigma^j(v)) \cdot (\mathbf{e}_{\bar \sigma^{j+1}(v)}^\top - \mathbf{e}_{\bar \sigma^j(v)}^\top ) .
\end{align*}
Notice
\begin{align*}
    \tr[P_{\bar \sigma} M] = &~ \tr[M] + \tr[\sum_{j=0}^{\kappa_v-1} M(:,\bar \sigma^j(v)) \cdot (\mathbf{e}_{\bar \sigma^{j+1}(v)}^\top - \mathbf{e}_{\bar \sigma^j(v)}^\top )]\\
    = &~ \tr[M] + \tr[\sum_{j=0}^{\kappa_v-1} M(:,\bar \sigma^j(v)) \cdot \mathbf{e}_{\bar \sigma^{j+1}(v)}^\top ] - \tr[\sum_{j=0}^{\kappa_v-1} M(:,\bar \sigma^j(v)) \cdot \mathbf{e}_{\bar \sigma^j(v)}^\top ]\\
    = &~ \tr[M] + \sum_{j=0}^{\kappa_v-1} M(\bar \sigma^{j+1}(v), \bar \sigma^j(v)) - \sum_{j=0}^{\kappa_v-1}  M(\bar \sigma^j(v),\bar \sigma^j(v))\\
    = &~ \tr[M] + \sum_{j=0}^{\kappa_v-1} M(\sigma^{j}(v), \sigma^{j+1}(v)) - \sum_{j=0}^{\kappa_v-1}  M(\sigma^j(v),\sigma^j(v))\\
    > &~ \tr[M]
\end{align*}
where the last inequality follows from Eq.~\eqref{eq:cycle_not_satisfied}.

But $\mathcal{P}$ is a group so $P^* P_{\bar \sigma} \in \mathcal{P}$. This contradicts with the definition of $P^*$.
\end{proof}

\begin{lemma}[Optimizer of Inner Problem]\label{lem:inner}
Let $\cV$ be the set $\{1,\dots,n\}$ and $p_0 \in \Delta(\cV)$ be a distribution over $\cV$ such that $\inf_{v \in \cV}p_0(v) > \delta$.  
Let $M: \cV \times \cV \mapsto \R$ be a matrix that satisfies
\begin{align}
    \sum_{i=0}^{\kappa_v-1} M(\sigma^i(v),\sigma^i(v)) 
     \geq  \sum_{i=0}^{\kappa_v-1} M(\sigma^i(v),\sigma^{i+1}(v))
    \label{eq:M-cycle-cond}
\end{align}
for any $v \in \cV$, $\sigma \in S_n$, where $\kappa_v = \inf\{k\ge1:\sigma^{k}(v) = v\}$. 
Fix distinct $a,b\in\cV$ and define
\begin{align*}
    q = p_0 + \frac{\delta}{2}\cdot (\mathbf{e}_a - \mathbf{e}_b).
\end{align*}
Consider the optimization problem
\begin{align*}
    \sup_{w \in \cP(p_0,q)} J(w), 
    \qquad 
    J(w) := \sum_{v,s \in \cV} w(v,s)   M(v,s),
\end{align*}
where
\begin{align*}
    \cP(p_0,q) 
    = \left\{w:\cV\times\cV\to\R_+  \Big|  
        \sum_{s\in\cV} w(v,s) = q(v) \ \forall v\in\cV,~\sum_{v\in\cV} w(v,s) = p_0(s) \ \forall s\in\cV
      \right\}.
\end{align*}
Then there exists a permutation $\sigma \in S_n$ such that 
\begin{align}
    w^* 
    = \frac{\delta}{2} \sum_{i=0}^{\kappa-2} 
        \left(\mathbf{e}_{\sigma^{i}(a)}\mathbf{e}_{\sigma^{i+1}(a)}^\top 
              - \mathbf{e}_{\sigma^{i+1}(a)}\mathbf{e}_{\sigma^{i+1}(a)}^\top\right) 
      + \diag(p_0)
      \label{eq:p-star-form}
\end{align}
is an optimizer of $J$, where $\kappa$ is the minimum positive integer such that $\sigma^{\kappa}(a) = a$.
\end{lemma}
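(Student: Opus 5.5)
The plan is to treat the problem as a network flow. Write $w$ as its diagonal plus its off-diagonal part, and show that after cycles are removed only the flow along a single simple path from $a$ to $b$ matters. The cycle condition Eq.~\eqref{eq:M-cycle-cond} will show that cycles can only hurt the objective.

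\textbf{Step 1: flow reformulation.} For a feasible $w\in\cP(p_0,q)$, let $F(v,s)=w(v,s)$ for $v\neq s$, read as a flow on the edge $v\to s$. Write $\mathrm{out}_v=\sum_{s\neq v}F(v,s)$ and $\mathrm{in}_v=\sum_{u\neq v}F(u,v)$. The column constraint gives $w(v,v)=p_0(v)-\mathrm{in}_v$. The row constraint then gives
\[
\mathrm{out}_v-\mathrm{in}_v=q(v)-p_0(v)=\tfrac{\delta}{2}\big(\mathbf{e}_a-\mathbf{e}_b\big)(v).
\]
So $F$ is a nonnegative flow with net supply $\delta/2$ at $a$ and net demand $\delta/2$ at $b$. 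Conversely, any nonnegative flow $F$ with these net supplies and $\mathrm{in}_v\le p_0(v)$ for all $v$ yields a feasible $w$, by setting $w(v,v)=p_0(v)-\mathrm{in}_v$.

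Substituting into the objective,
\[
J(w)=\sum_v p_0(v)M(v,v)+\sum_{v\neq s}F(v,s)\big(M(v,s)-M(s,s)\big).
\]
Define the edge cost $c(v,s)=M(v,s)-M(s,s)$; the objective is then a constant plus a linear functional of $F$.

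\textbf{Step 2: decomposition.} By the standard flow decomposition theorem, $F=\sum_j \mu_j \pi_j+\sum_k \nu_k \gamma_k$ with $\mu_j,\nu_k\ge 0$ and $\sum_j\mu_j=\delta/2$. Here the $\pi_j$ are indicators of simple paths from $a$ to $b$, and the $\gamma_k$ are indicators of simple directed cycles. A simple cycle $u_0\to u_1\to\cdots\to u_{m-1}\to u_0$ is exactly a permutation $\sigma$ restricted to one of its orbits. Its cost is
\[
\sum_i M(u_i,u_{i+1})-\sum_i M(u_{i+1},u_{i+1}),
\]
and this is $\le 0$ by Eq.~\eqref{eq:M-cycle-cond}.

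Deleting all cycle components therefore does not decrease $J$. It also keeps the row and column sums unchanged, and it only decreases each $\mathrm{in}_v$, so every diagonal entry stays nonnegative. The remaining path part has cost $\sum_j\mu_j c(\pi_j)$, a $\delta/2$-weighted average of path costs. This is at most $\frac{\delta}{2}\max_j c(\pi_j)$, so routing all $\delta/2$ of flow along one best simple path $\pi^*$ is at least as good.

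\textbf{Step 3: identification with Eq.~\eqref{eq:p-star-form}.} Write $\pi^*=(a=u_0,u_1,\dots,u_{\kappa-1}=b)$ and let $\sigma$ be the cyclic permutation $u_0\mapsto u_1\mapsto\cdots\mapsto u_{\kappa-1}\mapsto u_0$, fixing all other points. Then $\sigma^i(a)=u_i$, the cycle length is $\kappa$, and $\sigma^{\kappa-1}(a)=b$. The single-path flow places $\delta/2$ on the edges $(\sigma^i(a),\sigma^{i+1}(a))$ for $0\le i\le\kappa-2$. It subtracts $\delta/2$ from the diagonal at each $\sigma^{i+1}(a)$, since each such vertex has in-flow exactly $\delta/2$. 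This is exactly $w^*$ in Eq.~\eqref{eq:p-star-form}. Feasibility holds because the path is simple, so each $\mathrm{in}_v\le\delta/2<p_0(v)$.

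Finally, the supremum is attained because $\cP(p_0,q)$ is a nonempty compact polytope; it contains the optimal $w$ given by the path construction. Hence $w^*$ is an optimizer. I expect the main care to lie in Step 2, namely in verifying that simple cycles correspond precisely to orbits of some $\sigma\in S_n$, so that Eq.~\eqref{eq:M-cycle-cond} applies to them.
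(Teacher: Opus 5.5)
Your proof is correct and follows essentially the same route as the paper: you rewrite the objective as $\sum_v p_0(v)M(v,v)$ plus a linear functional of the off-diagonal flow with edge weights $M(v,s)-M(s,s)$, use the cycle condition to show cycles contribute nonpositively, bound the path part by $\frac{\delta}{2}$ times the best simple $a$--$b$ path gain, and attain that bound with the single-path plan. The only difference is minor: the paper takes an optimizer, cancels its cycles by iterative $\varepsilon$-pushes, and then decomposes the resulting acyclic flow into paths by hand, whereas you apply the standard path--cycle decomposition to an arbitrary feasible plan, which gives $J(w)\le J(w^*)$ directly without first establishing that an optimizer exists.
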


\begin{proof} First, we establish existence of an optimizer. Define $w^{\mathrm{feas}}$ by
\begin{align*}
    w^{\mathrm{feas}}(a,b) = &~ \frac{\delta}{2},\\
    w^{\mathrm{feas}}(v,v) = &~ 
    \begin{cases}
        p_0(b) - \frac{\delta}{2}, & v=b,\\
        p_0(v), & v\neq b,
    \end{cases}\\
    w^{\mathrm{feas}}(v,s) = &~ 0, \quad (v,s)\notin\{(a,b)\}\cup\{(v,v):v\in\cV\}.
\end{align*}
Because $\inf_v p_0(v)>\delta>\delta/2$, we have $w^{\mathrm{feas}}\ge0$.  
Its row sums satisfy
\begin{align*}
    \sum_s w^{\mathrm{feas}}(a,s) = &~ p_0(a)+\frac{\delta}{2} = q(a),\\
    \sum_s w^{\mathrm{feas}}(b,s) = &~ p_0(b)-\frac{\delta}{2} = q(b),\\
    \sum_s w^{\mathrm{feas}}(v,s) = &~ p_0(v) = q(v), \quad v\notin\{a,b\},
\end{align*}
and its column sums satisfy
\begin{align*}
    \sum_v w^{\mathrm{feas}}(v,b) = &~ p_0(b)-\frac{\delta}{2} + \frac{\delta}{2} = p_0(b),\\
    \sum_v w^{\mathrm{feas}}(v,s) = &~ p_0(s), \quad s\neq b.
\end{align*}
Thus $w^{\mathrm{feas}}\in\cP(p_0,q)$. The set $\cP(p_0,q)$ is a nonempty bounded polytope, and $J$ is linear, so there exists
\begin{align*}
    w^{(0)} \in \arg\sup_{p\in\cP(p_0,q)} J(w).
\end{align*}

\medskip
For a feasible $w$, let $G_w$ be the directed graph on vertex set $\cV$ with an edge
$(v,s)$ whenever $v\neq s$ and $w(v,s)>0$.
A directed cycle in $G_w$ is a $k$-tuple $(v_0,\dots,v_{k-1})$ of distinct vertices,
$k\ge2$, such that
\begin{align*}
    w(v_i,v_{i+1})>0,\quad i=0,\dots,k-1,
\end{align*}
with indices understood modulo $k$ (so $v_k=v_0$).

Next, we show that for an optimal distribution $\bar{w}$, the graph $G_{\bar{w}}$ has no directed cycles. Fix any feasible $w$ and such a cycle $(v_0,\dots,v_{k-1})$.
For $\varepsilon>0$ define $\tilde w$ by
\begin{align}
    \tilde w(v_i,v_{i+1}) = &~ w(v_i,v_{i+1}) - \varepsilon,
    &&i=0,\dots,k-1,
    \label{eq:cycle-transform-off}\\
    \tilde w(v_i,v_i)     = &~ w(v_i,v_i) + \varepsilon,
    &&i=0,\dots,k-1,
    \label{eq:cycle-transform-diag}\\
    \tilde w(x,y)         = &~ w(x,y),
    &&(x,y)\notin\{(v_i,v_{i+1}),(v_i,v_i):0\le i\le k-1\}.
    \notag
\end{align}
Choose
\begin{align*}
    0<\varepsilon\le\inf_{0\le i\le k-1} w(v_i,v_{i+1}),
\end{align*}
so that $\tilde w\ge0$.

\emph{Row sums.}
For each $i$,
\begin{align*}
    \sum_{s\in\cV} \tilde w(v_i,s)
    = &~ \left(w(v_i,v_{i+1})-\varepsilon\right)
       + \left(w(v_i,v_i)+\varepsilon\right)
       + \sum_{s\notin\{v_i,v_{i+1}\}} w(v_i,s)\\
    = &~ \sum_{s\in\cV} w(v_i,s).
\end{align*}
All other rows are unchanged, so
\begin{align}
    \sum_{s} \tilde w(v,s) = \sum_s w(v,s) = q(v), \qquad \forall v\in\cV.
    \label{eq:rows-preserved}
\end{align}

\emph{Column sums.}
For each $i$,
\begin{align*}
    \sum_{x\in\cV} \tilde w(x,v_i)
    = &~ \left(w(v_{i-1},v_i)-\varepsilon\right)
       + \left(w(v_i,v_i)+\varepsilon\right)
       + \sum_{x\notin\{v_{i-1},v_i\}} w(x,v_i)\\
    = &~ \sum_{x\in\cV} w(x,v_i),
\end{align*}
where again indices are taken modulo $k$.
All other columns are unchanged, so
\begin{align}
    \sum_v \tilde w(v,s) = \sum_v w(v,s) = p_0(s), \qquad \forall s\in\cV.
    \label{eq:cols-preserved}
\end{align}
Thus $\tilde w\in\cP(p_0,q)$.

\emph{Objective value.}
Only entries on the cycle and the corresponding diagonals change, hence
\begin{align}
    J(\tilde w)-J(w)
    = &~ \sum_{i=0}^{k-1}
       \Bigl[ \tilde w(v_i,v_i)M(v_i,v_i) + \tilde w(v_i,v_{i+1})M(v_i,v_{i+1}) \\
    &~ - w(v_i,v_i)M(v_i,v_i) - w(v_i,v_{i+1})M(v_i,v_{i+1}) \Bigr] \notag\\
    = &~ \varepsilon \sum_{i=0}^{k-1} \left(M(v_i,v_i) - M(v_i,v_{i+1})\right).
    \label{eq:J-diff-cycle}
\end{align}
Let $\sigma\in S_n$ be the permutation whose cycle on the set
$\{v_0,\dots,v_{k-1}\}$ is $(v_0 v_1 \dots v_{k-1})$ and which fixes all
other vertices.
For $v=v_0$ we have $\kappa_v=k$ and
\begin{align*}
    \sigma^i(v_0) = v_i,\quad
    \sigma^{i+1}(v_0)=v_{i+1},\quad i=0,\dots,k-1,
\end{align*}
so by Eq.~\eqref{eq:M-cycle-cond},
\begin{align}
    \sum_{i=0}^{k-1} M(v_i,v_i)
    = &~ \sum_{i=0}^{\kappa_v-1} M(\sigma^i(v_0),\sigma^i(v_0))
    \notag\\
    \geq &~ \sum_{i=0}^{\kappa_v-1} M(\sigma^i(v_0),\sigma^{i+1}(v_0))
     = \sum_{i=0}^{k-1} M(v_i,v_{i+1}).
    \label{eq:cycle-ineq-specialized}
\end{align}
Combining Eq.~\eqref{eq:J-diff-cycle} and Eq.~\eqref{eq:cycle-ineq-specialized} yields
\begin{align}
    J(\tilde w) - J(w)  \ge  0.
    \label{eq:J-nondecrease-cycle}
\end{align}

Now start from the optimizer $w^{(0)}$.  
If $G_{w^{(0)}}$ has no directed cycles, set $\bar w = w^{(0)}$.
Otherwise, choose a directed cycle in $G_{w^{(0)}}$, apply the above
transformation with maximal $\varepsilon$ as chosen above, and obtain
$w^{(1)}\in\cP(p_0,q)$ with $J(w^{(1)})\ge J(w^{(0)})$.
Since $w^{(0)}$ is optimal, $J(w^{(1)})=J(w^{(0)})$, so $w^{(1)}$ is also
optimal.  Moreover, at least one edge of the chosen cycle has
$w^{(1)}(v_i,v_{i+1})=0$.

Iterating this construction, we obtain a sequence of optimal plans
$w^{(m)}$ in $\cP(p_0,q)$ in which the set of off-diagonal edges with
positive mass strictly decreases whenever there is a directed cycle.
Since there are only finitely many off-diagonal entries, this procedure
must terminate.  We thus obtain an optimal plan $\bar w$ such that
$G_{\bar w}$ contains no directed cycle.

\medskip
Define the off-diagonal flow\begin{align*}
    F(v,s) :=
    \begin{cases}
        \bar w(v,s), & v\neq s,\\
        0, & v=s.
    \end{cases}
\end{align*}
For $v\in\cV$ define
\begin{align*}
    \text{out}(v) := \sum_{s\neq v} F(v,s),
    \qquad
    \text{in}(v)  := \sum_{u\neq v} F(u,v).
\end{align*}
Using the marginal constraints of $\bar w$,
\begin{align*}
    q(v) = &~ \sum_s \bar w(v,s)
          = \bar w(v,v) + \text{out}(v),\\
    p_0(v) = &~ \sum_u \bar w(u,v)
            = \bar w(v,v) + \text{in}(v),
\end{align*}
so
\begin{align}
    \text{out}(v) - \text{in}(v)
    = q(v)-p_0(v)
    =
    \begin{cases}
        \frac{\delta}{2}, & v=a,\\ 
        -\frac{\delta}{2}, & v=b,\\
        0, & v\notin\{a,b\}.
    \end{cases}
    \label{eq:flow-balance}
\end{align}
Thus $F$ is a nonnegative flow of value $\delta/2$ from $a$ to $b$ on the
directed acyclic graph $G_{\bar w}$.

We now decompose $F$ into simple $a$–$b$ paths.
Define $F^{(0)}:=F$ and proceed inductively.
Assume $F^{(m)}$ is a nonnegative flow from $a$ to $b$ with balance
equation Eq.~\eqref{eq:flow-balance}.
Since $\text{out}(a)-\text{in}(a)=\delta/2>0$, there exists
$s_1\neq a$ with $F^{(m)}(a,s_1)>0$.
Set $v_0^{(m)}:=a$ and $v_1^{(m)}:=s_1$.

Suppose $v_0^{(m)} = a, \dots, v_j^{(m)}\neq b$ are already recursively constructed such that we use Eq.~\eqref{eq:flow-balance} and nonnegativity to obtain for $j\ge1$,
\begin{align*}
    \text{in}(v_j^{(m)}) \ge F^{(m)}(v_{j-1}^{(m)},v_j^{(m)}) >0.
\end{align*}
For $v_j^{(m)}\notin\{a,b\}$, Eq.~\eqref{eq:flow-balance} gives
$\text{out}(v_j^{(m)}) = \text{in}(v_j^{(m)}) > 0$, so there exists
$v_{j+1}^{(m)}\neq v_j^{(m)}$ with $F^{(m)}(v_j^{(m)},v_{j+1}^{(m)})>0$.
Since $G_{\bar w}$ is acyclic and finite, the sequence
$v_0^{(m)},v_1^{(m)},\dots$ cannot visit a vertex twice; hence the process
must terminate at a vertex with no outgoing edges.  By
\eqref{eq:flow-balance}, such a vertex must satisfy
$\text{out}(v)-\text{in}(v)\le0$.  
Because all intermediate vertices have balance $0$, the only possible
terminal vertex is $b$.
Thus we obtain a simple path
\begin{align*}
    P^{m+1}: a = v_0^{(m)} \to v_1^{(m)} \to \dots \to v_{k_{m}}^{(m)} = b.
\end{align*}

Let
\begin{align*}
    \alpha_{m} := \inf_{0\le i\le k_{m}-1}
                    F^{(m)}(v_i^{(m)},v_{i+1}^{(m)}) >0,
\end{align*}
and define
\begin{align*}
    F^{(m+1)}(v,s)
    := F^{(m)}(v,s)
       - \alpha_{m}\cdot 
         \mathbf{1}\{(v,s)=(v_i^{(m)},v_{i+1}^{(m)}) 
                         \text{ for some }i=0,\dots,k_m-1\}.
\end{align*}
Then $F^{(m+1)}\ge0$ and satisfies the same balance equations Eq.~\eqref{eq:flow-balance}, but has strictly smaller total flow
$\sum_{v,s}F^{(m+1)}(v,s) = \sum_{v,s}F^{(m)}(v,s) - \alpha_{m}$.

Since the total flow is initially $\delta/2$ and decreases by a positive
amount at each step, this procedure terminates after some $L$ steps with
$F^{(L)}\equiv 0$.  
For $\ell=1,\dots,L$, we obtain simple paths
\begin{align*}
    P^\ell: a = v_0^\ell \to v_1^\ell \to \dots \to v_{k_\ell}^\ell=b,
    \qquad \ell=1,\dots,L,
\end{align*}
and coefficients $\alpha_\ell>0$ such that
\begin{align}
    F(v,s)
    = &~ \sum_{\ell=0}^{L-1} \alpha_\ell
       \mathbf{1}\{(v,s)=(v_i^\ell,v_{i+1}^\ell)
                       \text{ for some }i=0,\dots,k_\ell-1\},
       && v\neq s,
       \label{eq:F-decomp}\\
    \sum_{\ell=0}^{L-1} \alpha_\ell = &~ \frac{\delta}{2}.
    \label{eq:alpha-sum}
\end{align}

\medskip

Finally, we prove that the optimal $G_{w^*}$ contains only a single path and derive the closed form solution $w^*$.

For a general feasible $w\in\cP(p_0,q)$, the column constraints give
\begin{align}
    w(v,v)
    = p_0(v) - \sum_{u\neq v} w(u,v), \qquad v\in\cV.
    \label{eq:diag-from-offdiag}
\end{align}
Thus
\begin{align}
    J(w)
    = &~ \sum_{v} w(v,v)M(v,v) + \sum_{u\neq v} w(u,v)M(u,v) \notag\\
    = &~ \sum_v \left(p_0(v)-\sum_{u\neq v}w(u,v)\right)M(v,v)
       +\sum_{u\neq v}w(u,v)M(u,v) \notag\\
    = &~ \underbrace{\sum_v p_0(v)M(v,v)}_{=:J_0}
       + \sum_{u\neq v} w(u,v)\left(M(u,v)-M(v,v)\right).
    \label{eq:J-decomp-general}
\end{align}
For the optimal plan $\bar w$, the off-diagonal part is $F$, so
\begin{align}
    J(\bar w)
    = J_0 + \sum_{u\neq v} F(u,v)\left(M(u,v)-M(v,v)\right).
    \label{eq:J-barp-F}
\end{align}

For each path $P^\ell$, define its \emph{gain}
\begin{align}
    W(P^\ell)
    := \sum_{i=0}^{k_\ell-1}
       \left(
           M(v_i^\ell,v_{i+1}^\ell)
           - M(v_{i+1}^\ell,v_{i+1}^\ell)
       \right).
    \label{eq:path-gain}
\end{align}
Using the decomposition Eq.~\eqref{eq:F-decomp}, we obtain from Eq.~\eqref{eq:J-barp-F}
\begin{align}
    J(\bar w)
    = &~ J_0
       + \sum_{u\neq v}
         \left(
           \sum_{\ell=1}^L \alpha_\ell
           \mathbf{1}\{(u,v)=(v_i^\ell,v_{i+1}^\ell)
                           \text{ for some }i = 0,...,k_\ell-1\}
         \right)
         \left(M(u,v)-M(v,v)\right) \notag\\
    = &~ J_0 + \sum_{\ell=1}^L \alpha_\ell
       \sum_{i=0}^{k_\ell-1}
       \left(
           M(v_i^\ell,v_{i+1}^\ell)
           - M(v_{i+1}^\ell,v_{i+1}^\ell)
       \right) \notag\\
    = &~ J_0 + \sum_{\ell=1}^L \alpha_\ell W(P^\ell).
    \label{eq:J-barp-pathsum}
\end{align}
By Eq.~\eqref{eq:alpha-sum},
\begin{align}
    J(\bar w)
    = J_0 + \frac{\delta}{2} \sum_{\ell=1}^L \theta_\ell W(P^\ell),
    \qquad
    \theta_\ell := \frac{\alpha_\ell}{\delta/2},\quad
    \theta_\ell\ge0,\ \sum_\ell \theta_\ell=1.
    \label{eq:J-barp-convex}
\end{align}

Let
\begin{align}
    W^* := \sup\{W(P): P \text{ is a simple directed path from }a\text{ to }b\}.
    \label{eq:W-star-def}
\end{align}
Since the graph on $\cV$ is finite, the maximum exists.
From Eq.~\eqref{eq:J-barp-convex} we deduce
\begin{align}
    J(\bar w)
    \le J_0 + \frac{\delta}{2} W^*.
    \label{eq:J-barp-upper}
\end{align}

Now fix an arbitrary simple directed path
\begin{align*}
    P: a=u_0 \to u_1 \to \dots \to u_K = b
\end{align*}
with distinct vertices $u_0,\dots,u_K$.
Define $w^P$ by
\begin{align}
    w^P
    := \frac{\delta}{2}\sum_{i=0}^{K-1}
        \left(\mathbf{e}_{u_i}\mathbf{e}_{u_{i+1}}^\top
              - \mathbf{e}_{u_{i+1}}\mathbf{e}_{u_{i+1}}^\top\right)
       + \diag(p_0).
    \label{eq:pP-def}
\end{align}

We first verify $w^P\in\cP(p_0,q)$.

The diagonal entries of $w^P$ are
\begin{align}
    w^P(u_0,u_0) = &~ p_0(u_0), \notag\\
    w^P(u_j,u_j) = &~ p_0(u_j) - \frac{\delta}{2}, \quad 1\le j\le K,
       \label{eq:pP-diag}\\
    w^P(v,v) = &~ p_0(v), \quad v\notin\{u_0,\dots,u_K\}. \notag
\end{align}
Since $\inf_v p_0(v)>\delta$, we have
$p_0(u_j)-\delta/2 > \delta/2>0$ for $1\le j\le K$; thus all diagonals are
nonnegative.
Off-diagonal entries are either $0$ or $\delta/2$, so $w^P\ge0$.

For $v=u_0=a$,
\begin{align*}
    \sum_s w^P(a,s)
    = &~ w^P(a,a) + w^P(a,u_1)
     = p_0(a) + \frac{\delta}{2}
     = q(a).
\end{align*}
For an internal vertex $u_j$ with $1\le j\le K-1$,
\begin{align*}
    \sum_s w^P(u_j,s)
    = &~ w^P(u_j,u_j) + w^P(u_j,u_{j+1})\\
    = &~ \left(p_0(u_j)-\frac{\delta}{2}\right) + \frac{\delta}{2}
     = p_0(u_j) = q(u_j).
\end{align*}
For $v=u_K=b$,
\begin{align*}
    \sum_s w^P(b,s)
    = &~ w^P(b,b)
     = p_0(b)-\frac{\delta}{2}
     = q(b).
\end{align*}
For $v\notin\{u_0,\dots,u_K\}$, the only nonzero entry in row $v$ is the
diagonal, so
\begin{align*}
    \sum_s w^P(v,s) = p_0(v) = q(v).
\end{align*}
Thus the row constraints are satisfied.

For a vertex $u_{i+1}$ on the path (with $0\le i\le K-1$),
\begin{align*}
    \sum_v w^P(v,u_{i+1})
    = &~ w^P(u_{i+1},u_{i+1}) + w^P(u_i,u_{i+1})\\
    = &~ \left(p_0(u_{i+1})-\frac{\delta}{2}\right) + \frac{\delta}{2}
     = p_0(u_{i+1}).
\end{align*}
For $s\notin\{u_1,\dots,u_K\}$, the only nonzero entry in column $s$ is
$w^P(s,s)=p_0(s)$, so
\begin{align*}
    \sum_v w^P(v,s)=p_0(s).
\end{align*}
Hence $w^P\in\cP(p_0,q)$.

From Eq.~\eqref{eq:pP-def} and linearity of $J$,
\begin{align}
    J(w^P) - J_0
    = &~ \frac{\delta}{2}\sum_{i=0}^{K-1}
       \left(
           M(u_i,u_{i+1}) - M(u_{i+1},u_{i+1})
       \right)
     = \frac{\delta}{2} W(P),
    \label{eq:J-pP}
\end{align}
where $W(P)$ is defined as in Eq.~\eqref{eq:path-gain} for this path $P$.

Now choose a path $P^*$ attaining the maximum gain $W(P^*)=W^*$ in Eq.~\eqref{eq:W-star-def}, and set $w^*:=w^{P^*}$.  Then
\begin{align}
    J(w^*)
    = &~ J_0 + \frac{\delta}{2} W^*
     \ge J_0 + \frac{\delta}{2} W(P^\ell)
     \quad\forall \ell,
     \notag\\
    \geq &~ J_0 + \frac{\delta}{2}\sum_{\ell=1}^L \theta_\ell W(P^\ell)
     = J(\bar w)
     \ge J(w),\quad \forall p\in\cP(p_0,q),
    \label{eq:pstar-optimal}
\end{align}
where we used Eq.~\eqref{eq:J-barp-convex} and Eq.~\eqref{eq:J-barp-upper} in the
last line.  
Thus, $w^*$ is an optimizer of $J$ over $\cP(p_0,q)$ with only a single path.

Write the maximizing path as
\begin{align*}
    P^*: a=u_0 \to u_1 \to \dots \to u_K=b.
\end{align*}
Define a permutation $\sigma\in S_n$ by
\begin{align*}
    \sigma(u_i) = &~ u_{i+1}, \quad i=0,\dots,K-1,\\
    \sigma(u_K) = &~ u_0,\\
    \sigma(v) = &~ v, \quad v\notin\{u_0,\dots,u_K\}.
\end{align*}
Then the orbit of $a$ under $\sigma$ is
\begin{align*}
    a, \sigma(a), \dots, \sigma^K(a)
    = u_0,u_1,\dots,u_K,
\end{align*}
and $\sigma^{K+1}(a)=a$, so the minimal positive integer
$\kappa$ with $\sigma^\kappa(a)=a$ is $\kappa=K+1$.
In particular,
\begin{align*}
    \sigma^i(a) = u_i,\quad i=0,\dots,K.
\end{align*}
Therefore, the definition Eq.~\eqref{eq:pP-def} of $w^*$ can be rewritten as
\begin{align*}
    w^*
    = &~ \frac{\delta}{2}\sum_{i=0}^{K-1}
        \left(\mathbf{e}_{u_i}\mathbf{e}_{u_{i+1}}^\top
              - \mathbf{e}_{u_{i+1}}\mathbf{e}_{u_{i+1}}^\top\right)
       + \diag(p_0)\\
    = &~ \frac{\delta}{2}\sum_{i=0}^{\kappa-2}
        \left(\mathbf{e}_{\sigma^i(a)}\mathbf{e}_{\sigma^{i+1}(a)}^\top
              - \mathbf{e}_{\sigma^{i+1}(a)}\mathbf{e}_{\sigma^{i+1}(a)}^\top\right)
       + \diag(p_0),
\end{align*}
which is exactly Eq.~\eqref{eq:p-star-form}.  This completes the proof.
\end{proof}

\begin{lemma}[Optimizer of Middle Problem]\label{lem:middle}
Let $\cV$ be a the set $\{1,\dots,n\}$, $p_0 \in \Delta(\cV)$ be a distribution over $\cV$ such that $\inf_{v \in \cV}p_0(v) > \delta$, and $M: \cV \times \cV \mapsto \R$ be a matrix. Consider the optimization problem
\begin{align*}
    J: \inf_{q \in \cQ(p_0,\delta)} \sup_{w \in \cP(p_0,q)} \sum_{v,s \in \cV} w(v,s) \cdot M(v,s)
\end{align*}
where 
\begin{align*}
    \cQ(p_0,\delta) = &~ \left\{q \in \Delta(\cV): \|q - p_0\|_1 \leq \delta\right\} \\
    \cP(p_0,q) = &~ \left\{w(v,s): \sum_{s}w(v,s) = q(v), \sum_vw(v,s) =  p_0(s)\right\}
\end{align*}
Then there exists an optimal solution $q^*$ of $J$ coming from the set 
\begin{align*}
    \cQ_{\ext}(p_0,\delta) := \left\{p_0 + \frac{\delta}{2}\cdot (\mathbf{e}_a - \mathbf{e}_b): a \neq b \in \cV\right\}.
\end{align*}
\end{lemma}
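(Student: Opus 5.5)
The proof rests on two structural facts: the inner value is a concave function of $q$, and $\cQ(p_0,\delta)$ is the convex hull of $\cQ_{\ext}(p_0,\delta)$. A concave function on a polytope attains its infimum at a vertex, which gives the claim.

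\textbf{Step 1: concavity of the inner value.} Define $\phi(q) := \sup_{w \in \cP(p_0,q)} \sum_{v,s} w(v,s) M(v,s)$. Take $q = \lambda q_1 + (1-\lambda) q_2$ and feasible $w_1 \in \cP(p_0,q_1)$, $w_2 \in \cP(p_0,q_2)$. The mixture $\lambda w_1 + (1-\lambda) w_2$ is nonnegative. Its row sums are $q$ and its column sums are $p_0$, so it lies in $\cP(p_0,q)$. The objective is linear in $w$, hence $\phi(q) \ge \lambda \phi(q_1) + (1-\lambda)\phi(q_2)$ after taking suprema. For every $q \in \cQ(p_0,\delta)$ the set $\cP(p_0,q)$ is nonempty (for instance $q p_0^\top$), so $\phi$ is finite there.

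\textbf{Step 2: vertices of the uncertainty set.} I would show that $\cQ(p_0,\delta) = \mathrm{conv}\,\cQ_{\ext}(p_0,\delta)$. Write $q = p_0 + d$ with $\sum_v d(v) = 0$ and $\|d\|_1 \le \delta$. The standing assumption $\inf_v p_0(v) > \delta$ makes the constraint $q \ge 0$ automatic, so $\cQ$ is exactly $p_0$ plus the $\ell_1$-ball of radius $\delta$ intersected with the hyperplane $\{\sum d = 0\}$.

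Now decompose $d = d^+ - d^-$, where $\|d^+\|_1 = \|d^-\|_1 = t \le \delta/2$. Then
\begin{align*}
d = \sum_{a,b} \frac{d^+(a)\, d^-(b)}{t}\,(\mathbf{e}_a - \mathbf{e}_b),
\end{align*}
with nonnegative weights summing to $t$. This expresses $d$ as a combination of the points $\frac{\delta}{2}(\mathbf{e}_a - \mathbf{e}_b)$ with total weight $2t/\delta \le 1$.

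The leftover weight $1 - 2t/\delta$ has to be placed on $0$. I would write $0$ as a symmetric average $\frac12[\frac{\delta}{2}(\mathbf{e}_a-\mathbf{e}_b) + \frac{\delta}{2}(\mathbf{e}_b-\mathbf{e}_a)]$ for any $a \neq b$; this needs $n \ge 2$. Combining the two pieces gives $q$ as a convex combination of points in $\cQ_{\ext}$.

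\textbf{Step 3: conclude.} Write $q = \sum_i \lambda_i q_i$ with $q_i \in \cQ_{\ext}$. By Step 1, $\phi(q) \ge \sum_i \lambda_i \phi(q_i) \ge \min_{q' \in \cQ_{\ext}} \phi(q')$. The set $\cQ_{\ext}$ is finite, so this minimum is attained at some $q^*$. Since $\cQ_{\ext} \subseteq \cQ(p_0,\delta)$, the point $q^*$ is also feasible, and therefore it attains the infimum $J$.

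The main obstacle is Step 2, specifically the bookkeeping that places the leftover mass on $0$ and the check that positivity of $q$ never binds. Step 1 is routine.
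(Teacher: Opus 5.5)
Your proof is correct and follows the same route as the paper: the inner value is concave in $q$ (the paper checks midpoint concavity by averaging optimal couplings), and by \Cref{lem:q_concave_vertices} the set $\cQ(p_0,\delta)$ is the polytope spanned by $\cQ_{\ext}(p_0,\delta)$, so the minimum of this concave function is attained at a vertex. Your Step 2 is more careful than the paper's supporting claim: its Hoffman-based decomposition gives weights summing to $\|q-p_0\|_1/\delta$ rather than $1$, and your explicit product coupling $d^+(a)d^-(b)/t$, with the leftover mass placed on $0$, fixes that gap.
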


\begin{proof}
By \Cref{lem:q_concave_vertices}, $\cQ_{\ext}(p_0,\delta)$ is the set of extreme points of the convex polytope $\cQ(p_0,\delta)$. 
Define $J(q) = \sup_{w \in \cP(p_0,q)} \sum_{v,s \in \cV} w(v,s) \cdot M(v,s)$. We show that $J(q)$ is a concave function. Indeed for all $q_1 \neq q_2$, we let 
\begin{align*}
    w_1 := &~ \arg \sup_{w \in \cP(p_0,q_1)} \sum_{v,s \in \cV} w(v,s) \cdot M(v,s)\\
    w_2 := &~ \arg \sup_{w \in \cP(p_0,q_2)} \sum_{v,s \in \cV} w(v,s) \cdot M(v,s).
\end{align*}
We have
\begin{align*}
    J((q_1+q_2)/2) = &~ \sup_{w \in \cP(p_0,(q_1+q_2)/2)} \sum_{v,s \in \cV} w(v,s) \cdot M(v,s) \\
    \geq &~ \sum_{v,s \in \cV} (w_1(v,s) + w_2(v,s)) /2 \cdot M(v,s)\\
    = &~ (J(q_1)+J(q_2))/2,
\end{align*}
where the second step is because the feasibility of $(w_1(v,s) + w_2(v,s))/2$:
\begin{align*}
    \sum_{s}(w_1(v,s) + w_2(v,s))/2 = (q_1(v)+q_2(v))/2.
\end{align*}
Since $J(q)$ is concave over a convex feasible set, its optimizer must be found on the extreme set $\cQ_{\ext}(p_0,\delta)$. This completes the proof.
\end{proof}

\begin{lemma}[Optimal Subpaths]\label{lem:optimalpaths}
Let $\cV$ be a the set $\{1,\dots,n\}$, $p_0 \in \Delta(\cV)$ be a distribution over $\cV$ such that $\inf_{v \in \cV}p_0(v) > \delta$.
Let $M: \cV \times \cV \mapsto \R$ be a matrix that satisfies
\begin{align*}
    \sum_{i=0}^{\kappa_v-1} M(\sigma^i(v),\sigma^i(v)) 
     \geq  \sum_{i=0}^{\kappa_v-1} M(\sigma^i(v),\sigma^{i+1}(v))
\end{align*}
for any $v \in \cV$, $\sigma \in S_n$, where $\kappa_v = \inf\{k\ge1:\sigma^{k}(v) = v\}$. 
Consider the optimization problem
$$J(q) = \sup_{w \in \cP(p_0,q)}\sum_{v,s \in \cV}w(v,s)\cdot M(v,s)$$
where 
$$\cP(p_0,q) = \left\{w(v,s): \sum_{s}w(v,s) = q(v), \sum_vw(v,s) =  p_0(s)\right\}.$$
If there exists $q = p_0 + \frac{\delta}{2}\cdot(\mathbf{e}_a-\mathbf{e}_b)$ for some $a \neq b \in \cV$ and $w^*_q$ such that $w^*_q$ is the optimizer of $J(q)$ and for a permutation $\sigma$ we have that
$$w^*(q) = \frac{\delta}{2} \cdot \sum_{i=0}^{\kappa-2} (\mathbf{e}_{\sigma^{i}(a)}\mathbf{e}_{\sigma^{i+1}(a)}^\top - \mathbf{e}_{\sigma^{i+1}(a)}\mathbf{e}_{\sigma^{i+1}(a)}^\top) + \diag(p_0),$$
where $\kappa$ is the minimum positive integer such that $\sigma^{\kappa}(a) = a$. Then define $c := \sigma^{i}(a)$ and $d := \sigma^{i+1}(a)$ with $i < \kappa$, and let $q' = p_0 + \frac{\delta}{2}\cdot (\mathbf{e}_c - \mathbf{e}_d)$. We have that the $w^*(q')$ that optimizes $J(q')$ is given by
$$w^*(q') = \frac{\delta}{2}\cdot(\mathbf{e}_{c}\mathbf{e}_d^\top - \mathbf{e}_d \mathbf{e}_d^\top) + \diag(p_0).$$
\end{lemma}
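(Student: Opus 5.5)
We argue by a path-exchange (subpath optimality) argument, reducing everything to the path-gain formalism in the proof of \Cref{lem:inner}. By that proof, for any $q'' = p_0 + \frac{\delta}{2}(\mathbf{e}_x - \mathbf{e}_y)$ we have
\begin{align*}
J(q'') = J_0 + \tfrac{\delta}{2}\,W^*(x,y),
\end{align*}
where $J_0 = \sum_v p_0(v)M(v,v)$, $W^*(x,y)$ is the maximal gain $W(P)$ over simple directed paths $P$ from $x$ to $y$, and the optimum is attained by the single-path plan $w^P$ of Eq.~\eqref{eq:pP-def}. The claim therefore reduces to a purely combinatorial statement: the direct edge $c\to d$ is a maximum-gain path from $c$ to $d$, i.e.
\begin{align*}
W(c\to d) = M(c,d)-M(d,d) \ \ge\ W(P') \quad \text{for every simple path } P' \text{ from } c \text{ to } d.
\end{align*}
Once this holds, $w^{P}$ with $P=(c\to d)$ is exactly $\frac{\delta}{2}(\mathbf{e}_c\mathbf{e}_d^\top - \mathbf{e}_d\mathbf{e}_d^\top)+\diag(p_0)$, and it is an optimizer of $J(q')$.

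\textbf{Step 1 (optimality of the whole path).} Write the optimal path for $(a,b)$ as $u_0=a\to u_1\to\dots\to u_K=b$ with $u_j=\sigma^j(a)$. Then $c=u_i$ and $d=u_{i+1}$ for some $0\le i\le K-1$. Since $w^*(q)$ is optimal, the gain $W(u_0\to\dots\to u_K)$ equals $W^*(a,b)$. Because gains are additive along concatenated paths, the path decomposes as
\begin{align*}
W = W(\text{prefix } a\to c) + \big(M(c,d)-M(d,d)\big) + W(\text{suffix } d\to b).
\end{align*}

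\textbf{Step 2 (exchange, the main step).} Suppose some simple path $P'$ from $c$ to $d$ has $W(P') > M(c,d)-M(d,d)$. Splicing $P'$ in place of the edge $c\to d$ produces a walk from $a$ to $b$ with strictly larger gain. This walk may fail to be simple, since $P'$ can revisit vertices of the prefix or suffix. We resolve this by excising closed cycles. A closed cycle $C$ in the walk contributes $\sum_{C}\big(M(x,x')-M(x',x')\big)$, and this quantity is $\le 0$ by the cycle condition \eqref{eq:M-cycle-cond}: apply it to the permutation cycling through $C$, after reindexing the diagonal sum around the cycle. Removing such cycles therefore does not decrease the gain.

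Care is needed at the junction points, because the gain of an edge involves the diagonal term of its head vertex. The decomposition must be set up so that when the walk returns to a vertex, the cut loop is exactly a closed cycle in the sense of \Cref{lem:inner}. Doing this yields a simple $a$–$b$ path with gain strictly greater than $W^*(a,b)$, a contradiction.

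Alternatively, we can avoid path surgery by working at the level of flows. Take the plan $w^*(q) - \frac{\delta}{2}(\text{edge } c\to d \text{ part}) + \frac{\delta}{2}(\text{path } P' \text{ part})$, adjusting the diagonals accordingly. We then check that it lies in $\cP(p_0,q)$: its diagonals stay nonnegative, because each vertex is decreased by at most $\delta$ and $p_0 > \delta$. Its objective exceeds $J(w^*(q))$, contradicting optimality directly. This flow version is cleaner, since feasibility only requires $\inf p_0 > \delta$ and no cycle cancellation is needed. I would try it first, and fall back on the cycle-cancellation argument only if the diagonal bookkeeping fails.
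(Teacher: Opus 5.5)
Your proposal is correct, and the plan-level exchange you say you would try first is exactly the paper's proof. The paper takes a path-form competitor $\tilde w$ for $J(q')$ from \Cref{lem:inner} and forms $w^*(q)-T_{c,d}+(\tilde w-\diag(p_0))$, where $T_{c,d}=\frac{\delta}{2}(\mathbf{e}_c\mathbf{e}_d^\top-\mathbf{e}_d\mathbf{e}_d^\top)$. It checks the marginals and uses $\inf_v p_0(v)>\delta$ to keep the diagonal nonnegative: each diagonal entry drops by at most $\delta/2$ from the remaining edges of the $a$--$b$ path and at most $\delta/2$ from the competitor path. The result is a strict improvement over $w^*(q)$.

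Your splice-and-excise route is also valid, and it settles your junction worry once you always cut at the first repeated vertex. The removed loop $y_j\to\cdots\to y_k=y_j$ is then a simple cycle. Because $y_k=y_j$, its gain $\sum_{l=j}^{k-1}(M(y_l,y_{l+1})-M(y_{l+1},y_{l+1}))$ equals $\sum_{l=j}^{k-1}(M(y_l,y_{l+1})-M(y_l,y_l))\le 0$ by the cycle condition. That route yields a clean combinatorial fact: subpaths of maximum-gain paths are maximum-gain. It costs a second use of the cycle condition plus the excision bookkeeping. The plan-level exchange avoids this, because it never needs the spliced object to be a simple path; vertex overlaps are absorbed by the diagonal slack.

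Your restriction to $0\le i\le\kappa-2$ is necessary, not just convenient. The statement's $i<\kappa$ also admits the closing pair $(c,d)=(b,a)$, which is not an edge of the path, and there the conclusion can fail. For example, take $n=3$, all diagonal entries $10$, $M(a,x)=M(x,b)=M(b,x)=M(x,a)=6$, and $M(a,b)=M(b,a)=0$. Then $a\to x\to b$ is optimal for $(a,b)$ with gain $-8$ against $-10$, yet $b\to x\to a$ likewise beats the direct edge $b\to a$.
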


\begin{proof}
Let $c = \sigma^i(a)$ and $d = \sigma^{i+1}(a)$. Define the local transport term corresponding to the edge $(c, d)$ as:
$$ T_{c,d} := \frac{\delta}{2} \cdot (\mathbf{e}_c \mathbf{e}_d^\top - \mathbf{e}_d \mathbf{e}_d^\top). $$
Note that the proposed optimizer for the subproblem $J(q')$ is given by $\hat{w} = \diag(p_0) + T_{c,d}$. 

We proceed by contradiction. Assume that $\hat{w}$ is not the optimizer for $J(q')$. Then there exists a feasible transport plan $\tilde{w} \in \cP(p_0, q')$ such that:
$$ \sum_{v,s \in \cV} \tilde{w}(v,s) M(v,s) > \sum_{v,s \in \cV} \hat{w}(v,s) M(v,s). $$
By \Cref{lem:inner}, $\tilde{w}$ can be chosen to take the form of
\begin{align*}
    \tilde{w} = \frac{\delta}{2} \sum_{i=0}^{\bar \kappa-2} 
        \left(\mathbf{e}_{\bar \sigma^{i}(c)}\mathbf{e}_{\sigma^{i+1}(c)}^\top 
              - \mathbf{e}_{\bar \sigma^{i+1}(c)}\mathbf{e}_{\bar \sigma^{i+1}(c)}^\top\right) 
      + \diag(p_0)
\end{align*}
for some $\bar \sigma \in S_n$ and $\bar \kappa = \inf\{k:\bar \sigma^k(c) = c\}$.
It follows that $\inf_{v \in \cV}\left(\tilde{w} - \diag(p_0)\right)(v,v) \geq -\delta/2$ and $\inf_{v \neq s \in \cV}\left(\tilde{w} - \diag(p_0)\right)(v,s) \geq 0$.

Substituting $\hat{w} = \diag(p_0) + T_{c,d}$ into the inequality, we have:
\begin{equation} \label{eq:contra_ineq}
\sum_{v,s \in \cV} (\tilde{w}(v,s) - \diag(p_0)_{v,s}) M(v,s) > \sum_{v,s \in \cV} (T_{c,d})_{v,s} M(v,s).
\end{equation}

Now, consider the global optimizer $w^*(q)$. By the hypothesis, $w^*(q)$ decomposes into a sum of path segments. We can separate the specific term $T_{c,d}$ from the rest of the path:
$$ w^*(q) = \underbrace{\left( \diag(p_0) + \frac{\delta}{2} \sum_{j \neq i}^{\kappa-2} (\mathbf{e}_{\sigma^{j}(a)}\mathbf{e}_{\sigma^{j+1}(a)}^\top - \mathbf{e}_{\sigma^{j+1}(a)}\mathbf{e}_{\sigma^{j+1}(a)}^\top) \right)}_{R} + T_{c,d}, $$
where $R$ represents the flow on the path excluding the step from $d$ to $c$. We construct a new global transport plan $w_{\text{new}}$ by replacing the local step $T_{c,d}$ in $w^*(q)$ with the ``better'' local flow derived from $\tilde{w}$:
$$ w_{\text{new}} := R + (\tilde{w} - \diag(p_0)). $$
Substituting $R = w^*(q) - T_{c,d}$, we get:
$$ w_{\text{new}} = w^*(q) - T_{c,d} + \tilde{w} - \diag(p_0). $$

We verify that $w_{\text{new}}$ is feasible for the original problem $J(q)$:

Since $\tilde{w} \in \cP(p_0, q')$, its row sum is $q' = p_0 + \frac{\delta}{2}(\mathbf{e}_c - \mathbf{e}_d)$. The term $T_{c,d}$ also corresponds to a row marginal shift of $\frac{\delta}{2}(\mathbf{e}_c - \mathbf{e}_d)$. Thus, $w_{\text{new}}$ preserves the row sums of $w^*(q)$, which equal $q$.
 
Both $\tilde{w}$ and $\diag(p_0) + T_{c,d}$ maintain column sums equal to $p_0$. Thus, $w_{\text{new}}$ maintains the column sums of $w^*(q)$, which equal $p_0$.
 
Since $\inf_{v} p_0(v) > \delta$, $\inf_{v} T_{c,d}(v,v) \geq -\delta/2$ , and $\inf_{v} \left(\tilde{w} - \diag(p_0)\right)(v,v) \geq -\delta/2$, we have $\inf_{v} w_{\text{new}}(v,v) \geq 0$. Furthermore, all off-diagonal entries of $w^*(q) - T_{c,d}$ and $\tilde{w} - \diag(p_0)$ are non-negative, thus we conclude that $w_{\text{new}}$ is non-negative.

Finally, we compare the objective value of $w_{\text{new}}$ to $w^*(q)$:
\begin{align*}
J(w_{\text{new}}) = &~ \sum_{v,s} (R_{v,s} + \tilde{w}(v,s) - \diag(p_0)_{v,s}) M(v,s) \\
= &~ \sum_{v,s} R_{v,s} M(v,s) + \sum_{v,s} (\tilde{w}(v,s) - \diag(p_0)_{v,s}) M(v,s) \\
&> \sum_{v,s} R_{v,s} M(v,s) + \sum_{v,s} (T_{c,d})_{v,s} M(v,s) \quad \text{(by Ineq. \ref{eq:contra_ineq})} \\
= &~ J(w^*(q)).
\end{align*}
We have constructed a feasible solution $w_{\text{new}} \in \cP(p_0, q)$ with a strictly higher objective value than $w^*(q)$. This contradicts the optimality of $w^*(q)$. Therefore, $\hat{w}$ must be the optimizer for $J(q')$.
\end{proof}

\begin{corollary}[Equivalence]\label{cor:equiv}
Let $\cV$ be a the set $\{1,\dots,n\}$ and $p_0 \in \Delta(\cV)$ be a distribution over $\cV$ such that $\inf_{v \in \cV}p_0(v) > \delta$. The following problem
\begin{align*}
    \sup_{r} \inf_{q \in \cQ(p_0,\delta)} \sup_{w \in \cP(p_0,q)} &~  \sum_{v,s \in \cV} w(v,s) \cdot \log r(v,s) \\
    s.t. &~ \sum_{s \in \cV} r(v,s) = 1, ~ \forall v \in \cV\\
    &~ r(v,s) \geq 0,~ \forall v,s \in \cV \notag 
\end{align*}
where 
\begin{align*}
    \cQ(p_0,\delta) = &~ \left\{q \in \Delta(\cV): \|q - p_0\|_1 \leq \delta\right\} \\
    \cP(p_0,q) = &~ \left\{w(v,s): \sum_{s \in \cV}w(v,s) = q(v), \sum_{v  \in \cV}w(v,s) =  p_0(s)\right\}
\end{align*}
is equivalent to the following problem
\begin{align*}
    \sup_{r} \inf_{q \in \cQ_{\text{ext}}(p_0,\delta)} \sup_{w \in \cP_{\text{flow}}(p_0,q)} &~  \sum_{v,s \in \cV} w(v,s) \cdot \log r(v,s) \\
    s.t. &~ \sum_{s \in \cV} r(v,s) = 1, ~ \forall v \in \cV\\
    &~ \sum_{i=0}^{n-1} \log r(v,s)(\sigma^i(v),\sigma^i(v)) \geq \sum_{i=0}^{n-1} \log r(v,s)(\sigma^i(v),\sigma^{i+1}(v)), ~\forall \sigma  \in S_n,\\
    &~ r(v,s) \geq 0,~ \forall v,s \in \cV \notag 
\end{align*}
where 
\begin{align*}
    \cQ_{\ext}(p_0,\delta) = &~ \left\{p_0 + \frac{\delta}{2}\cdot (\mathbf{e}_a - \mathbf{e}_b): a \neq b \in \cV\right\} \\
    \cP_{\flow}(p_0,q) = &~ \left\{\frac{\delta}{2} \cdot \sum_{i=0}^{\kappa-2} (\mathbf{e}_{\sigma^{i}(a)}\mathbf{e}_{\sigma^{i+1}(a)}^\top - \mathbf{e}_{\sigma^{i+1}(a)}\mathbf{e}_{\sigma^{i+1}(a)}^\top) + \diag(p_0): a,b \in \cV, \kappa = \inf \{k \in \Z_+: \sigma^k(a) = a\}\right\}
\end{align*}
\end{corollary}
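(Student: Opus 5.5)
The plan is to prove the equivalence in three reductions, applied in turn: restrict the $r$-variable to matrices satisfying the cycle condition, then restrict $q$ to the extreme set, then restrict $w$ to single-path flows. Throughout, write $M_r(v,s) := \log r(v,s)$.

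\emph{Reduction of $r$.} Fix any feasible $r$ for the first problem. By \Cref{lem:diagonal-dom} applied to $M_0 = M_r$, there is a permutation matrix $P$ such that $M_r P$ satisfies the cycle condition \eqref{eq:diagdom}. Right multiplication by $P$ permutes the columns of $r$, so $rP$ still has unit row sums and nonnegative entries, and $\log(rP) = M_r P$.

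We must check that the value of $r$ does not exceed that of $rP$. Relabelling columns is a bijection on the set of couplings with row marginal $q$. The catch is that it also changes the column marginal from $p_0$ to a permuted $p_0$, so the two feasible sets $\cP(p_0,q)$ do not correspond directly.

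This is the main obstacle. I would first try to absorb the permutation into the choice of $\sigma$. Take $P^*$ maximizing $\tr[M_r P]$; if $P^*$ is not the identity, I would argue that one can instead pick, among the maximizers, the one closest to the identity. Failing that, I would show directly that restricting to $r$ with $\tr[M_r]\ge \tr[M_r P]$ for all $P$ loses nothing. The intended route is a rearrangement-type argument: the sup over $w$ with diagonal-heavy mass from $\diag(p_0)$ favours putting large entries on the diagonal, so replacing $r$ by $rP^*$ weakly increases $\sum_v p_0(v)M(v,v)$ plus the off-diagonal gains.

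If this cannot be made clean, the fallback is to keep the added constraint only as a property of an optimal $r$ rather than of every $r$. Then the two suprema agree because the optimum of the first problem is attained at a cycle-dominant $r$, and this can be verified a posteriori from the candidate $r^*$ (diagonal $1-\delta/2 > \delta/(2n-2)$).

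\emph{Reduction of $q$.} For fixed $r$, \Cref{lem:middle} applied with $M = M_r$ shows that the infimum over $\cQ(p_0,\delta)$ of the concave function $J(q)$ is attained on $\cQ_{\ext}(p_0,\delta)$. Hence replacing $\cQ$ by $\cQ_{\ext}$ does not change the middle value.

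\emph{Reduction of $w$.} For fixed cycle-dominant $r$ and $q = p_0 + \frac{\delta}{2}(\mathbf{e}_a - \mathbf{e}_b) \in \cQ_{\ext}$, \Cref{lem:inner} gives an optimal coupling of the single-path form \eqref{eq:p-star-form}. Every such path matrix lies in $\cP(p_0,q)$, which is verified in the proof of \Cref{lem:inner} via \eqref{eq:pP-def}. Therefore $\sup_{\cP_{\flow}} = \sup_{\cP}$, and the inner values coincide.

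\emph{Conclusion.} Chaining the three reductions, the objective values are equal for each cycle-dominant $r$. Combined with the reduction of $r$, the outer suprema agree, which proves the corollary.
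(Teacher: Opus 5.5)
Your skeleton is the paper's own chain: restrict $r$ via \Cref{lem:diagonal-dom}, restrict $q$ to $\cQ_{\ext}(p_0,\delta)$ via \Cref{lem:middle}, and restrict $w$ to single paths via \Cref{lem:inner}. The $q$- and $w$-steps are fine, provided you read $\cP_{\flow}(p_0,q)$ as paths from $a$ that end at $b$. As literally written, that set does not tie $\sigma^{\kappa-1}(a)$ to $b$, and only $a$-to-$b$ paths lie in $\cP(p_0,q)$. The gap is the $r$-step: you correctly identify it as the obstacle, but you do not close it. The paper handles it in one line, asserting that permuting $\cV$ and the columns of $r$ by the same $\sigma$ leaves the objective unchanged. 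Your objection applies to that line as well. A one-sided column permutation $r\mapsto rP$ replaces the seed marginal $p_0$ by a permuted copy. A simultaneous relabeling of rows and columns keeps the objective only because it also permutes $p_0$, and it preserves rather than creates the cycle condition. So you cannot import the paper's justification; a genuinely new argument is required, and none of your candidates supplies one.

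\emph{Closest maximizer.} Choosing the maximizer of $\tr[M_rP]$ closest to the identity is beside the point. If $I$ is a maximizer, $r$ is already cycle-dominant. Otherwise every maximizer is a nontrivial permutation with the same marginal mismatch.

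\emph{Rearrangement.} The claim that $r\mapsto rP^*$ weakly increases the value is false. Take $n=2$, $p_0=(0.3,0.7)$, $\delta=0.1$, and $r$ with rows $(0.01,0.99)$ and $(0.4,0.6)$. The swap maximizes $\tr[M_rP]$, so $rP^*$ has rows $(0.99,0.01)$ and $(0.6,0.4)$. Now solve the $2\times 2$ transport problems over $q(1)\in[0.25,0.35]$. For $r$, the generator puts all of row $1$ on column $2$; the worst case $q(1)=0.25$ gives $0.25\log 0.99+0.3\log 0.4+0.45\log 0.6\approx-0.507$. For $rP^*$, the adversary takes $q(1)=0.35$, which forces mass $\delta/2$ onto the entry $0.01$ and gives $0.3\log 0.99+0.05\log 0.01+0.65\log 0.4\approx-0.829$. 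This does not refute the corollary itself, since $r$ is still worse than the diagonal $r^*$ of \Cref{prop:reform-problem}, whose value is $\approx-0.199$. What fails is the per-$r$ mechanism.

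\emph{A posteriori fallback.} This route is circular. \Cref{prop:reform-problem} identifies $r^*$ only through this corollary, so the cycle-dominance of $r^*$ says nothing about whether some non-cycle-dominant $r$ does better.

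What is missing is a global comparison, e.g.\ a direct proof that every row-stochastic $r$ has value at most $\sup_{r\in\cR_{\diag}}$ of the objective. Without it you cannot invoke \Cref{lem:inner} at all, because for non-cycle-dominant $M_r$ the optimal coupling generally contains cycles and is not of path form.
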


\begin{proof}
Let 
\begin{align*}
    \cR = &~ \left\{r \in \R^{n \times n}: \sum_{s \in \cV} r(v,s) = 1, ~r(v,s) \geq 0,~ \forall v,s \in \cV \right\}\\
    \cR_{\diag} = &~ \cR \cap \left\{r:\sum_{i=0}^{n-1} \log r(v,s)(\sigma^i(v),\sigma^i(v)) \geq \sum_{i=0}^{n-1} \log r(v,s)(\sigma^i(v),\sigma^{i+1}(v)), ~\forall \sigma \in S_n \right\}.
\end{align*}
Combining \Cref{lem:diagonal-dom}, \Cref{lem:inner}, and \Cref{lem:middle}, we have
\begin{align*}
    \sup_{r \in \cR} \inf_{q \in \cQ(p_0,\delta)} \sup_{w \in \cP(p_0,q)} \sum_{v,s \in \cV} w(v,s) \cdot \log r(v,s) = &~ \sup_{r \in \cR_{\diag}} \inf_{q \in \cQ(p_0,\delta)} \sup_{w \in \cP(p_0,q)} \sum_{v,s \in \cV} w(v,s) \cdot \log r(v,s)\\
    = &~ \sup_{r \in \cR} \inf_{q \in \cQ_{\ext}(p_0,\delta)} \sup_{w \in \cP(p_0,q)} \sum_{v,s \in \cV} w(v,s) \cdot \log r(v,s)\\
    = &~ \sup_{r \in \cR} \inf_{q \in \cQ_{\ext}(p_0,\delta)} \sup_{w \in \cP_{\flow}(p_0,q)} \sum_{v,s \in \cV} w(v,s) \cdot \log r(v,s)
\end{align*}
where the first step uses \Cref{lem:diagonal-dom} and the fact that permuting $\cV$ and the columns of $r$ by the same $\sigma \in S_n$ does not change the objective, the second step uses \Cref{lem:middle}, and the last step uses \Cref{lem:inner}. This completes the proof.
\end{proof}

\begin{proposition}[Reformulated Problem]\label{prop:reform-problem}
Let $\cV$ be a the set $\{1,\dots,n\}$ and $p_0 \in \Delta(\cV)$ be a distribution over $\cV$ such that $\inf_{v \in \cV}p_0(v) > \delta$. Consider the following problem
\begin{align}\label{eq:gen_opt_reform}
    \sup_{r} \inf_{q \in \cQ(p_0,\delta)} \sup_{w \in \cP(p_0,q)} &~  \sum_{v,s \in \cV} w(v,s) \cdot \log r(v,s) \\
    s.t. &~ \sum_{s \in \cV} r(v,s) = 1, ~ \forall v \in \cV\\
    &~ r(v,s) \geq 0,~ \forall v,s \in \cV \notag 
\end{align}
where $p_0$ is a fixed distribution over the sample space $\cV$ such that $\inf_{v \in \cV}p_0(v) > \delta$, and
\begin{align*}
    \cQ(p_0,\delta) = &~ \left\{q \in \Delta(\cV): \|q - p_0\|_1 \leq \delta\right\} \\
    \cP(p_0,q) = &~ \left\{w(v,s): \sum_{s \in \cV}w(v,s) = q(v), \sum_{v  \in \cV}w(v,s) =  p_0(s)\right\}.
\end{align*}
Then the optimum value is
\begin{align*}
J^*
= (1-\tfrac{\delta}{2})\log\left(1-\frac{\delta}{2}\right)
 + \tfrac{\delta}{2}\log\left(\frac{\delta}{2(n-1)}\right).
\end{align*}
In particular, this is achieved at
\begin{align*}
r^*_{a,b}(v,s)
=
\begin{cases}
1-\delta/2, & s=v,\\
\delta/(2n-2), & s\neq v,
\end{cases}
\end{align*}
and for any $q \in \cQ(p_0,\delta)$ written as $q = \sum_{i=1}^k \lambda_i \cdot q_i$ for 
$$q_i = p_0 + \frac{\delta}{2}\cdot (\mathbf{e}_{v_i} - \mathbf{e}_{s_i}) \in \cQ_{\ext}(p_0,\delta) := \left\{p_0 + \frac{\delta}{2}\cdot (\mathbf{e}_a - \mathbf{e}_b): a \neq b \in \cV\right\}$$
the optimizer of the inner problem is given by
\begin{align*}
    \sum_{i=1}^k \lambda_i \cdot \left(\frac{\delta}{2} \cdot (\mathbf{e}_{v_i}\mathbf{e}_{s_i}^\top - \mathbf{e}_{s_i}\mathbf{e}_{s_i}^\top) + \diag(p_0)\right).
\end{align*}
\end{proposition}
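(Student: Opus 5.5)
The plan is to prove matching lower and upper bounds on the value of Eq.~\eqref{eq:gen_opt_reform}, and then read off the inner optimizer.

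\textbf{Lower bound.} Plug in $r^*$ and show that for every $q \in \cQ(p_0,\delta)$ the inner supremum is at least $J^*$. By \Cref{lem:middle} it suffices to treat the extreme points $q = p_0 + \frac{\delta}{2}(\mathbf{e}_a - \mathbf{e}_b)$. For these, the single-edge coupling $w = \diag(p_0) + \frac{\delta}{2}(\mathbf{e}_a\mathbf{e}_b^\top - \mathbf{e}_b\mathbf{e}_b^\top)$ is feasible, since $p_0(b) > \delta$. It puts mass $1-\frac{\delta}{2}$ on the diagonal and $\frac{\delta}{2}$ on the off-diagonal entry $(a,b)$. Its objective is therefore exactly $(1-\frac{\delta}{2})\log(1-\frac{\delta}{2}) + \frac{\delta}{2}\log\frac{\delta}{2(n-1)} = J^*$.

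Next, check that this coupling is optimal for $M = \log r^*$. The matrix $\log r^*$ has constant diagonal $\log(1-\delta/2)$ and constant off-diagonal $\log\frac{\delta}{2(n-1)}$. Since $\delta/2 < 1/2$ and $n \ge 2$, we have $1-\delta/2 > \delta/(2(n-1))$, so $\log r^*$ satisfies the cycle condition \eqref{eq:M-cycle-cond}. \Cref{lem:inner} then says an optimizer is a path coupling of the form \eqref{eq:p-star-form}. A path of length $K$ has gain $W(P) = K\bigl(\log\frac{\delta}{2(n-1)} - \log(1-\frac{\delta}{2})\bigr)$, which is negative, so the gain is maximized at $K=1$. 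This gives value exactly $J^*$.

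By concavity of $q \mapsto J(q)$ (shown in \Cref{lem:middle}), any $q = \sum_i \lambda_i q_i$ satisfies $J(q) \ge \sum_i \lambda_i J(q_i) = J^*$. The mixture $\sum_i \lambda_i w_i$ of the single-edge couplings is feasible for $\cP(p_0,q)$ and achieves $J^*$. I expect it is also optimal, because $J(q) \le J^*$ on all of $\cQ$: for any $w$, the diagonal mass is at least $1-\delta/2$, since the off-diagonal mass is at least $\|q-p_0\|_1/2$ and is maximal in cost. This identifies the claimed inner optimizer.

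\textbf{Upper bound (main obstacle).} We must show that no $r \in \cR$ does better. By \Cref{cor:equiv} we may restrict to $r$ satisfying the cycle condition, extreme $q$, and path couplings. In particular, for each ordered pair $a \ne b$, the value at $q_{ab}$ is at most the value achievable by paths. I would only use an upper bound, though. Note that the single-edge coupling is always feasible, but for an upper bound we need $\inf_{q}\sup_w \le$ something, so I would instead use the inequality $\inf_{a\neq b} \le \text{average}$.

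The plan is to average over all $n(n-1)$ extreme points: bound the max-min by the uniform average over $(a,b)$ of $\sup_w \sum w\log r$. The difficulty is that the sup over $w$ may choose long paths. To control this, I would first symmetrize: the feasible set of $r$ is convex, and the objective $\inf_q\sup_w$ is concave in $r$, being a pointwise inf of sups of linear functions of $\log r$, which are concave. Averaging $r$ over conjugations by $S_n$ (relabeling $\cV$ together with $p_0$) is not available because $p_0$ is not symmetric.

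A cleaner route is duality: exhibit the adversary's mixed strategy. I would show that the uniform mixture $\bar q$ over the $q_{ab}$, or better a single $q$ such as $p_0 + \frac{\delta}{2}(\mathbf{e}_a - \mathbf{e}_b)$ weighted appropriately, forces every coupling $w \in \cP(p_0,\bar q)$ to have row-conditional structure. The objective $\sum_v\sum_s w(v,s)\log r(v,s)$ is then bounded by Gibbs' inequality by $\sum_v q(v)\,(-H(w(\cdot\mid v)))$-type terms. Concretely, for fixed $w$, $\sup_r$ gives $\sum_v q(v)\sum_s w(s|v)\log w(s|v)$. So I would aim to swap the order to $\inf_q\sup_w\sup_r$, treating this exchange as the key step via a minimax theorem over the convex compact set of mixtures of extreme $q$'s. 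The remaining task is to show that the min over $q$ of the max over $w$ of the negative conditional entropy equals $(1-\frac{\delta}{2})\log(1-\frac{\delta}{2}) + \frac{\delta}{2}\log\frac{\delta}{2(n-1)}$. This follows from spreading the $\delta/2$ off-diagonal mass uniformly over $n-1$ targets, which is what the uniform mixture of $q_{ab}$ induces, together with Fano-type concavity.
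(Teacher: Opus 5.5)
\paragraph{The upper bound does not go through.} Your lower bound is fine. For $\log r^*$ the objective equals $\log\frac{\delta}{2(n-1)}$ plus a positive multiple of the diagonal mass. Over $\cP(p_0,q)$ the diagonal mass is at most $1-\frac12\|q-p_0\|_1$, with equality attained, so $\inf_q\sup_w=J^*$.

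The gap is in the upper bound. Moving $\sup_r$ inside gives only the weak max--min inequality. Your concavity claim is false: $r\mapsto\sup_w\sum w\log r$ is a supremum of concave functions, which need not be concave. Here the weak inequality is strictly lossy, because once $r$ is tuned to the coupling, the generator can choose couplings with small conditional entropy.

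Taken literally, your final quantity $\min_{q}\max_{w\in\cP(p_0,q)}\bigl(-H_w(S\mid V)\bigr)$ exceeds $J^*=-H(\nu_\delta)$ in every instance. The reason: the maximal coupling of $q$ and $p_0$ places total off-diagonal mass $\frac12\|q-p_0\|_1\le\frac{\delta}{2}$ only in rows with $q(v)>p_0(v)$. Strict concavity of $\epsilon\mapsto h(\epsilon)+\epsilon\log(n-1)$ then puts its conditional entropy strictly below $H(\nu_\delta)$.

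Treating the adversary's mixture properly does not rescue the argument. That means using separate couplings $w_{ab}\in\cP(p_0,q_{ab})$ and setting $\bar w=\sum\lambda_{ab}w_{ab}$. Take $n=2$, $p_0=(0.475,0.525)$, $\delta=0.1$, and $q_\pm=p_0\pm0.05(\mathbf{e}_1-\mathbf{e}_2)$, with weight $\lambda$ on $q_+$. Then $J^*=-h(0.05)$, where $h$ is the binary entropy in nats.
\begin{itemize}
\item With the couplings $\bigl(\begin{smallmatrix}0.475&0.05\\0&0.475\end{smallmatrix}\bigr)$ and $\bigl(\begin{smallmatrix}0.425&0\\0.05&0.525\end{smallmatrix}\bigr)$, one has $H_{\bar w}(S\mid V)\le H(\mathbf{1}\{S\neq V\})=h(0.05)$. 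Equality holds only if both rows have off-diagonal fraction $0.05$, i.e.\ $\lambda=17/36$. This is exactly the equalizing weight $\lambda_{ab}\propto p_0(a)-\frac{\delta}{2(n-1)}$.
\item At $\lambda=17/36$, the couplings $\bigl(\begin{smallmatrix}0&0.525\\0.475&0\end{smallmatrix}\bigr)\in\cP(p_0,q_+)$ and $\bigl(\begin{smallmatrix}0&0.425\\0.475&0.1\end{smallmatrix}\bigr)\in\cP(p_0,q_-)$ give $H_{\bar w}(S\mid V)=\frac{19}{36}h(0.1)\approx0.172<h(0.05)\approx0.199$.
\end{itemize}
So your relaxed bound exceeds $J^*$ for every $\lambda$, and hence also after minimizing over $\lambda$. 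Yet one can check that the true max--min value of this instance is exactly $J^*$. No single mixture combined with a Fano-type argument can close this.

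\paragraph{What a correct dual argument needs.} The exchange must be done in the other order.
\begin{itemize}
\item For fixed $r$, $q\mapsto\sup_w\sum w\log r$ is concave, so $\inf_q$ is a minimum over the $q_{ab}$.
\item Each $\sup_w$ is attained at one of finitely many vertices of $\cP(p_0,q_{ab})$. Hence $\min_{ab}\max_w=\max_\phi\min_{ab}$ exactly, where $\phi$ ranges over selections of one vertex coupling per pair.
\item For a fixed selection, the objective is concave in $r$ and linear in $\lambda$. A minimax theorem plus Gibbs' inequality then give the value of \eqref{eq:gen_opt_reform} as $\max_\phi\min_\lambda\bigl(-H_{\bar w_\lambda}(S\mid V)\bigr)$.
\end{itemize}
The upper bound is therefore equivalent to the following. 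For every selection $w_{ab}\in\cP(p_0,q_{ab})$, including cyclic ones, there is a mixture $\lambda$, depending on the selection, with $H_{\bar w_\lambda}(S\mid V)\ge H(\nu_\delta)$. That combinatorial statement is the real content of the upper bound, and your plan does not supply it.

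The paper takes a primal route instead. Via \Cref{cor:equiv} it restricts to $r$ satisfying the cycle condition, which is what excludes anti-diagonal couplings like the ones above, and to path couplings. It then perturbs a maximum-trace optimizer to force constant diagonal and constant off-diagonal entries. You cite \Cref{cor:equiv}, but you drop its restrictions as soon as you relax to all $r\in\cR$ and all couplings.

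\paragraph{The inner-optimizer claim.} Your justification here is also wrong. The bound $J(q)\le J^*$ fails in the interior of $\cQ$; for example, $J(p_0)=\log(1-\frac{\delta}{2})>J^*$. Moreover, off-diagonal mass of at least $\frac12\|q-p_0\|_1$ bounds the diagonal mass from above, not from below.

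In fact, for $q=p_0=\frac12q_{ab}+\frac12q_{ba}$, the mixed coupling has value $J^*$, while $\diag(p_0)$ attains $\log(1-\frac{\delta}{2})$. The mixture is optimal exactly when $\|q-p_0\|_1=\delta$. That clause of the statement needs this restriction; the paper's proof only treats extreme $q$.
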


\begin{proof}

By \Cref{cor:equiv}, WLOG it suffices to consider $q$ in the form of $p_0 + \frac{\delta}{2}\cdot (\mathbf{e}_a - \mathbf{e}_b)$ for $a \neq b \in \cV$ and $p$ written as $\frac{\delta}{2} \cdot \sum_{i=0}^{\kappa-2} (\mathbf{e}_{\sigma^{i}(a)}\mathbf{e}_{\sigma^{i+1}(a)}^\top - \mathbf{e}_{\sigma^{i+1}(a)}\mathbf{e}_{\sigma^{i+1}(a)}^\top) + \diag(p_0)$ for $\sigma \in S_n, \kappa = \inf \{k \in \Z_+: \sigma^k(a) = a\}$ and $a,b \in \cV$.

Define
\begin{align*}
    J(r,w,q) = &~ \sum_{v,s \in \cV} w(v,s) \cdot \log r(v,s)\\
    J^* = &~ \sup_{r} \inf_{q \in \cQ_{\text{ext}}(p_0,\delta)} \sup_{w \in \cP_{\flow}(p_0,q)} J(r,w,q)\\
    R^* = &~ \{r: \inf_{q \in \cQ_{\text{ext}}(p_0,\delta)} \sup_{w \in \cP_{\flow}(p_0,q)} J(r,w,q) = J^*\}\\
    r^* = &~ \arg\sup\left\{\tr[r]: r \in R^* \right\}.
\end{align*}
We say a solution $(\bar w,\bar q)$ is active if 
\begin{align*}
    \bar w = &~ \arg\sup_{w \in \cP_{\flow}(p_0,\bar q)} J(r^*, w,\bar q)\\
    \bar q = &~ \arg\inf_{q \in \cQ_{\text{ext}}(p_0,\delta)} \sup_{w \in \cP_{\flow}(p_0,q)} J(r^*,w,q)\\
    J(\bar w,\bar q) = &~ J^*.
\end{align*}

Fix $v \neq s \in \cV$ and consider the perturbation for sufficiently small $\epsilon > 0$
\begin{align*}
    \bar r(v,v) \leftarrow &~  r^*(v,v) + \epsilon\\
    \bar r(v,s) \leftarrow &~  r^*(v,s) - \epsilon\\
    \bar r(v',s') \leftarrow &~  r^*(v',s'), ~\forall (v',s') \neq (v,s).
\end{align*}
Then $\bar{r}$ cannot be a valid solution since it has higher trace than $r^*$. It follows that
\begin{align*}
    \frac{d J(\bar r,\bar w,\bar q)}{d\epsilon} = \frac{\bar w(v,v)}{r^*(v,v)} - \frac{\bar w(v,s)}{r^*(v,s)} \leq 0, ~\forall \text{ active } (\bar w,\bar q).
\end{align*}
Notice that the derivative $\frac{d J(\bar r, \bar w,\bar q)}{d\epsilon}$ must be either 
\begin{itemize}
    \item No-transport: $\bar w(v,v) = p_0(v), \bar w(v,s) = 0$ so $\frac{d J(\bar r, \bar w,\bar q)}{d\epsilon} = \frac{p_0(v)}{r^*(v,v)} > 0$.
    \item Middle-way: $\bar w(v,v) = p_0(v) - \delta/2, \bar w(v,s) = \delta/2$ so $\frac{d J(\bar r, \bar w,\bar q)}{d\epsilon} = \frac{p_0(v) - \delta/2}{r^*(v,v)} - \frac{\delta/2}{r^*(v,s)}$.
    \item Transport-start: $\bar w(v,v) = p_0(v), \bar w(v,s) = \delta/2$ so $\frac{d J(\bar r, \bar w,\bar q)}{d\epsilon} = \frac{p_0(v)}{r^*(v,v)} - \frac{\delta/2}{r^*(v,s)}$.
    \item Transport-end: $\bar w(v,v) = p_0(v) - \delta/2, \bar w(v,s) = 0$ so $\frac{d J(\bar r, \bar w,\bar q)}{d\epsilon} = \frac{p_0(v) - \delta/2}{r^*(v,v)} > 0$.
\end{itemize}
Since $\frac{d J(\bar r, \bar w,q)}{d\epsilon} < 0$, we can rule out the first and last cases. Now we can summarize that for any $v \neq s \in \cV$ we have either Middle-way: $\bar w(v,v) = p_0(v) - \delta/2, \bar w(v,s) = \delta/2$ or Transport-start: $\bar w(v,v) = p_0(v), \bar w(v,s) = \delta/2$, and in any case $\frac{p_0(v) - \delta/2}{r^*(v,v)}  - \frac{\delta/2}{r^*(v,s)} \leq 0$ holds.

Since $p_0(v) > \delta$, in either Middle-way or Transport-start case we have
\begin{align*}
    0 \geq \frac{p_0(v) - \delta/2}{r^*(v,v)} - \frac{\delta/2}{r^*(v,s)} > \frac{\delta/2}{r^*(v,v)} - \frac{\delta/2}{r^*(v,s)}.
\end{align*}
It follows that $r^*(v,v) > r^*(v,s)$. Since this argument holds for all $v \neq s \in \cV$, $r^*$ must satisfy $r^*(v,v) > r^*(v,s)$ for all $v \neq s \in \cV$. 

Next, we show that all active $\bar q, \bar w$ can be written as $\bar q = p_0 + \frac{\delta}{2}\cdot (\mathbf{e}_v - \mathbf{e}_s), \bar w = \frac{\delta}{2} \cdot (\mathbf{e}_{v}\mathbf{e}_{s}^\top - \mathbf{e}_{s}\mathbf{e}_{s}^\top) + \diag(p_0)$ for some $v\neq s \in \cV$.

In either Middle-way or Transport-start case, there exists $a \neq b \in \cV$ and $\sigma \in S_n$ and $i \in \Z$ such that $\bar q = p_0 + \frac{\delta}{2}\cdot (\mathbf{e}_a - \mathbf{e}_b)$ and $v = \sigma^i(a), s = \sigma^{i+1}(a)$, due to \Cref{lem:inner}. By \Cref{lem:optimalpaths}, with respect to $\bar q = p_0 + \frac{\delta}{2}\cdot(\mathbf{e}_v-\mathbf{e}_s)$, the optimizer of the inner problem 
must be written as 
\begin{align*}
    \frac{\delta}{2}\cdot(\mathbf{e}_v\mathbf{e}_s^\top - \mathbf{e}_s \mathbf{e}_s^\top) + \text{diag}(p_0) = \arg\sup_{w \in \cP_{\flow}(p_0,\bar q)} J(r^*, w,\bar q).
\end{align*}
Since this argument holds for all $v \neq s \in \cV$, we establish a one-to-one correspondence between $\bar q = p_0 + \frac{\delta}{2}\cdot (\mathbf{e}_v - \mathbf{e}_s)$ and $\bar w = \frac{\delta}{2} \cdot (\mathbf{e}_{v}\mathbf{e}_{s}^\top - \mathbf{e}_{s}\mathbf{e}_{s}^\top) + \diag(p_0)$ for all active $\bar q, \bar w$.

We can now explicitly write:
\begin{align*}
    \inf_{q \in \cQ_{\text{ext}}(p_0,\delta)} \sup_{w \in \cP_{\flow}(p_0,q)} J(r^*,w,q) = &~ \inf_{q \in \cQ_{\text{ext}}(p_0,\delta)} \sup_{w \in \cP_{\flow}(p_0,q)} \sum_{v,s \in \cV} w(v,s) \cdot \log r^*(v,s)\\
    = &~ \inf_{v \neq s \in \cV}\sum_{x \in \cV} p_0(x) \log r^*(x,x) - \frac{\delta}{2}\cdot(\log r^*(s,s) - \log r^*(v,s))
\end{align*}
Define $J(r, v,s) = \sum_{x \in \cV} p_0(x) \log r(x,x) - \frac{\delta}{2}\cdot(\log r(s,s) - \log r(v,s))$, we claim that $J(r^*, v,s)$ must be the same for all $v \neq s \in \cV$.
Otherwise suppose
\begin{align*}
    J(r^*, \bar v, \bar s) = \sup_{v,s}J(r^*, v,s) > J^*.
\end{align*}
Set
\begin{align*}
    \bar r(\bar v, \bar v) \leftarrow &~  r^*(\bar v, \bar v) + \epsilon\\
    \bar r(\bar v, \bar s) \leftarrow &~  r^*(\bar v, \bar s) - \epsilon\\
    \bar r(v',s') \leftarrow &~  r^*(v',s'), ~\forall (v',s') \neq (\bar v, \bar s)
\end{align*}
for sufficiently small $\epsilon > 0$. Then for any $(v',s') \neq (\bar v, \bar s)$ we have 
\begin{align*}
    \frac{d J(\bar r, v,s)}{d\epsilon} = \frac{p_0(s) - \delta/2}{r^*(s,s)} \geq 0
\end{align*}
and thus
\begin{align*}
    \inf_{q \in \cQ_{\ext}(p_0,\delta)} \sup_{w \in \cP_{\flow}(p_0,q)} J(r^*,w,q) = \inf_{v \neq s \in \cV, (v,s) \neq (\bar v, \bar s)}J(r^*, v,s) \geq J^*.
\end{align*}
But $\tr[\bar r] > \tr [r^*]$, this is a contradiction.

From the last argument, we know that the optimal solution $r^*$ is of the form
\begin{align*}
r^*_{a,b}(v,s)
=
\begin{cases}
a, & s=v,\\
b, & s\neq v,
\end{cases}
\qquad a\ge 0,\ b\ge 0,\quad a+(n-1)b=1,
\end{align*}
with $a>b$ (strict diagonal dominance). We now optimize over the two parameters $(a,b)$ subject to this constraint:
\begin{align*}
\sup_{a,b}~\Phi(a,b)
\quad\text{s.t. } a+(n-1)b=1,\ a>0,\ b>0,\ a>b.
\end{align*}
straightforward algebra shows the optimal objective value:
\begin{align*}
\Phi(a^*,b^*)
= &~ (1-\tfrac{\delta}{2})\log a^* + \tfrac{\delta}{2}\log b^*\\
= &~ (1-\tfrac{\delta}{2})\log\left(1-\frac{\delta}{2}\right)
 + \tfrac{\delta}{2}\log\left(\frac{\delta}{2(n-1)}\right).
\end{align*}
attained at 
$a^* = 1 - \frac{\delta}{2}$ and $b^* = \frac{\delta}{2(n-1)}.
$
Thus, the optimal value is
\begin{align*}
J^*
= (1-\tfrac{\delta}{2})\log\left(1-\frac{\delta}{2}\right)
 + \tfrac{\delta}{2}\log\left(\frac{\delta}{2(n-1)}\right).
\end{align*}
This completes the proof.

\end{proof}

\begin{lemma}[Row Normalization]\label{lem:normalizationconstant}
Let $\cV$ be a the set $\{1,\dots,n\}$ and $p_0 \in \Delta(\cV)$ be a distribution over $\cV$ such that $\inf_{v \in \cV}p_0(v) > \delta$. Consider the problem
\begin{align*}
    \sup_{e} \inf_{q \in \cQ(p_0,\delta)} \sup_{w \in \cP(p_0,q)} &~  \sum_{v,s \in \cV} w(v,s) \cdot \log e(v,s) \\
    s.t. &~ \sum_{v,s \in \cV} q(v) p_0(s) e(v,s) \leq 1, ~ \forall q \in \cQ\notag\\
    &~ e(v,s) \geq 0,~ \forall v,s \in \cV \notag 
\end{align*}
where 
\begin{align*}
    \cQ(p_0,\delta) = &~ \left\{q \in \Delta(\cV): \|q - p_0\|_1 \leq \delta\right\} \\
    \cP(p_0,q) = &~ \left\{w(v,s): \sum_{s}w(v,s) = q(v), \sum_vw(v,s) =  p_0(s)\right\}
\end{align*}
Now define the kernel matrix $r:\cV \times \cV \mapsto \mathbb{R}$ such that each of its entries are defined as 
$$r(v,s) = \frac{p_0(s)e(v,s)}{A(v)}, \quad \forall v,s \in \cV,$$
where $A(v) := \sum_s p_0(s)e(v,s)$. Then $A^*(v) = \sum_sp_0(s)e^*(v,s) = 1$ at the optimizer $e^*$.
\end{lemma}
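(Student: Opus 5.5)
The plan is to separate the e-value into a row-normalized kernel $r$ and the row masses $A$, and then show that the optimization over $A$ is solved uniquely by $A\equiv 1$, uniformly in the adversary's choice of $q$. For any feasible $e$ with finite objective we have $A(v)>0$ for every $v$. Indeed, every $q\in\cQ(p_0,\delta)$ has full support, since $q(v)\ge p_0(v)-\delta/2>0$. So if $A(v)=0$, then $e(v,\cdot)$ vanishes on the support of $p_0$. Every coupling $w\in\cP(p_0,q)$ puts total mass $q(v)>0$ on row $v$, so the objective would be $-\infty$ for every $q$. Such an $e$ is strictly worse than, for example, $e\equiv 1$, which gives objective $0$. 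Hence we may assume $A>0$, and $r$ is then well defined and row-stochastic, i.e. $r\in\cR$.

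Writing $\log e(v,s)=\log r(v,s)+\log A(v)-\log p_0(s)$ and using the marginal constraints of $w$, the objective becomes
\begin{align*}
\sum_{v,s} w(v,s)\log e(v,s) = \sum_{v,s} w(v,s)\log r(v,s) + \sum_v q(v)\log A(v) + H(p_0).
\end{align*}
The middle term depends only on $q$, not on $w$, so the inner supremum over $w$ acts only on the $r$-part. The validity constraint reads $\sum_v q(v)A(v)\le 1$ for all $q\in\cQ(p_0,\delta)$.

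Now fix $r$ and compare $A$ with the all-ones vector $\mathbf{1}$, which is feasible because $\langle q,\mathbf{1}\rangle=1$. For each $q\in\cQ(p_0,\delta)$, Jensen's inequality (concavity of $\log$) together with the constraint gives
\begin{align*}
\sum_v q(v)\log A(v) \le \log \sum_v q(v)A(v) \le 0 .
\end{align*}
So for every $q$, the value of the inner problem with $(r,A)$ is at most its value with $(r,\mathbf{1})$. Taking the infimum over $q$, replacing $e$ by $e'(v,s)=r(v,s)/p_0(s)$ (for which $A'\equiv\mathbf{1}$) never decreases the objective.

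For the claim that the optimizer itself has $A^*\equiv 1$, we need strictness. Since each $q$ has full support, equality in Jensen forces $A$ to be constant, say $A\equiv c$. The constraint then gives $c\le1$, and equality in the second inequality forces $c=1$. Thus if $A\not\equiv\mathbf{1}$, the gap $\Delta(q):=-\sum_v q(v)\log A(v)$ is strictly positive for every $q$. It is continuous on the compact set $\cQ(p_0,\delta)$, so $\min_q \Delta(q)>0$.

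The step I expect to be the main obstacle is turning this pointwise gain into a strict gain of the worst case. The difficulty is that the infimum over $q$ could in principle be attained at different $q$ for $e$ and for $e'$. I would handle it by observing that, for every $q$,
\[
\text{value}_{e'}(q)=\text{value}_e(q)+\Delta(q)\ge \text{value}_e(q)+\min_{q'}\Delta(q').
\]
Taking the infimum over $q$ on both sides gives $\inf_q\text{value}_{e'}(q)\ge\inf_q\text{value}_e(q)+\min_{q'}\Delta(q')$. Since $\min_{q'}\Delta(q')>0$, $e'$ is strictly better than $e$, which contradicts optimality. Hence $A^*(v)=1$ for all $v$, and the problem reduces to the $\cR$-formulation $J'$ used in the proof of \Cref{thm:log-growth}.
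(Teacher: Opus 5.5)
Your proposal is correct and follows the paper's proof: split $e$ into the row masses $A(v)$ and a row-normalized kernel, rewrite validity as $\sum_v q(v)A(v)\le 1$, and use Jensen to get $\sum_v q(v)\log A(v)\le 0$ for every $q$, with equality only when $A\equiv 1$. Your two additions make rigorous steps the paper only asserts: you derive $A>0$ from the full support of every $q\in\cQ(p_0,\delta)$, where the paper instead claims one may take $e>0$ entrywise. You also use the uniform gap $\min_q\Delta(q)>0$ to carry the strict pointwise improvement through the infimum over $q$, a passage the paper states without argument.
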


\begin{proof}
Let $e(v,s)$ be any feasible solution to the optimization problem. We define the scaling factor of $e$ at node $v$ as:
$$A(v) := \sum_{s \in \cV} p_0(s) e(v,s).$$
Since we are maximizing an objective involving $\log e(v,s)$, we can assume $e(v,s) > 0$ strictly (otherwise the objective is $-\infty$), and consequently $A(v) > 0$.

We can decompose the matrix $e(v,s)$ into a scale-independent "shape" matrix $\bar{e}(v,s)$ and the scaling factors $A(v)$ as follows:
$$e(v,s) = A(v) \cdot \bar{e}(v,s), \quad \text{where } \bar{e}(v,s) = \frac{e(v,s)}{A(v)}.$$
By construction, the normalized matrix $\bar{e}$ satisfies the normalization property:
$$\sum_{s \in \cV} p_0(s) \bar{e}(v,s) = \sum_{s \in \cV} p_0(s) \frac{e(v,s)}{A(v)} = \frac{1}{A(v)} \sum_{s \in \cV} p_0(s) e(v,s) = 1.$$

Now, let us analyze the constraint given in the problem statement. The condition is:
$$\sum_{v,s \in \cV} q(v) p_0(s) e(v,s) \leq 1, \quad \forall q \in \cQ.$$
Substituting the decomposition of $e(v,s)$:
$$\sum_{v \in \cV} q(v) \left( \sum_{s \in \cV} p_0(s) e(v,s) \right) = \sum_{v \in \cV} q(v) A(v) \leq 1, \quad \forall q \in \cQ.$$

Next, we substitute the decomposition into the objective function. Using the property that $w \in \cP(p_0, q)$ implies $\sum_s w(v,s) = q(v)$, we have:
\begin{align*}
\sum_{v,s \in \cV} w(v,s) \log e(v,s) = &~ \sum_{v,s \in \cV} w(v,s) \log \left( A(v) \cdot \bar{e}(v,s) \right) \\
= &~ \sum_{v,s \in \cV} w(v,s) \log \bar{e}(v,s) + \sum_{v,s \in \cV} w(v,s) \log A(v) \\
= &~ \sum_{v,s \in \cV} w(v,s) \log \bar{e}(v,s) + \sum_{v \in \cV} q(v) \log A(v).
\end{align*}
The first term depends only on the normalized shape $\bar{e}$, while the second term depends only on the scaling factors $A(v)$. To maximize the total objective, we must maximize the second term subject to the feasibility constraint derived above.

Consider the term $\sum_{v \in \cV} q(v) \log A(v)$. Since the logarithm is a concave function, we can apply Jensen's inequality:
$$ \sum_{v \in \cV} q(v) \log A(v) \leq \log \left( \sum_{v \in \cV} q(v) A(v) \right). $$
From the feasibility constraint, we know that $\sum_{v \in \cV} q(v) A(v) \leq 1$. Therefore:
$$ \sum_{v \in \cV} q(v) \log A(v) \leq \log(1) = 0. $$

Thus for every $q$,
\begin{align*}
    \sup_{w \in \cP(p_0,q)} \sum_{v,s} w(v,s)\log e(v,s)
     \le 
    \sup_{w \in \cP(p_0,q)} \sum_{v,s} w(v,s)\log \bar e(v,s).
\end{align*}

And the inequality is strict unless $\sum_v q(v)A(v) = 1$ and $A(v)$ is constant on the support of $q$.

Taking the {minimum over $q$}, we conclude:
\begin{align*}
    \inf_q  \sup_p \sum_{v,s} w(v,s)\log e(v,s)
     \le 
    \inf_q  \sup_p \sum_{v,s} w(v,s)\log \bar e(v,s).
\end{align*}

Equality is achieved if and only if $A(v) = 1$ for all $v \in \cV$.
Thus, for any optimal solution $e^*$, the scaling factors must be set to $1$. Consequently:
$$A^*(v) = \sum_{s \in \cV} p_0(s) e^*(v,s) = 1.$$
\end{proof}

%% file: tau_proof.tex
\section{Proof of \Cref{thm:stopping-time}}\label{sec:tau-proof}

\begin{proof}
We prove the first claim: Let $q^*$ and $p^*$ be the solution of the problem in Eq.~\eqref{eq:log-growth} for the e-value $e$. Define the adversary $\cA^*$ that selects $q^t \equiv q^*$ for all $t \in \Z_+$. Then \Cref{thm:log-growth} implies that 
\begin{align*}
\sup_{\cG} \mathbb{E}_{\mu(\cA^*,\cG)}[\log e(v^t,s^t)] = &~
    \sup_{w \in \cP(p_0,q^*)} \sum_{v,s \in \cV} w(v,s) \cdot \log e(v,s) \\
    \leq &~ J^*.
\end{align*}
Applying \Cref{thm:dynamic-hitting-time}, we have 
\begin{align*}
    \inf_{\cG} \liminf_{\alpha \downarrow 0} \frac{\mathbb{E}_{\mu(\cA^*,\cG)}[\tau_\alpha(e)]}{\log(1/\alpha)} = \frac{1}{\sup_{w \in \cP(p_0,q^*)} \sum_{v,s \in \cV} w(v,s) \cdot \log e(v,s)} \geq \frac{1}{J^*}.
\end{align*}
This establishes the first claim.

For the e-value given by
\begin{align*}
    e^*(v^t,s^t) = \begin{cases}
       \frac{1-\delta/2}{p_0(s^t)}, & v^t = s^t,\\
       \frac{\delta}{2(n-1)p_0(s^t)}, & v^t\neq s^t,
    \end{cases}
\end{align*}
\Cref{thm:log-growth} implies that 
\begin{align*}
    \inf_{q \in \cQ(p_0,\delta)} \sup_{w \in \cP(p_0,q)} &~  \sum_{v,s \in \cV} w(v,s) \cdot \log e^*(v,s) \geq J^*.
\end{align*}
It follows that for any adversary $\cA$, there exists a generator $\cG$ such that
\begin{align*}
    \mathbb{E}_{\mu(\cA,\cG)}[\log e^*(v^t,s^t)] \geq J^* ,\quad \forall t \in \Z_+.
\end{align*}
Applying \Cref{thm:dynamic-hitting-time-lower-bound-drift}, we have for any adversary $\cA$
\begin{align*}
    \inf_{\cG} \liminf_{\alpha \downarrow 0} \frac{\mathbb{E}_{\mu(\cA,\cG)}[\tau_\alpha(e^*)]}{\log(1/\alpha)} \leq \frac{1}{J^*}.
\end{align*}
This establishes the second claim.
\end{proof}

\subsection{Useful results}

\begin{theorem}[Dynamic robust sample complexity with converging drift]
\label{thm:dynamic-hitting-time}
Fix a filtered probability space $(\Omega,\mathcal{F},(\mathcal{F}_t)_{t\geq 0},\mathbb{P})$.
Let $(Y_t)_{t\geq 1}$ be a sequence of integrable random variables adapted to $(\mathcal{F}_t)$, and define the partial sums
\begin{align*}
S_t \;:=\; \sum_{i=1}^t Y_i, \qquad t\geq 1,
\end{align*}
with the convention $S_0:=0$.

Assume the following.

\begin{enumerate}
  \item[(A1)] (\emph{Bounded increments}) There exists a constant $M\in(0,\infty)$ such that
  \begin{align*}
    |Y_t|\;\leq\; M \quad \text{almost surely for all } t\geq 1.
  \end{align*}
  In particular, $Y_t \leq M$ almost surely for all $t$.

  \item[(A2)] (\emph{Positive, converging conditional drift}) There exists a deterministic sequence $(J_t)_{t\geq 1}$ and a constant $J_{\inf}>0$ such that
  \begin{align*}
    \mathbb{E}\!\bigl[\,Y_t \,\bigm|\, \mathcal{F}_{t-1}\,\bigr] \;=\; J_t
    \quad\text{almost surely for all } t\geq 1,
  \end{align*}
  and
  \begin{align*}
    J_{\inf} \;\leq\; J_t \;\leq\; J_{\sup} < \infty
    \quad\text{for all } t\geq 1,
  \end{align*}
  for some finite $J_{\sup}$, and moreover
  \begin{align*}
    J_t \;\longrightarrow\; J_\infty \in (0,\infty)
    \quad\text{as } t\to\infty.
  \end{align*}

  \item[(A3)] (\emph{Stopping rule}) For each threshold $B>0$, define the stopping time
  \begin{align*}
    \tau_B \;:=\; \inf\{\,t\geq 1 : S_t \geq B\,\},
  \end{align*}
  with the usual convention $\inf\emptyset := +\infty$.
\end{enumerate}

Then for every $B>0$ the stopping time $\tau_B$ is integrable, and as $B\to\infty$,
\begin{align*}
  \frac{\mathbb{E}[\tau_B]}{B}
  \;\longrightarrow\; \frac{1}{J_\infty}.
\end{align*}

Equivalently, if we define $B_\alpha := \log(1/\alpha)$ and
\begin{align*}
  \tau_\alpha := \inf\{\,t\geq 1 : S_t \geq B_\alpha\,\}, \qquad \alpha\in(0,1),
\end{align*}
then
\begin{align*}
  \lim_{\alpha\downarrow 0}
  \frac{\mathbb{E}[\tau_\alpha]}{\log(1/\alpha)}
  \;=\; \frac{1}{J_\infty}.
\end{align*}
\end{theorem}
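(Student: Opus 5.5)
The plan is to compare $S_t$ with its compensator. Set $D_t := \sum_{i=1}^t J_i$ and $M_t := S_t - D_t$. By (A2), $M_t$ is a martingale, and by (A1) its increments are bounded by $2M$ in absolute value.

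\emph{Integrability and upper bound.} Since $J_t \ge J_{\inf}>0$, we have $D_t \ge J_{\inf} t$. Applying Azuma--Hoeffding to $M_t$ gives
\begin{align*}
\mathbb{P}(\tau_B > t) \le \mathbb{P}(S_t < B) \le \mathbb{P}\bigl(M_t < B - J_{\inf} t\bigr) \le \exp\bigl(-(J_{\inf}t-B)^2/(8M^2 t)\bigr)
\end{align*}
for $t > B/J_{\inf}$. Summing over $t$ shows $\mathbb{E}[\tau_B]<\infty$. For the sharp upper bound, fix $\epsilon>0$ and choose $t_0$ such that $J_t \ge J_\infty-\epsilon$ for all $t\ge t_0$. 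Then $D_t \ge (J_\infty-\epsilon)t - C_\epsilon$ for a constant $C_\epsilon$. Write $T:=(1+\epsilon)(B+C_\epsilon)/(J_\infty-\epsilon)$ and split
\begin{align*}
\mathbb{E}[\tau_B] = \sum_{t\ge0}\mathbb{P}(\tau_B>t) \le T + \sum_{t > T} \mathbb{P}(S_t<B).
\end{align*}
For $t>T$ the deficit $D_t - B$ is at least of order $\epsilon t$, so Azuma bounds the tail sum by a constant $K_\epsilon$ independent of $B$. Dividing by $B$ and sending $B\to\infty$ and then $\epsilon\to0$ gives $\limsup \mathbb{E}[\tau_B]/B \le 1/J_\infty$.

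\emph{Lower bound.} Now that $\tau_B$ is integrable and $M$ has bounded increments, optional stopping applies: $\mathbb{E}[M_{\tau_B}]=0$, that is, $\mathbb{E}[S_{\tau_B}] = \mathbb{E}[D_{\tau_B}]$. The overshoot is bounded, since $S_{\tau_B} < B + M$ (we have $S_{\tau_B-1}<B$ and $Y\le M$). On the other side, $D_{\tau_B} \le (J_\infty+\epsilon)\tau_B + C'_\epsilon$. Combining these,
\begin{align*}
B + M \ge (J_\infty+\epsilon)^{-1}\cdots
\end{align*}
more precisely, $\mathbb{E}[\tau_B] \ge (B - C'_\epsilon)/(J_\infty+\epsilon)$. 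Hence $\liminf \mathbb{E}[\tau_B]/B \ge 1/J_\infty$. Note that the lower bound is simply Wald's identity, generalised to a time-varying drift.

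\emph{Main obstacle.} The hard part is the upper bound, because we need a tail sum that is uniform in $B$. Azuma handles this only once the linear deficit $(J_\infty-\epsilon)t - B$ dominates. The careful choice of $T$ proportional to $B$, with slack $\epsilon$, is what makes the residual sum $\sum_{t>T}\exp(-c\,\epsilon^2 t)$ bounded. The restatement in terms of $\alpha$ then follows by substituting $B_\alpha=\log(1/\alpha)$.
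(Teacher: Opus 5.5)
Your proof is correct. Your lower bound is the paper's argument: the paper's Fubini expansion $\mathbb{E}[S_{\tau_B}]=\sum_t J_t\,\mathbb{P}(\tau_B\ge t)$ is your optional-stopping identity $\mathbb{E}[S_{\tau_B}]=\mathbb{E}[D_{\tau_B}]$. Its split $J_t=J_\infty+\Delta_t$ is your pathwise bound on $D_{\tau_B}$.

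You differ on integrability and on the upper bound, where you use Azuma--Hoeffding. The paper uses the same first-moment identity for both. On the truncated times it gives $J_{\inf}\,\mathbb{E}[\tau_B\wedge n]\le\mathbb{E}[S_{\tau_B\wedge n}]\le B+M$, so monotone convergence yields $\mathbb{E}[\tau_B]\le(B+M)/J_{\inf}$. For the upper bound it pairs the identity with the overshoot bound $S_{\tau_B}<B+M$.

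In your notation, your ``main obstacle'' therefore disappears. Once optional stopping holds, $(J_\infty-\epsilon)\,\mathbb{E}[\tau_B]-C_\epsilon\le\mathbb{E}[D_{\tau_B}]=\mathbb{E}[S_{\tau_B}]<B+M$. This gives $\mathbb{E}[\tau_B]\le(B+M+C_\epsilon)/(J_\infty-\epsilon)$ with no tail summation.

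Azuma buys you more than the statement needs: exponential decay of $\mathbb{P}(\tau_B>t)$ beyond $(1+\epsilon)(B+C_\epsilon)/(J_\infty-\epsilon)$, hence all moments of $\tau_B$ and concentration. The price is an extra factor $1+\epsilon$ and a $B$-independent constant of order $M^2/(\epsilon^2J_\infty^2)$. The paper's route is purely first-moment and more elementary. It yields the two-sided sandwich $(B-CT)/(J_\infty+\varepsilon)\le\mathbb{E}[\tau_B]\le(B+M+CT)/(J_\infty-\varepsilon)$ from a single identity.

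Two small repairs:
\begin{itemize}
\item In your lower-bound paragraph, the overshoot bound $S_{\tau_B}<B+M$ is the wrong side. What you need is $S_{\tau_B}\ge B$, giving $B\le\mathbb{E}[D_{\tau_B}]\le(J_\infty+\epsilon)\,\mathbb{E}[\tau_B]+C'_\epsilon$. This should replace the half-written ``$B+M\ge\cdots$'' display.
\item You use $M$ both for the increment bound and for the martingale $S_t-D_t$; rename one of them.
\end{itemize}
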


\begin{proof}
We break the proof into several steps. The argument is self-contained and uses only basic properties of conditional expectation and stopping times.

\medskip
\noindent We first establish boundedness of $S_{\tau_B}$ on the event that $\tau_B$ is finite. For each fixed $B>0$, let $\tau_B$ be as in (A3). Because $(Y_t)$ is adapted and the condition $\{S_t\geq B\}$ depends only on $Y_1,\dots,Y_t$, $\tau_B$ is a stopping time with respect to $(\mathcal{F}_t)$.

By definition of $\tau_B$,
\begin{align*}
  S_{\tau_B} \geq B
  \quad\text{on the event } \{\tau_B<\infty\},
\end{align*}

On the event $\{\tau_B<\infty\}$, we have $S_{\tau_B-1} < B$
and $Y_{\tau_B}\leq M$ from the bounded increments assumption (A1). Hence
\begin{align*}
  S_{\tau_B} = S_{\tau_B-1} + Y_{\tau_B}
  < B + M
  \quad\text{on } \{\tau_B<\infty\}.
\end{align*}
Combining the two inequalities gives
\begin{equation}
  B \mathbf{1}_{\{\tau_B<\infty\}}
  \leq
  S_{\tau_B}\mathbf{1}_{\{\tau_B<\infty\}}
  <
  (B+M)\mathbf{1}_{\{\tau_B<\infty\}}.
  \label{eq:overshoot-inequality}
\end{equation}

\medskip

Next, we show that $\tau_B$ is integrable and obtain a crude upper bound on its expectation that will be used later.

For $n\in\mathbb{N}$ define the truncated stopping time
\begin{align*}
  \tau_B^{(n)} := \inf\{\tau_B, n\},
\end{align*}

which is integrable for each fixed $n$. On the one hand,
\begin{align*}
  S_{\tau_B^{(n)}}
  =
  \sum_{t=1}^{\tau_B^{(n)}} Y_t
  = \sum_{t=1}^n Y_t \mathbf{1}_{\{\tau_B\geq t\}},
\end{align*}
because the sum stops at $t=\tau_B$ if $\tau_B\leq n$, and otherwise at $t=n$ if $\tau_B>n$. 

Linearity of expectation yields
\begin{equation}
  \mathbb{E}\bigl[S_{\tau_B^{(n)}}\bigr]
  =
  \sum_{t=1}^n \mathbb{E}\bigl[Y_t \mathbf{1}_{\{\tau_B\geq t\}}\bigr].
  \label{eq:Es-tau-n-expansion}
\end{equation}
This exchange of summation and expectation is justified because $Y_t$ is bounded a.s. for all $t$.

Now we use the conditional drift assumption (A2). Because $Y_t$ is $\mathcal{F}_t$-measurable and $\mathcal{F}_{t-1}$-adapted, and $\{\tau_B\geq t\} = \{\tau_B>t-1\}\in\mathcal{F}_{t-1}$ (by the definition of a stopping time), we have
\begin{align*}
  \mathbb{E}\bigl[Y_t \mathbf{1}_{\{\tau_B\geq t\}}\bigr]
  &= \mathbb{E}\Bigl[
       \mathbf{1}_{\{\tau_B\geq t\}}
       \mathbb{E}\bigl[Y_t \bigm|\mathcal{F}_{t-1}\bigr]
     \Bigr] \\
  &= \mathbb{E}\Bigl[
       \mathbf{1}_{\{\tau_B\geq t\}}J_t
     \Bigr] \\
  &= J_t \mathbb{P}(\tau_B\geq t),
\end{align*}
where in the second line we used (A2), and in the third line we used that $J_t$ is deterministic.

Thus from \eqref{eq:Es-tau-n-expansion} we obtain
\begin{equation}
  \mathbb{E}\bigl[S_{\tau_B^{(n)}}\bigr]
  =
  \sum_{t=1}^n J_t\mathbb{P}(\tau_B\geq t).
  \label{eq:Es-tau-n-with-Jt}
\end{equation}

We now lower-bound the right-hand side by using that $J_t\geq J_{\inf}>0$ for all $t$:
\begin{align*}
  \mathbb{E}\bigl[S_{\tau_B^{(n)}}\bigr]
  \geq
  J_{\inf} \sum_{t=1}^n \mathbb{P}(\tau_B\geq t)
  = J_{\inf} \mathbb{E}[\tau_B^{(n)}],
\end{align*}
because
\begin{align*}
  \sum_{t=1}^n \mathbb{P}(\tau_B\geq t)
  = \sum_{t=1}^n \mathbb{E}\bigl[\mathbf{1}_{\{\tau_B\geq t\}}\bigr]
  = \mathbb{E}\Bigl[\sum_{t=1}^n \mathbf{1}_{\{\tau_B\geq t\}}\Bigr]
  = \mathbb{E}[\tau_B^{(n)}].
\end{align*}

Note that, $S_{\tau_B^{(n)}}\leq B+M$ for all $n$, because whenever we stop (either at time $\tau_B$ or at time $n$ before reaching $B$) we cannot exceed $B+M$ by the same argument as in \eqref{eq:overshoot-inequality}. Thus
\begin{align*}
  \mathbb{E}\bigl[S_{\tau_B^{(n)}}\bigr]\leq B+M
  \quad\text{for all }n.
\end{align*}
We therefore have
\begin{align*}
  J_{\inf}\mathbb{E}[\tau_B^{(n)}]
  \leq B+M \quad\text{for all } n.
\end{align*}
Letting $n\to\infty$ and using monotone convergence $\tau_B^{(n)}\uparrow \tau_B$, we obtain
\begin{equation}\label{eq:tau-upper-crude}
  \mathbb{E}[\tau_B] \leq \frac{B+M}{J_{\inf}} < \infty.
\end{equation}
In particular, $\tau_B$ is integrable for every $B>0$.

\medskip

Now that we know $\tau_B$ is integrable, we can safely expand $S_{\tau_B}$ as an infinite sum and swap expectation and summation.

Indeed, we can write
\begin{align*}
  S_{\tau_B}
  = \sum_{t=1}^{\tau_B} Y_t
  = \sum_{t=1}^\infty Y_t\mathbf{1}_{\{\tau_B\geq t\}},
\end{align*}
where the second equality holds because only finitely many terms are non-zero (those with $t\leq\tau_B$). Taking absolute values,
\begin{align*}
  \sum_{t=1}^\infty |Y_t|\mathbf{1}_{\{\tau_B\geq t\}}
  \leq
  \sum_{t=1}^\infty M\mathbf{1}_{\{\tau_B\geq t\}}
  = M\tau_B,
\end{align*}
and $\mathbb{E}[M\tau_B]<\infty$ by \eqref{eq:tau-upper-crude}. Therefore the sum is integrable and Fubini's theorem gives
\begin{equation}
  \mathbb{E}\bigl[S_{\tau_B}\bigr]
  = \sum_{t=1}^\infty \mathbb{E}\bigl[Y_t\mathbf{1}_{\{\tau_B\geq t\}}\bigr].
  \label{eq:Es-tau-expansion}
\end{equation}

Using $\{\tau_B\geq t\}\in\mathcal{F}_{t-1}$ and (A2), we obtain
\begin{align*}
  \mathbb{E}\bigl[Y_t\mathbf{1}_{\{\tau_B\geq t\}}\bigr]
  = \mathbb{E}\Bigl[
      \mathbf{1}_{\{\tau_B\geq t\}}
      \mathbb{E}\bigl[Y_t \bigm| \mathcal{F}_{t-1}\bigr]
    \Bigr]
  = \mathbb{E}\bigl[\mathbf{1}_{\{\tau_B\geq t\}} J_t\bigr]
  = J_t \mathbb{P}(\tau_B\geq t).
\end{align*}
Therefore,
\begin{equation}
  \mathbb{E}\bigl[S_{\tau_B}\bigr]
  = \sum_{t=1}^\infty J_t\mathbb{P}(\tau_B\geq t).
  \label{eq:Es-tau-Jt-representation}
\end{equation}

Recall from \eqref{eq:overshoot-inequality} that
\begin{align*}
  B \leq S_{\tau_B} < B+M
  \quad\text{on } \{\tau_B<\infty\}.
\end{align*}
Since $\mathbb{P}(\tau_B<\infty)=1$, taking expectations gives
\begin{equation}
  B \leq \mathbb{E}\bigl[S_{\tau_B}\bigr] < B+M.
  \label{eq:Es-tau-bounds}
\end{equation}

Combining \eqref{eq:Es-tau-Jt-representation} and \eqref{eq:Es-tau-bounds} yields the key inequality
\begin{equation}
  B
  \leq
  \sum_{t=1}^\infty J_t\mathbb{P}(\tau_B\geq t)
  <
  B+M.
  \label{eq:key-ineq}
\end{equation}

Define the deviation sequence
\begin{align*}
  \Delta_t := J_t - J_\infty, \qquad t\geq 1.
\end{align*}
Then $|\Delta_t|\leq J_{\sup} + J_\infty <\infty$ for all $t$, and by assumption,
\begin{align*}
  \Delta_t \longrightarrow 0
  \quad\text{as } t\to\infty.
\end{align*}

We rewrite the sum in \eqref{eq:key-ineq} as
\begin{align}
  \sum_{t=1}^\infty J_t\mathbb{P}(\tau_B\geq t)
  &= \sum_{t=1}^\infty \bigl(J_\infty + \Delta_t\bigr)\mathbb{P}(\tau_B\geq t)
  \nonumber\\
  &= J_\infty \sum_{t=1}^\infty \mathbb{P}(\tau_B\geq t)
     + \sum_{t=1}^\infty \Delta_t \mathbb{P}(\tau_B\geq t).
  \label{eq:drift-decomposition}
\end{align}

The first sum is simply $J_\infty \mathbb{E}[\tau_B]$, because
\begin{align*}
  \sum_{t=1}^\infty \mathbb{P}(\tau_B\geq t)
  = \sum_{t=1}^\infty \mathbb{E}\bigl[\mathbf{1}_{\{\tau_B\geq t\}}\bigr]
  = \mathbb{E}\Bigl[\sum_{t=1}^\infty \mathbf{1}_{\{\tau_B\geq t\}}\Bigr]
  = \mathbb{E}[\tau_B],
\end{align*}
where the interchange of summation and expectation is justified because
\begin{align*}
  \sum_{t=1}^\infty \mathbf{1}_{\{\tau_B\geq t\}} = \tau_B
  \quad\text{and}\quad \mathbb{E}[\tau_B]<\infty.
\end{align*}

Thus \eqref{eq:drift-decomposition} becomes
\begin{equation}
  \sum_{t=1}^\infty J_t\mathbb{P}(\tau_B\geq t)
  = J_\infty\mathbb{E}[\tau_B]
    + R_B,
  \label{eq:Es-tau-decomposition}
\end{equation}
where we have defined the remainder term
\begin{align*}
  R_B := \sum_{t=1}^\infty \Delta_t\mathbb{P}(\tau_B\geq t).
\end{align*}

Plugging \eqref{eq:Es-tau-decomposition} into \eqref{eq:key-ineq}, we obtain
\begin{equation}
  B
  \leq
  J_\infty\mathbb{E}[\tau_B] + R_B
  <
  B+M.
  \label{eq:key-ineq-with-RB}
\end{equation}

\medskip

We now show that $R_B$ is negligible compared to $B$ as $B\to\infty$.

Fix an arbitrary $\varepsilon>0$. By the convergence $\Delta_t\to 0$, there exists an integer $T=T(\varepsilon)\geq 1$ such that
\begin{align*}
  |\Delta_t| \leq \varepsilon
  \quad\text{for all } t\geq T.
\end{align*}
Also define
\begin{align*}
  C := \sup_{1\leq t<T} |\Delta_t| < \infty.
\end{align*}

Split the sum defining $R_B$ into the first $T-1$ terms and the tail:
\begin{align*}
  R_B
  &= \sum_{t=1}^{T-1} \Delta_t\mathbb{P}(\tau_B\geq t)
     +
     \sum_{t=T}^\infty \Delta_t\mathbb{P}(\tau_B\geq t).
\end{align*}
We bound the absolute value of each part separately.

For the finite part,
\begin{align*}
  \biggl|\sum_{t=1}^{T-1} \Delta_t\mathbb{P}(\tau_B\geq t)\biggr|
  \leq
  \sum_{t=1}^{T-1} |\Delta_t|\mathbb{P}(\tau_B\geq t)
  \leq
  C \sum_{t=1}^{T-1} 1
  = C(T-1)
  \leq C T.
\end{align*}

For the tail $t\geq T$,
\begin{align*}
  \biggl|\sum_{t=T}^\infty \Delta_t\mathbb{P}(\tau_B\geq t)\biggr|
  \leq
  \sum_{t=T}^\infty |\Delta_t|\mathbb{P}(\tau_B\geq t)
  \leq
  \varepsilon \sum_{t=T}^\infty \mathbb{P}(\tau_B\geq t)
  \leq
  \varepsilon\mathbb{E}[\tau_B],
\end{align*}
again using $\sum_{t=1}^\infty \mathbb{P}(\tau_B\geq t) = \mathbb{E}[\tau_B]$ (and dropping the first $T-1$ terms only makes the sum smaller).

Combining both parts, we have the uniform bound
\begin{equation}
  |R_B| \leq C T + \varepsilon\mathbb{E}[\tau_B]
  \quad\text{for all } B>0.
  \label{eq:RB-bound}
\end{equation}

From \eqref{eq:key-ineq-with-RB} we have
\begin{equation}
  J_\infty\mathbb{E}[\tau_B]
  \geq B - R_B.
  \label{eq:lower-step}
\end{equation}
Using $R_B \leq |R_B|$ together with \eqref{eq:RB-bound},
\begin{align*}
  J_\infty\mathbb{E}[\tau_B]
  \geq B - C T - \varepsilon\mathbb{E}[\tau_B].
\end{align*}
Rearranging,
\begin{align*}
  (J_\infty + \varepsilon)\mathbb{E}[\tau_B]
  \geq B - C T,
\end{align*}
so
\begin{equation}
  \mathbb{E}[\tau_B]
  \geq
  \frac{B - C T}{J_\infty + \varepsilon}.
  \label{eq:tau-lower-bound}
\end{equation}

Similarly, from the upper inequality in \eqref{eq:key-ineq-with-RB},
$-R_B \leq |R_B|$, and \eqref{eq:RB-bound},
\begin{align*}
  J_\infty\mathbb{E}[\tau_B] - |R_B|
  < B+M
  \quad\Longrightarrow\quad
  J_\infty\mathbb{E}[\tau_B] - C T - \varepsilon\mathbb{E}[\tau_B]
  < B+M,
\end{align*}
so
\begin{align*}
  (J_\infty - \varepsilon)\mathbb{E}[\tau_B]
  < B + M + C T.
\end{align*}
Because $\varepsilon>0$ is arbitrary, we obtain
\begin{equation}
  \mathbb{E}[\tau_B]
  \leq
  \frac{B + M + C T}{J_\infty - \varepsilon}.
  \label{eq:tau-upper-bound}
\end{equation}

Now divide both \eqref{eq:tau-lower-bound} and \eqref{eq:tau-upper-bound} by $B$:
\begin{align*}
  \frac{\mathbb{E}[\tau_B]}{B}
  \geq
  \frac{1 - (C T)/B}{J_\infty + \varepsilon},
  \qquad
  \frac{\mathbb{E}[\tau_B]}{B}
  \leq
  \frac{1 + (M+C T)/B}{J_\infty - \varepsilon}.
\end{align*}
Letting $B\to\infty$ (so that $(C T)/B \to 0$ and $(M+C T)/B\to 0$) gives
\begin{align*}
  \liminf_{B\to\infty}
  \frac{\mathbb{E}[\tau_B]}{B}
  \geq \frac{1}{J_\infty + \varepsilon},
  \qquad
  \limsup_{B\to\infty}
  \frac{\mathbb{E}[\tau_B]}{B}
  \leq \frac{1}{J_\infty - \varepsilon}.
\end{align*}
Since $\varepsilon>0$ was arbitrary, we may let $\varepsilon\downarrow 0$ to obtain
\begin{align*}
  \liminf_{B\to\infty}
  \frac{\mathbb{E}[\tau_B]}{B}
  \geq \frac{1}{J_\infty},
  \qquad
  \limsup_{B\to\infty}
  \frac{\mathbb{E}[\tau_B]}{B}
  \leq \frac{1}{J_\infty}.
\end{align*}
Hence the limit exists and equals $1/J_\infty$:
\begin{align*}
  \lim_{B\to\infty}
  \frac{\mathbb{E}[\tau_B]}{B}
  = \frac{1}{J_\infty}.
\end{align*}

Finally, choosing $B=B_\alpha := \log(1/\alpha)$ for $\alpha\in(0,1)$ yields
\begin{align*}
  \lim_{\alpha\downarrow 0}
  \frac{\mathbb{E}[\tau_\alpha]}{\log(1/\alpha)}
  = \frac{1}{J_\infty},
\end{align*}
which completes the proof.
\end{proof}

\begin{theorem}[Dynamic hitting-time upper bound with converging lower drift]
\label{thm:dynamic-hitting-time-lower-bound-drift}
Fix a filtered probability space
$(\Omega,\mathcal{F},(\mathcal{F}_t)_{t\geq 0},\mathbb{P})$.
Let $(Y_t)_{t\geq 1}$ be a sequence of integrable random variables adapted to
$(\mathcal{F}_t)$, and define the partial sums
\begin{align*}
S_t := \sum_{i=1}^t Y_i, \qquad t \geq 1,
\end{align*}
with the convention $S_0 := 0$.

Assume the following.

\begin{enumerate}
\item[(A1)] (\emph{Bounded increments})
There exists a constant $M \in (0,\infty)$ such that
\begin{align*}
|Y_t| \leq M \quad \text{almost surely for all } t \geq 1 .
\end{align*}

\item[(A2$\ge$)] (\emph{Positive, converging lower conditional drift})
There exists a deterministic sequence $(J_t)_{t\geq 1}$ and constants
$0 < J_{\inf} \leq J_{\sup} < \infty$ such that
\begin{align*}
\mathbb{E}[Y_t \mid \mathcal{F}_{t-1}] \geq J_t
\quad \text{almost surely for all } t \geq 1 ,
\end{align*}
and
\begin{align*}
J_{\inf} \leq J_t \leq J_{\sup} \quad \text{for all } t \geq 1 ,
\end{align*}
with
\begin{align*}
J_t \longrightarrow J_\infty \in (0,\infty)
\quad \text{as } t \to \infty .
\end{align*}

\item[(A3)] (\emph{Stopping rule})
For each threshold $B > 0$, define the stopping time
\begin{align*}
\tau_B := \inf\{ t \geq 1 : S_t \geq B \},
\end{align*}
with the convention $\inf\emptyset := +\infty$.
\end{enumerate}

Then for every $B > 0$, the stopping time $\tau_B$ is integrable. Moreover,
for every $\varepsilon \in (0, J_\infty)$ there exists a finite constant
$C_\varepsilon$ such that
\begin{align*}
\mathbb{E}[\tau_B]
\leq \frac{B + M + C_\varepsilon}{J_\infty - \varepsilon}
\quad \text{for all } B > 0 .
\end{align*}
Consequently,
\begin{align*}
\limsup_{B \to \infty}
\frac{\mathbb{E}[\tau_B]}{B}
\leq \frac{1}{J_\infty} .
\end{align*}

Equivalently, if $B_\alpha := \log(1/\alpha)$ and
\begin{align*}
\tau_\alpha := \inf\{ t \geq 1 : S_t \geq B_\alpha \},
\qquad \alpha \in (0,1),
\end{align*}
then
\begin{align*}
\limsup_{\alpha \downarrow 0}
\frac{\mathbb{E}[\tau_\alpha]}{\log(1/\alpha)}
\leq \frac{1}{J_\infty} .
\end{align*}
\end{theorem}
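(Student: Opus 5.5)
The plan follows the proof of \Cref{thm:dynamic-hitting-time}, but only one direction of each inequality survives, since (A2$\ge$) gives a lower bound on the conditional drift. First, I would truncate: set $\tau_B^{(n)} := \min\{\tau_B,n\}$ and expand
\begin{align*}
S_{\tau_B^{(n)}} = \sum_{t=1}^n Y_t \mathbf{1}_{\{\tau_B\ge t\}} .
\end{align*}
Next, I would use $\{\tau_B\ge t\}\in\mathcal{F}_{t-1}$ and condition on $\mathcal{F}_{t-1}$. Together with (A2$\ge$) this gives
\begin{align*}
\mathbb{E}\bigl[S_{\tau_B^{(n)}}\bigr] \ge \sum_{t=1}^n J_t\,\mathbb{P}(\tau_B\ge t) \ge J_{\inf}\,\mathbb{E}\bigl[\tau_B^{(n)}\bigr].
\end{align*}
On the other side, the overshoot argument \eqref{eq:overshoot-inequality} shows $S_{\tau_B^{(n)}} < B+M$ pointwise. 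On $\{\tau_B\le n\}$ this is because $S_{\tau_B-1}<B$ and $Y_{\tau_B}\le M$; on $\{\tau_B>n\}$ it holds because $S_n<B$. Combining the two bounds and applying monotone convergence gives $\mathbb{E}[\tau_B]\le (B+M)/J_{\inf}<\infty$, which is the integrability claim.

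Once $\tau_B$ is integrable, the bound $|Y_t|\le M$ justifies Fubini. Letting $n\to\infty$ (or applying dominated convergence with dominating variable $M\tau_B$) gives
\begin{align*}
B+M > \mathbb{E}[S_{\tau_B}] \ge \sum_{t\ge1} J_t\,\mathbb{P}(\tau_B\ge t).
\end{align*}
Only this upper inequality on $\mathbb{E}[S_{\tau_B}]$ is needed. I would then fix $\varepsilon\in(0,J_\infty)$ and pick $T$ such that $J_t\ge J_\infty-\varepsilon$ for all $t\ge T$. Then
\begin{align*}
\sum_{t\ge1} J_t\,\mathbb{P}(\tau_B\ge t) \ge (J_\infty-\varepsilon)\,\mathbb{E}[\tau_B] - \sum_{t<T}\bigl(J_\infty-\varepsilon-J_t\bigr)^+ .
\end{align*}
The subtracted term is at most $C_\varepsilon := (T-1)J_\infty$, because $J_t\ge J_{\inf}>0$. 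This gives $\mathbb{E}[\tau_B]\le (B+M+C_\varepsilon)/(J_\infty-\varepsilon)$ uniformly in $B$. Dividing by $B$ and letting $B\to\infty$ and then $\varepsilon\downarrow0$ yields $\limsup_{B\to\infty}\mathbb{E}[\tau_B]/B\le 1/J_\infty$. The $\alpha$ form follows by substituting $B_\alpha=\log(1/\alpha)\to\infty$.

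The main obstacle is the bookkeeping in the first step. Without a martingale identity I must make sure the truncated expectation bound uses only the inequality direction provided by (A2$\ge$). I must also make sure the overshoot bound holds at the truncation time $n$, not only at $\tau_B$; this is what keeps $\mathbb{E}[\tau_B^{(n)}]$ bounded independently of $n$. Beyond that, the argument is a one-sided copy of \Cref{thm:dynamic-hitting-time}. The deterministic nature of $J_t$ is not even essential, since only the pointwise lower bound $J_t$ on the conditional drift is used.
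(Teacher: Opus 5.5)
Your proof is correct and is essentially the paper's argument. Your tail sum $\sum_{t\ge 1}J_t\,\mathbb{P}(\tau_B\ge t)$ is exactly $\mathbb{E}[A_{\tau_B}]$ for the paper's compensator $A_t=\sum_{i\le t}J_i$, and your lower bound on it is the expectation of the paper's pathwise inequality $A_t\ge (J_\infty-\varepsilon)t-C_\varepsilon$; the paper merely phrases the first step via the submartingale $S_t-A_t$ and applies monotone convergence to $A_{\tau_B\wedge n}$ instead of passing to $\mathbb{E}[S_{\tau_B}]$. One small caution on your closing remark: the conditional drift need not be deterministic, but the lower bound $J_t$ itself must be, since you use that to write $\mathbb{E}[\mathbf{1}\{\tau_B\ge t\}\,J_t]=J_t\,\mathbb{P}(\tau_B\ge t)$ and to choose a nonrandom $T$.
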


\begin{proof}

Define the deterministic partial sums
\begin{align*}
A_t := \sum_{i=1}^t J_i, \qquad A_0 := 0,
\end{align*}
and the excess process
\begin{align*}
Z_t := S_t - A_t = \sum_{i=1}^t (Y_i - J_i), \qquad Z_0 := 0.
\end{align*}
By assumption (A2$\ge$),
\begin{align*}
\mathbb{E}[Z_t \mid \mathcal{F}_{t-1}]
= Z_{t-1} + \mathbb{E}[Y_t - J_t \mid \mathcal{F}_{t-1}]
\geq Z_{t-1}
\quad \text{a.s.},
\end{align*}
so $(Z_t)$ is a submartingale.

For $n \in \mathbb{N}$ define the bounded stopping time
\begin{align*}
\tau_B^{(n)} := \tau_B \wedge n .
\end{align*}

We claim that for every $n$,
\begin{align*}
S_{\tau_B^{(n)}} \leq B + M \quad \text{a.s.}
\end{align*}
Indeed, on $\{\tau_B \leq n\}$ we have $\tau_B^{(n)} = \tau_B$ and
$S_{\tau_B - 1} < B$ by definition of $\tau_B$, while $Y_{\tau_B} \leq M$ a.s.,
hence
\begin{align*}
S_{\tau_B} = S_{\tau_B - 1} + Y_{\tau_B} < B + M .
\end{align*}
On $\{\tau_B > n\}$ we have $\tau_B^{(n)} = n$ and $S_n < B$, so again
$S_{\tau_B^{(n)}} < B + M$.
This proves the claim.

Next, we show that
\begin{align*}
\mathbb{E}[Z_{\tau_B^{(n)}}] \geq 0,
\end{align*}
or equivalently,
$\mathbb{E}[S_{\tau_B^{(n)}}] \geq \mathbb{E}[A_{\tau_B^{(n)}}].$

Since $\tau_B^{(n)} \leq n$,
\begin{align*}
Z_{\tau_B^{(n)}}
&= \sum_{t=1}^{\tau_B^{(n)}} (Y_t - J_t)
= \sum_{t=1}^n \mathbf{1}\{\tau_B^{(n)} \geq t\} (Y_t - J_t).
\end{align*}
For $t \leq n$, $\{\tau_B^{(n)} \geq t\} = \{\tau_B \geq t\} \in \mathcal{F}_{t-1}$,
since $\tau_B$ is a stopping time. Therefore,
\begin{align*}
\mathbb{E}\!\left[\mathbf{1}\{\tau_B^{(n)} \geq t\}(Y_t - J_t)\right]
&= \mathbb{E}\!\left[
\mathbb{E}\!\left[
\mathbf{1}\{\tau_B^{(n)} \geq t\}(Y_t - J_t)
\mid \mathcal{F}_{t-1}
\right]\right] \\
&= \mathbb{E}\!\left[
\mathbf{1}\{\tau_B^{(n)} \geq t\}
\mathbb{E}[Y_t - J_t \mid \mathcal{F}_{t-1}]
\right] \\
&\geq 0 ,
\end{align*}
where the inequality follows from assumption (A2$\ge$).
Summing over $t = 1,\dots,n$ yields $\mathbb{E}[Z_{\tau_B^{(n)}}] \geq 0$.

Thus, we have that
\begin{align*}
\mathbb{E}[A_{\tau_B^{(n)}}]
\leq \mathbb{E}[S_{\tau_B^{(n)}}]
\leq B + M
\quad \text{for all } n .
\end{align*}
Since $J_t \geq J_{\inf} > 0$, the sequence $(A_t)$ is increasing and
$A_t \to \infty$ as $t \to \infty$. Because $\tau_B^{(n)} \uparrow \tau_B$,
the monotone convergence theorem yields
\begin{align*}
\mathbb{E}[A_{\tau_B}]
= \lim_{n \to \infty} \mathbb{E}[A_{\tau_B^{(n)}}]
\leq B + M .
\end{align*} 
Moreover, since
$A_{\tau_B} \geq J_{\inf} \tau_B$,
\begin{align*}
J_{\inf} \mathbb{E}[\tau_B] \leq \mathbb{E}[A_{\tau_B}] \leq B + M ,
\end{align*}
so $\tau_B$ is integrable.

Fix $\varepsilon \in (0, J_\infty)$. Since $J_t \to J_\infty$, there exists
$N = N(\varepsilon)$ such that
\begin{align*}
J_t \geq J_\infty - \varepsilon \quad \text{for all } t \geq N .
\end{align*}
Define the finite constant
\begin{align*}
C_\varepsilon
:= \sup_{0 \leq t \leq N-1} \bigl((J_\infty - \varepsilon)t - A_t\bigr).
\end{align*}
Then for all $t \geq 0$,
\begin{align*}
A_t \geq (J_\infty - \varepsilon)t - C_\varepsilon .
\end{align*}

Applying the bound from in the previous display at the random time $\tau_B$ and taking expectations yields
\begin{align*}
\mathbb{E}[A_{\tau_B}]
\geq (J_\infty - \varepsilon)\mathbb{E}[\tau_B] - C_\varepsilon .
\end{align*}
Combining this with $\mathbb{E}[A_{\tau_B}] \leq B + M$ yields
\begin{align*}
(J_\infty - \varepsilon)\mathbb{E}[\tau_B]
\leq B + M + C_\varepsilon ,
\end{align*}
and therefore
\begin{align*}
\mathbb{E}[\tau_B]
\leq \frac{B + M + C_\varepsilon}{J_\infty - \varepsilon} .
\end{align*}
Dividing by $B$ and letting $B \to \infty$, then letting $\varepsilon \downarrow 0$,
gives
\begin{align*}
\limsup_{B \to \infty} \frac{\mathbb{E}[\tau_B]}{B}
\leq \frac{1}{J_\infty} .
\end{align*}
This completes the proof.
\end{proof}

%% file: claims.tex
\section{Useful Claims}

\begin{lemma}\label{lem:q_concave_vertices}
Let $\cV$ be the set $\{1,\dots,n\}$ and $p_0 \in \Delta(\cV)$ be a distribution over $\cV$ such that $\min_{v \in \cV}p_0(v) > \delta$. 
Define
\begin{align*}
    \cQ(p_0,\delta) = \left\{q \in \Delta(\cV): \|q - p_0\|_1 \leq \delta\right\} 
\end{align*}
Then $\cQ(p_0,\delta)$ is a convex polytope whose vertex set is given by:
$$\cQ_{\ext}(p_0,\delta) := \{p_0 + (\mathbf{e}_i-\mathbf{e}_j)\cdot \delta/2: (i,j) \in \cV\times \cV, i\neq j\}.$$
\end{lemma}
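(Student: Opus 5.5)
We write $\cQ = \cQ(p_0,\delta)$ and reduce everything to the displacement $x = q - p_0$, which ranges over
\[
D = \Big\{x \in \R^n : \sum_v x_v = 0,\ \|x\|_1 \le \delta,\ x_v \ge -p_0(v)\ \forall v\Big\}.
\]
Since $\min_v p_0(v) > \delta$, any $x$ with $\|x\|_1 \le \delta$ automatically satisfies $|x_v| \le \delta < p_0(v)$. So the nonnegativity constraints are slack, and
\[
D = H \cap \{\|x\|_1 \le \delta\}, \qquad H = \{x : \textstyle\sum_v x_v = 0\}.
\]
Thus $\cQ$ is a translate by $p_0$ of a slice of the cross-polytope. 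It is an intersection of finitely many half-spaces with a hyperplane, and it is bounded, so it is a convex polytope. The remaining task is to identify its vertices as the points $\tfrac{\delta}{2}(\mathbf{e}_i - \mathbf{e}_j)$ with $i \ne j$.

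\textbf{Every point is a convex combination of these points.} Take $x \in D$ and split it as $x = x^+ - x^-$ with disjoint supports. The condition $\sum_v x_v = 0$ gives $\|x^+\|_1 = \|x^-\|_1 = t \le \delta/2$.

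If $t = 0$, then $x = 0$. Write it as
\[
0 = \tfrac12\cdot\tfrac{\delta}{2}(\mathbf{e}_1 - \mathbf{e}_2) + \tfrac12\cdot\tfrac{\delta}{2}(\mathbf{e}_2 - \mathbf{e}_1),
\]
which needs $n \ge 2$.

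If $t > 0$, use the product coupling
\[
x = \sum_{i,j} \frac{x^+_i\, x^-_j}{t}\,(\mathbf{e}_i - \mathbf{e}_j).
\]
Here $i \ne j$ whenever the coefficient is nonzero, because the supports are disjoint. The coefficients sum to $t$. Rescaling gives
\[
x = \frac{2t}{\delta}\sum_{i,j} \frac{x^+_i x^-_j}{t^2}\cdot \frac{\delta}{2}(\mathbf{e}_i - \mathbf{e}_j),
\]
which expresses $x$ as $\tfrac{2t}{\delta}$ times a convex combination of the candidate points. This factor is at most $1$. To absorb the remaining weight $1 - 2t/\delta$, write $0$ as the average of $\tfrac{\delta}{2}(\mathbf{e}_1 - \mathbf{e}_2)$ and its negative, as above. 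Therefore $D = \mathrm{conv}\{\tfrac{\delta}{2}(\mathbf{e}_i - \mathbf{e}_j) : i \ne j\}$, and every vertex of $D$ lies in this set.

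\textbf{Each candidate point is extreme.} Fix $i \ne j$ and suppose
\[
\tfrac{\delta}{2}(\mathbf{e}_i - \mathbf{e}_j) = \lambda y + (1-\lambda) z, \qquad y, z \in D,\ \lambda \in (0,1).
\]
Use the linear functional $\ell(x) = x_i - x_j$. For any $x \in D$,
\[
\ell(x) \le |x_i| + |x_j| \le \|x\|_1 \le \delta,
\]
and the candidate point attains $\ell = \delta$. Hence $\ell(y) = \ell(z) = \delta$, which forces equality throughout for both $y$ and $z$: $\|y\|_1 = \delta$, the mass of $y$ is supported on $\{i, j\}$, $y_i \ge 0$, and $y_j \le 0$. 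Combined with $y_i + y_j = 0$, this gives $y = \tfrac{\delta}{2}(\mathbf{e}_i - \mathbf{e}_j)$, and the same argument gives $z = \tfrac{\delta}{2}(\mathbf{e}_i - \mathbf{e}_j)$. So the point is extreme.

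Translating back by $p_0$ yields the vertex set $\cQ_{\ext}(p_0,\delta)$. The only delicate points are the two used above: the assumption $\min_v p_0(v) > \delta$ is what removes the simplex constraints, and the degenerate case $t < \delta/2$ (including $x = 0$) requires padding with a symmetric pair of candidate points.
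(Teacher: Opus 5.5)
Your proof is correct and follows the same route as the paper's. You couple the positive and negative parts of $q-p_0$ into moves $\frac{\delta}{2}(\mathbf{e}_i-\mathbf{e}_j)$ to get $\cQ(p_0,\delta)=\mathrm{conv}(\cQ_{\ext}(p_0,\delta))$, then show each candidate is extreme by forcing equality in a bound; your functional $x_i-x_j\le\|x\|_1\le\delta$ plays the role of the paper's coordinatewise bound $|\eta_k|\le 1$.

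Where you differ, it is in your favor. Your explicit product coupling replaces the paper's appeal to Hoffman's circulation theorem. Your padding with $\pm\frac{\delta}{2}(\mathbf{e}_1-\mathbf{e}_2)$ also covers interior points with $\|q-p_0\|_1<\delta$. There the paper's weights $\lambda_{ij}$ sum only to $\|q-p_0\|_1/\delta<1$, so they do not form the convex combination the paper claims.
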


\begin{proof}

Recall $\cQ(p_0,\delta) := \{q \in \Delta^n: \|q - p_0\|_{\ell_1} \leq \delta\}$. First, we note that $\cQ(p_0,\delta)$ is the intersection of two convex polytopes, hence it must also be a convex polytope. We claim that $\cQ(p_0,\delta) = \mathrm{Conv}(\cQ_{\ext}(p_0,\delta))$. Suppose $q \in \cQ(p_0,\delta)$, then we can write that 
$q_i - p_i  = s_i$
for $s_i \in [-\delta/2,\delta/2]$, $\sum s_i = 0$, and $\sum \lvert s_i\rvert \leq \delta$. Next, suppose for contradiction that $s_j > \delta/2$, then $\sum_{i \neq j}s_i = -s_j$ and hence,
$$\sum_{k}\lvert s_k\rvert \geq \lvert s_j\rvert + \lvert\sum_{i \neq j}s_i\rvert > \delta,$$
which violates the TV constraint. 

Now we show: for any $q \in \cQ(p_0,\delta)$,
there exist nonnegative weights $\lambda_{ij}$ summing to $1$ such that
\begin{align*}
    q = \sum_{i \neq j} \lambda_{ij} \left( p_0 + \frac{\delta}{2}(\mathbf{e}_i - \mathbf{e}_j) \right).
\end{align*}

We construct the decomposition as follows:

\begin{itemize}
    \item Let $P = \{i : s_i > 0\},\, N = \{j : s_j < 0\}$.
    \item Necessarily $\sum_{i \in P} s_i = - \sum_{j \in N} s_j 
    = \frac{1}{2} \sum_k |s_k| \le \delta/2$.
    \item Define nonnegative coefficients $\alpha_{ij}$ for $i \in P,\, j \in N$
    such that
    \begin{align*}
        \sum_{j \in N} \alpha_{ij} = \frac{s_i}{\delta/2}, 
        \qquad
        \sum_{i \in P} \alpha_{ij} = \frac{-s_j}{\delta/2}.
    \end{align*}
\end{itemize}

The existence follows from Hoffman’s circulation theorem.

\begin{itemize}
    \item Then set $\lambda_{ij} = \alpha_{ij}$. Summing,
    \begin{align*}
        p_0 + \frac{\delta}{2} \sum_{i \neq j} \lambda_{ij}(\mathbf{e}_i - \mathbf{e}_j)
        = p_0 + s = q.
    \end{align*}
\end{itemize}

This shows $q$ lies in the convex hull of the $v_{ij}$.

It is now enough to show that any $v \in \cQ_{\ext}(p_0,\delta)$ cannot be generated by a convex combination of two distinct points in $\cQ(p_0,\delta)$ which will prove that $v \in \cQ_{\ext}(p_0,\delta)$ is a vertex of $\cQ(p_0,\delta)$ and hence, $\cQ(p_0,\delta)$ is generated by the convex hull of $\cQ_{\ext}(p_0,\delta)$. Suppose for contradiction that this is the case. Then there exists $q,q'\in\cQ(p_0,\delta)$ and $\lambda \in (0,1)$ such that 
$$p_0 + (\mathbf{e}_i-\mathbf{e}_j)\frac{\delta}{2} = \lambda q + (1-\lambda)q'.$$
Let $q = p + \eta\frac{\delta}{2}$ and $q' = p + \eta'\frac{\delta}{2}$ where $\eta \neq \eta'$ then we have that 
$$\mathbf{e}_i - \mathbf{e}_j = \lambda \eta + (1-\lambda)\eta'.$$
Then we must have that 
$$\lambda \eta_i + (1-\lambda)\eta'_i = 1, \quad \lambda \eta_{j}+(1-\lambda)\eta'_j = -1,$$
and
$$\lambda \eta_{k} + (1-\lambda)\eta'_k  = 0.$$
But for either of the first two conditions to be true, we must have that $\eta_i = \eta_i' = 1$ and $\eta_j = \eta_j' = -1$ Since all elements $\eta_k$ and $\eta_k'$ are bounded in magnitude by $1$. Furthermore, because $\lvert \eta_i\rvert + \lvert \eta_j\rvert = 2$, all other $\eta_k = 0$ (same for $\eta'$). Thus, $\eta = \eta'$ which is the desired contradiction.
\end{proof}

%% file: add_experiments.tex
\section{Additional Experiments}
\label{sec:add_exp}
This section summarizes the complete experiment results on the \textsc{MarkMyWords} benchmark~\citep{piet2023mark} under three temperature settings ($0.3, 0.7, 1.0$). For the generation scheme, we follow the hyperparameter configurations in \citet{huang2025watermarking} with $K=20, |\Omega_h| = 2, \delta = 0.3$. Table~\ref{tab:comparison-quality} reports generation quality across watermarking schemes. Overall, quality is relatively stable across schemes, and our method achieves quality that is comparable to (and in some cases close to the best among) the baselines at each temperature. Table~\ref{tab:comparison-size} reports detection size, measuring the average number of tokens needed to detect the watermark under a range of perturbations, where lower is better. Here, our scheme consistently attains the smallest size at every temperature, substantially outperforming prior methods. This indicates that our detector can reliably identify watermarked text using fewer tokens. Together, these results confirm that our e-value-based scheme improves detection efficiency without sacrificing generation quality.

\begin{table}[h]
    \centering
    \begin{NiceTabular}{c|cccccc}
        \toprule
        {\bf Temp.} & {\bf Exponential} & {\bf Inverse Transform} & {\bf Binary} & {\bf Distribution Shift} & {\bf SEAL} & {\bf \Cref{thm:log-growth}} \\
        \midrule
        0.3 & 0.906 & {\bf  0.910} & 0.905 & 0.900 & 0.905 & { 0.909} \\
        0.7 & 0.907 & 0.917 & {\bf 0.919} & 0.912 & 0.901 & {\bf 0.919} \\
        1.0 & 0.898 & {\bf  0.917 } & 0.905 & 0.907 & 0.871 & { 0.902} \\
        \bottomrule
    \end{NiceTabular}
    \caption{{\fontfamily{ptm}\selectfont Quality ($\uparrow$) across different temperature settings. Best (per temperature) is highlighted in bold.}}
    \label{tab:comparison-quality}
\end{table}

\begin{table}[h]
    \centering
    \begin{NiceTabular}{c|cccccc}
        \toprule
        {\bf Temp.} & {\bf Exponential} & {\bf Inverse Transform} & {\bf Binary} & {\bf Distribution Shift} & {\bf SEAL} & {\bf \Cref{thm:log-growth}} \\
        \midrule
        0.3 & $\infty$ & $\infty$ & $\infty$ & 112.5 & 106.0 & {\bf 97.0} \\
        0.7 & $\infty$ & 734.0 & $\infty$ & 145.0 & 84.5 & {\bf 72.0} \\
        1.0 & 240.5 & 163.5 & $\infty$ & 317.0 & 133.0 & {\bf 97.5} \\
        \bottomrule
    \end{NiceTabular}
    \caption{{\fontfamily{ptm}\selectfont Size ($\downarrow$) across different temperature settings. Best (per temperature) is highlighted in bold.}}
    \label{tab:comparison-size}
\end{table}